\DeclareSymbolFontAlphabet{\mathbb}{AMSb}
\DeclareSymbolFontAlphabet{\mathbbl}{bbold}
\newlength{\oldparindent}
\newcounter{algoline}
\renewcommand{\coprod}{\mathbin{\rotatebox[origin=c]{180}{$\prod$}}}
\newcommand{\rr}{{\mathbb{R}}}
\newcommand{\nn}{{\mathbb{N}}}
\newcommand{\fff}{{\mathcal{F}}}
\newcommand{\rrflex}[1]{{\ensuremath{\rr^{#1}
}}}
\newcommand{\rrD}{{\rrflex{D}}}
\newcommand{\rrd}{{\rrflex{d}}}
\newcommand{\rrn}{{\rrflex{n}}}
\NewDocumentCommand\argmin{o}{{\operatorname{argmin}\IfValueT{#1}{_{{#1}}}}}
\NewDocumentCommand\AF{o}{\operatorname{AF}\IfValueT{#1}{
		{
			\left({#1}\right)
		}
}}
\NewDocumentCommand\NNtrunc{o}{{
		\left \lceil{
			\mathcal{NN}^{\sigma}
			\IfValueT{#1}{_{#1}}
		}\right \rceil 
}}
\NewDocumentCommand\NNshal{oo}{
	{nn%
		_{\IfValueT{#1}{#1}}	
		\IfValueF{#2}{^{\fff}}\IfValueT{#2}{^{#2}}
	}
}
\NewDocumentCommand\NNho{oo}{
	{\mathcal{HNN}%
		_{R\IfValueT{#1}{#1}}	
		\IfValueF{#2}{^{\fff}}\IfValueT{#2}{^{#2}}
	}
}
\NewDocumentCommand\NNaff{oo}{
	{\mathcal{NN}%
		_{R,\IfValueT{#1}{#1}}	
		\IfValueF{#2}{^{a}}
	}
}
\NewDocumentCommand{\prodd}{oo}{
	\overset{{#2}}{
		\underset{{#1}}{
			\circlearrowleft
		}
	}
}
\NewDocumentCommand{\NN}{oo}{
	{\mathcal{NN}\IfValueT{#2}{^{#2}}\IfValueF{#2}{^{\sigma}}
		\IfValueF{#1}{_{d,D}}
		\IfValueT{#1}{_{{{{#1}}}}}
	}
}
\NewDocumentCommand{\intt}{o}{{\operatorname{int}
		\IfValueT{#1}{\left({#1}\right)}
}}
\NewDocumentCommand\rsupp{mo}{{\operatorname{supp}
		\IfValueT{#1}{\left({#1}
			\middle|
			\IfValueT{#2}{{{#2}}}\IfValueF{#2}{
				{
					K_{\cdot}
				}
			}
			\right)}
}}
\newtheorem{problem}{Problem}[section]
\newtheorem{defn}{Definition}[section]
\newtheorem{ass}[defn]{Assumption}
\newtheorem{prop}[defn]{Proposition}%
\newtheorem{cor}[defn]{Corollary}%
\newtheorem{lem}[defn]{Lemma}%
\newtheorem{ex}[defn]{Example}%
\newtheorem{thrm}[defn]{Theorem}%
\newtheorem{rremark}[defn]{Remark}%
\newtheorem*{ass*}{Assumption}
\newtheorem*{thrm*}{Theorem}
\newtheorem*{cor*}{Corollary}
\newtheorem*{prop*}{Proposition}
\NewDocumentCommand{\loc}{mo}{
	{
		\left\lceil\left \lceil{{#1}}\right \rceil\right \rceil_{\IfValueF{#2}{\epsilon}\IfValueT{#2}{{#2}}} 
	}
}
\NewDocumentCommand{\tope}{mo}{
	{
	{#1}\mbox{-}\operatorname{tope}
	\IfValueT{#2}{{_{#2}}}
	}
}
\NewDocumentCommand{\locFFNs}{oo}{{
		\loc{
			\mathcal{NN}^{\sigma
			}_{d,D,\IfValueF{#2}{k}\IfValueT{#2}{{#2}}
			}	
		}[\IfValueF{#1}{\epsilon}\IfValueT{#1}{{#1}}]
}}
\NewDocumentCommand{\BLNsNN}{oo}{{
		\tope{
			\mathcal{NN}^{\sigma
			}_{d,D\IfValueT{#2}{,{#2}}
			}	
		}[\IfValueF{#1}{\epsilon}\IfValueT{#1}{{#1}}]
}}
\newcommand{\colim@}[2]{%
  \vtop{\m@th\ialign{##\cr
    \hfil$#1\operator@font colim$\hfil\cr
    \noalign{\nointerlineskip\kern1.5\ex@}#2\cr
    \noalign{\nointerlineskip\kern-\ex@}\cr}}%
}
\renewcommand{\varinjlim}{%
  \mathop{\mathpalette\varlim@{\rightarrowfill@\scriptscriptstyle}}\nmlimits@
}
\title{A Canonical Transform for Strengthening the Local $L^p$-Type Universal Approximation Property}
\author{Anastasis Kratsios
	\thanks{
		Department of Mathematics, Eidgen\"{o}ssische Technische Hochschule Z\"{u}rich, HG G 32.3, R\"{a}mistrasse 101, 8092 Z\"{u}rich.  
		email: \textit{anastasis.kratsios@math.ethz.ch}
	}%
	\and
	Behnoosh Zamanlooy
	\thanks{
		Department of Informatics, Computation, and Economics, University of Z\"{u}rich, Binzm\"{u}hlestrasse 14, 8050 Z\"{u}rich.
				email: \textit{bzamanlooy@ifi.uzh.ch}
	}%
}
\begin{document}






\maketitle

\begin{abstract}
Most $L^p$-type universal approximation theorems guarantee that a given machine learning model class $\mathscr{F}\subseteq C(\mathbb{R}^d,\mathbb{R}^D)$ is dense in $L^p_{\mu}(\mathbb{R}^d,\mathbb{R}^D)$ for any suitable finite Borel measure $\mu$ on $\mathbb{R}^d$.  Unfortunately, this means that the model's approximation quality can rapidly degenerate outside some compact subset of $\mathbb{R}^d$, as any such measure is largely concentrated on some bounded subset of $\mathbb{R}^d$.  This paper proposes a generic solution to this approximation theoretic problem by introducing a canonical transformation which "upgrades $\mathscr{F}$'s approximation property" in the following sense.  The transformed model class, denoted by $\mathscr{F}\text{-tope}$, is shown to be dense in $L^p_{\mu,\text{strict}}(\mathbb{R}^d,\mathbb{R}^D)$ which is a topological space whose elements are locally $p$-integrable functions and whose topology is much finer than usual norm topology on $L^p_{\mu}(\mathbb{R}^d,\mathbb{R}^D)$; here $\mu$ is any suitable $\sigma$-finite Borel measure $\mu$ on $\mathbb{R}^d$.  Next, we show that if $\mathscr{F}$ is any family of analytic functions then there is always a strict "gap" between $\mathscr{F}\text{-tope}$'s expressibility and that of $\mathscr{F}$, since we find that $\mathscr{F}$ can never dense in $L^p_{\mu,\text{strict}}(\mathbb{R}^d,\mathbb{R}^D)$.  In the general case, where $\mathscr{F}$ may contain non-analytic functions, we provide an abstract form of these results guaranteeing that there always exists some function space in which $\mathscr{F}\text{-tope}$ is dense but $\mathscr{F}$ is not, while, the converse is never possible.  Applications to feedforward networks, convolutional neural networks, and polynomial bases are explored.  
\end{abstract}



\noindent
{\itshape Keywords:} Universal Approximation, UAP Upgrading, Strict Topologies, Colimits of Topological Spaces.

\noindent
\let\thefootnote\relax\footnotetext{This research was supported by the ETH Z\"{u}rich Foundation.}

\vspace{-2.5em}
\section{Introduction}\label{s_intro}
Since its first conception in \cite{mcculloch1943logical}, digital computing has led to numerous advances in various scientific areas, 
from computer vision and neuroimaging in~\cite{simonyan2014very,moore2019using}, to signal processing in~\cite{lapedes1987nonlinear}, and in an exponentially growing number of areas where complex problems are finding tractable solutions.  
From the theoretical vantage point, the success of these methods lies in the expressibility of neural networks, which was first demonstrated by \cite{Cybenko}, \cite{hornik1990universal}, and \cite{pinkus1999}.  Their results guaranteed that feedforward neural networks with non-polynomial activation function can approximate any continuous function on any given compact subset of the input space.  In the terminology of \cite{kratsios2019UATs}, their results imply the following "$L^p$-type" universal approximation property (UAP).  
%
%
%
\vspace{-.5em}
\begin{defn}[{Locally $L^p$-Universal Approximation Property}]\label{loc_univ_approx}
A (model class) $\fff\subseteq C(\rrd,\rrD)$ is said to have the local $L^p$-UAP if for every finite Borel measure $\nu$ on $\rrd$ which is absolutely continuous with respect to the Lebesgue measure thereon, every $\delta>0$, and every $f\in L^p_{\nu}(\rrd,\rrD)$ there is an $f_{\delta}\in \fff$ satisfying:
\begin{equation}
    \int_{x \in \rrd} \|f(x)-f_{\delta}(x)\|^p d\nu(x) <\delta
.
\label{eq_intro_motive_no_finite}
\end{equation}
\end{defn}

The shortcoming of~\eqref{eq_intro_motive_no_finite} is reflected in the requirement that $\nu$ must be a finite measure in \cite{hornik1991approximation} and compactly-supported in \cite{kidger2020universal}.  The limitation of being finite stems from the fact that any finite measure must be largely concentrated on some bounded set, outside of which its mass rapidly vanishes.   
\begin{defn}[{Global $L^p$-Universal Approximation Property}]\label{defn_global_lp_UAP}
A (model class) $\fff\subseteq C(\rrd,\rrD)$ has the Global $L^p$-UAP if: for every $\sigma$-finite Borel measure $\mu$ on $\rrd$ which is absolutely continuous with respect to the Lebesgue measure thereon, every $\delta>0$, and every $f \in L^p_{\mu}(\rrd,\rrD)$ there is an $f_{\delta}\in \fff$ satisfying:
$$
\int_{x \in \rrd}\|f(x)-f_{\delta}(x)\|^p d\mu(x)<\delta
.
$$
\end{defn}
In \cite{lu2017expressive}, \cite{park2020minimum}, and in \cite{kidger2020universal} the measure $\nu$ in~\eqref{eq_intro_motive_no_finite} can be replaced with the Lebesgue measure if $\sigma$ is the ReLU activation function; i.e. $\sigma=\max\{0,\cdot\}$.  
However, this stronger version of the local $L^p$-UAP is not available for most feedforward networks with the local $L^p$-UAP.  Thus, the approximation quality of a learning model $\fff\subseteq C(\rrd,\rrD)$ may rapidly degrade outside some compact set, if $\fff$ only has the local $L^p$-UAP but not the global $L^p$-UAP.

\noindent The objective of this paper is to provide a solution to this problem. 
\begin{problem}[UAP-Upgrading]\label{prob_UAP_Upgrading}
If a model class $\fff\subseteq C(\rrd,\rrD)$ has the local $L^p$-UAP then how can we canonically build a model class with the global $L^p$-UAP?
\end{problem}

Our approach is designed to be \textit{modular}, in that our results apply to any model class with the local $L^p$-UAP.  Our results apply to: polynomial bases,  kernel regressors with universal kernels (see \cite{MicchelliUniversalKernels}), feedforward neural networks (see \cite{hornik1990universal}), sparse convolutional neural networks (see \cite{UniversalDeepConv}), the NEU-OLS and NEU-DNN models of \cite{kratsios2021neu} and many other learning models.    


Our solution is summarized as follows.  We begin by observing that~\eqref{eq_intro_motive_no_finite} can be leveraged by specializing a subset of learning models $\{f_i\}_{i=1}^n\subseteq \fff$, where each $f_i$ is specialized on a distinct compact subset $K_n$ of the input space $\rrd$.  We then combine each of these "sub-models" into a single learning ensemble model:
\begin{equation}
 \hat{f}\triangleq
 \sum_{i=1}^n \beta_i f_iI_{K_i} + \beta_0 f_0I_{\rrd-\bigcup_{i=1}^n K_i}
,
\label{eq_f_tope_intro}
\end{equation}
where $\beta_1,\dots,\beta_n \in \rr$.  In analogy with polytopes, which are built by cutting and pasting simpler affine sets together, we call any locally-integrable function $\hat{f}$ with representation~\eqref{eq_f_tope_intro} an $\fff$-architope.  The set of all $\fff$-architopes is denoted by $\tope{\fff}$.  We call $\tope{\fff}$ the \textit{architope upgrade of $\fff$} and we will show that it solves the above problem.  

\subsection*{Notation}
We use $\nn_+$ to denote the set of positive integers.  We denote the space of $p$-integrable functions  by $L^p_{\mu}(\rrd,\rrD)$, and the space of local $p$-integrable functions denoted by $L^p_{\mu,\operatorname{loc}}(\rrd,\rrD)$ from $\rrd$ to $\rrD$.  We also make regular use of the \textit{essential support} of a locally-integrable function, which we define now before moving on.  Let $f \in L^p_{\mu,\operatorname{loc}}(\rrd,\rrD)$, the essential support of $\|f\|$ is defined by 
$
\smash{
\operatorname{ess-supp}(\|f\|)
\triangleq
\rrd -
\bigcup
\left\{
U \subseteq \rrd 
: 
\, U  \mbox{ open and } \|f\|(x)=0 \mbox{ $\mu$-a.e. } x \in U
\right\}.
}
$

\subsection*{Contributions}\label{s_intro_contributions}
Our first main result (Theorem~\ref{thrm_bagged_localization_LP}) shows that the architope upgrade of any dense subset $\fff$ of $C(\rrd;\rrD)$ induces a dense subset of a refinement of the topology on $L^p_{\mu,loc}(\rrd,\rrD)$ (for $1\leq p<\infty$), as well as refining norm topology on the subset $L^p_{\mu}(\rrd,\rrD)$ thereof.  We call the analogous universal approximation property the \textit{strict $L^p$-UAP} (for reasons which will become clearer shortly).  Consequentially, (in Corollary~\ref{cor_upgraded_UAT}) this universal approximation theorem and the relationship between the aforementioned topological spaces implies that $\tope{\fff}$ has the \textit{global $L^p$-UAP} whenever $\fff$ has the \textit{local $L^p$-UAP}.  

Our second main result (Theorem~\ref{thrm_non_stone_weirestrass}) gives conditions under which $\fff$ has the local $L^p$-UAP but not the strict $L^p$-UAP.  Note that, by Theorem~\ref{thrm_bagged_localization_LP} $\tope{\fff}$ must have both the local and strict $L^p$-UAPs.

Our last main result, (Theorem~\ref{thrm_Gain}) in an abstract form of the above two results.  It shows that the architope upgrade strictly increases $\fff$'s expressiveness, in the sense that $\tope{\fff}$ always dense in any "function space" in which $\fff$ is dense, but the converse fails.  
%
%
%

\subsection*{Organization of Paper}\label{s_intro_ss_paper_organization}
This paper is organized as follows. Section~\ref{s_prelim} introduces architopes and covers and discusses the involved function spaces as well as their properties.  Section~\ref{s_Main} contains the paper's main results.  Various concrete examples of the architope upgrade applied to popular learning models are explored thereafter in Section~\ref{sss_applications}.  
The paper then concludes in~\ref{s_Conclusion}.  All proofs and additional background material is relegated to the paper's appendix.

\section{Preliminaries}\label{s_prelim}
Throughout the paper, $\fff\subseteq C(\rrd;\rrD)$ represents a non-empty class of functions that have the local $L^p$-UAP .  For example, $\fff$ may denote the set of feedforward networks of depth $1$ as in \cite{hornik1990universal,kidger2020universal}, arbitrary depth ReLU networks as in \cite{lu2017expressive}, deep sparse convolutional networks as in \cite{UniversalDeepConv}, etc.  
We also fix a measure $\mu$ on $\rrd$ satisfying the following.
\begin{ass}[Regularity of the Reference Measure]\label{ass_ref_mes_regularity}
$\mu$ is a $\sigma$-finite Borel measure on $\rrd$ which is absolutely continuous with respect to the $d$-dimensional Lebesgue measure.  
\end{ass}

\begin{defn}[Partition]\label{defn_parition}
Fix a Borel measure $\mu$ on $\rrd$.  A \textit{partition} $\{K_n\}_{n \in \nn^+}$ of the input space $\rrd$, means a collection of compact subsets of $\rrd$ satisfying $\mu\left(\rrd-\bigcup_{n \in \nn^+} K_n\right) =0$ and 
$\mu(K_n\cap K_m)= 0$ 
 for $n\neq m$, $n,m \in \nn^+$.  
\end{defn}
As we will see shortly (Proposition~\ref{prop_choice_free_construction}), our construction and approximation-theoretic results are independent of our choice of a partition.  We nevertheless note that, in practice, partitions can be given exogenously, learned algorithmically (e.g. via a randomized procedure as in \cite{NaorLee2005RandomPartition} or via a semi-supervised procedure \cite{Zamanlooykratsios2021learning}), or if not such method is available then one can simply take the following.
\begin{ex}\label{ex_simple_regular_partition}
Let $\mu$ denote the Lebesgue measure on $\rrd$.  Then, $\{K_n:=[-n-1,n+1]^d-[n,n]^d\}_{n\in \nn_+}$ is partition.
\end{ex}
Until otherwise specified, we fix a partition on $\rrd$.  For every $n \in \nn^+$, let $I_n$ be the indicator function of $K_n$ and $I_{n}^{ +}$ the indicator of the set $\rrd-\bigcup_{i =1}^n K_i$.  The architope of $\fff$ ($\tope{\fff}$) is defined as follows.  
\begin{defn}[The Architope Upgrade]\label{defn_bagged_loc}
	Let $\fff$ be a non-empty subset of $C(\rrd;\rrD)$.  The architope upgrade of $\fff$, denoted by $\tope{\fff}$, is the collection of all ($\mu$-a.e. equivalence classes of) functions $f:\rrd\rightarrow \rrD$ of the form
	\begin{equation}
	f(x) 
	= 
	\sum_{i=1}^{n} \beta_i I^{}_i (x) f_i(x) 
	+ 
	\beta_0f_0(x)I_n^{+}(x)
	\qquad (\forall x \in \rrd)
	,
	\label{eq_defn_bagged_loc}
	\end{equation}
	where $n\in \nn^+$, $f_0,\dots,f_{n} \in \fff$, $\beta_0,\dots,\beta_n \in \rr$, and at-least one of the $\beta_i$ is non-zero.  Any $f\in \tope{\fff}$ is called an ($\fff$)-architope.
\end{defn} 
\begin{rremark}
We follow the usual convention that for any measure $\mu$ on $\rrd$, functions from $\rrd$ to $\rrD$ are identified with their equivalence class of $\mu$-measurable functions which agree on all sets of positive $\mu$-measure.  
\end{rremark}
\begin{rremark}[The Architope Upgrade is Canonical]\label{remark_modificaiton}
The architope upgrade is canonical in the sense that, by construction if $\mathcal{G}\subseteq \fff$ then $\tope{\mathcal{G}}\subseteq \tope{\fff}$.  Thus $\tope{\cdot}$ defines functor on the (poset) category whose objects are (non-empty) subsets of $C(\rrd,\rrD)$ and whose morphisms are inclusions to the (poset) category whose objects are subsets of $L^p_{\mu,\text{loc}}(\rrd,\rrD)$ and whose morphisms are inclusions.
\end{rremark}
Throughout this paper, we assume that no $K_i$ is redundant or of excessively large measure.
\begin{ass}\label{ass_growth_condition_sequence}
	For each $n \in \nn^+$, the set $K_n$ satisfies $
	0< \mu\left(K_n\right)< \infty.
	$
\end{ass}
\subsection{{Strict $L^p$-Universality}}\label{ss_Composit_space}

In the following, we obtain a stronger conclusion to our "UAP upgrading problem".  For this, we introduce an even stronger universal approximation property than the global $L^p$-UAP, which we call the \textit{strict $L^p$-UAP}, and we show that if $\fff$ has the \textit{local $L^p$-UAP} then $\tope{\fff}$ has the \textit{strict $L^p$-UAP} and in particular the \textit{global $L^p$-UAP}.  

For this, we introduce a new finer topology on the set of locally $p$-integrable function $L^p_{\mu}(\rrd,\rrD)$.  The construction of this topology, and the examination of its elementary properties is the main focus of this preliminary section.  We illustrate the strength of this UAP, with our last motivational result, by showing that polynomials in $C(\rrd,\rrD)$ fail to have the \textit{strict $L^p$-UAP}.  

The terminology "strict" is in analogy with the strict topologies introduced by \cite{buck1958boundedStrictTopologyCB_OGPaper} on the space of bounded continuous functions, which has seen significant attention over the years; e.g. in \cite{MR146645StrcitTopologyCbOLPapers}, and and in recent times \cite{MR3624573StrictTopCb}, and in \cite{MR4217072StrictTopCb}.  This is because our construction is analogous to theirs but using locally integrable functions.  

\subsubsection{{The Topological Space $L^p_{\mu,\text{strict}}(\rrd,\rrD)$}}
\label{ss_prelim_sss_introfineLp}
We build a new, finer topology on the set $L^p_{\mu,\text{loc}}(\rrd,\rrD)$ in a series of steps.  
First, for each $n \in \nn^+$, we define the $L^p$-space of composite patterns with at-most $n$-sub-patterns, denoted by $L^p_{\mu:n}(\rrd,\rrD)$, to be the completion of normed space, whose elements are $f \in L^p_{\mu,\operatorname{loc}}(\rrd,\rrD)$ with $\operatorname{ess-supp}(\|f\|)\subseteq \cup_{i=1}^n K_i$ and normed by
$$
\|f\|_{p:n}\triangleq \max_{i=1,\dots,n} \left(
\int_{x \in \rrd} \|f(x)I_{K_i}(x)\|^p d\mu(x)
\right)^{\frac1{p}}
.
$$
Next, these spaces are then aggregated into $L^p_{\mu:\infty}(\rrd,\rrD)\triangleq \bigcup_{n \in \nn^+ } L^p_{\mu:n}(\rrd,\rrD)$ and equipped with the \textit{finest} topology making each $L^p_{\mu:n}(\rrd,\rrD)$ into a subspace; denote this topology by $\tau_{\infty}$.  

\begin{rremark}[Other "Strict Topologies" on {$L^p_{\mu:\infty}(\rrd,\rrD)$}]\label{remark_strict_comparison}
If we had imposed that the topology on $L^p_{\mu:\infty}(\rrd,\rrD)$ be the finest topology containing each $L^p_{\mu:n}(\rrd,\rrD)$ as subspaces and for which the resulting space is locally-convex then would arrive at the $L^p$-type "strict topology" developed and studied in \cite{MR2784782L1LCSCategory}.  We emphasize that $\tau_{\infty}$ is no courser that topology and therefore any density result with respect to $\tau_{\infty}$ implies density for the "strict topology" of \cite{MR2784782L1LCSCategory}. 
\end{rremark}

Not every function is essentially compactly supported.  Thus, we glue the space $L^p_{\mu:\infty}(\rrd,\rrD)$ with the usual $L^p$ and local $L^p$ spaces.  This "glued space" is the main topological tool throughout this paper's analysis.  

In the following, the norm topology on $L^p_{\mu}(\rrd,\rrD)$ is denoted by $\tau_{L^p_{\mu}}$.  Likewise, the usual Fr\'{e}chet topology on $L^p_{\mu,\operatorname{loc}}(\rrd,\rrD)$ is denoted by $\tau_{L^p_{\mu,\operatorname{loc}}}$.  
\begin{defn}[Strict $L^p$-Spaces]\label{defn_Composited}
$L^p_{\mu,\text{strict}}(\rrd,\rrD)$ is the topological space whose underlying set is $L^p_{\mu,\operatorname{loc}}(\rrd,\rrD)$ and is equipped with the smallest topology containing 
$%
\tau_{\infty} \cup \tau_{L^p_{\mu}} \cup \tau_{L^p_{\mu,\operatorname{loc}}}.
$ 
\end{defn}
Before proceeding, let us note that the topology on $L^p_{\mu,\text{strict}}(\rrd,\rrD)$ is indeed well-defined.  
\begin{prop}[Strict $L^p$-Spaces are Well-Defined]\label{prop_existence}
Let $\{K_n\}_{n=1}^{\infty}$ satisfy Assumption~\ref{ass_growth_condition_sequence}.  Then, the topology $\tau_{\infty}$ exists and, in particular, the smallest topology on $L^p_{\mu,\operatorname{loc}}(\rrd,\rrD)$ containing the open sets in $\tau_{\infty} \cup \tau_{L^p_{\mu}} \cup \tau_{L^p_{\mu,\operatorname{loc}}}$ exists.  
\end{prop}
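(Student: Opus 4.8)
The plan is to split the statement into its two assertions and to dispatch the second one essentially for free once the first is in hand. For the existence of the smallest topology containing $\tau_{\infty}\cup\tau_{L^p_{\mu}}\cup\tau_{L^p_{\mu,\operatorname{loc}}}$, I would invoke the standard fact that the topologies on a fixed set form a complete lattice: an arbitrary intersection of topologies is again a topology, and the family of topologies containing a prescribed collection of subsets is nonempty (it contains the discrete topology). Since $\tau_\infty$, $\tau_{L^p_\mu}$, and $\tau_{L^p_{\mu,\operatorname{loc}}}$ are all families of subsets of the common underlying set $L^p_{\mu,\operatorname{loc}}(\rrd,\rrD)$ — because both $L^p_{\mu:\infty}(\rrd,\rrD)$ and $L^p_{\mu}(\rrd,\rrD)$ sit inside it — their union is a family of subsets of $L^p_{\mu,\operatorname{loc}}(\rrd,\rrD)$, and the intersection of all topologies containing that union is the desired smallest topology. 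Thus the real content is the existence of $\tau_\infty$ as the \emph{finest} topology making every $L^p_{\mu:n}(\rrd,\rrD)$ a subspace of $L^p_{\mu:\infty}(\rrd,\rrD)$.

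For $\tau_\infty$ itself, I would realize it as the final (colimit) topology of the increasing sequence $L^p_{\mu:1}(\rrd,\rrD)\subseteq L^p_{\mu:2}(\rrd,\rrD)\subseteq\cdots$, namely $\tau_\infty=\{U:\,U\cap L^p_{\mu:n}(\rrd,\rrD)\text{ is open for every }n\}$, which always exists and is the finest topology rendering every inclusion continuous. The key preliminary step is to check that this is a \emph{strict} inductive system of Banach spaces. First I would identify the normed space underlying $L^p_{\mu:n}(\rrd,\rrD)$ with the finite direct sum $\bigoplus_{i=1}^n L^p_{\mu}(K_i,\rrD)$ under the max-norm, via $f\mapsto (fI_{K_i})_{i=1}^n$; Assumption~\ref{ass_growth_condition_sequence} ($0<\mu(K_i)<\infty$) guarantees that each summand is a genuine Banach space, and the essential disjointness $\mu(K_i\cap K_j)=0$ of Definition~\ref{defn_parition} makes this map an isometric bijection up to $\mu$-null sets, so that the space being completed is already a Banach space realized inside $L^p_{\mu,\operatorname{loc}}(\rrd,\rrD)$.

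Next I would verify the two structural properties that make the system strict. The inclusion $L^p_{\mu:n}(\rrd,\rrD)\hookrightarrow L^p_{\mu:n+1}(\rrd,\rrD)$ is isometric: for $f$ essentially supported in $\bigcup_{i=1}^n K_i$ one has $fI_{K_{n+1}}=0$ $\mu$-a.e. (again because $\mu(K_{n+1}\cap\bigcup_{i\le n}K_i)=0$), so the extra coordinate contributes nothing and $\|f\|_{p:n+1}=\|f\|_{p:n}$; hence the given norm topology on $L^p_{\mu:n}(\rrd,\rrD)$ coincides with the subspace topology inherited from $L^p_{\mu:n+1}(\rrd,\rrD)$. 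Moreover, under the direct-sum picture $L^p_{\mu:n}(\rrd,\rrD)$ is exactly the closed hyperplane $\{g:\,g_{n+1}=0\}$ of $L^p_{\mu:n+1}(\rrd,\rrD)$, so each stage is \emph{closed} in the next.

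With strictness established, I would finish via the standard extension argument for strict inductive limits, showing that $\tau_\infty$ restricts to the original norm topology $\tau_m$ on each $L^p_{\mu:m}(\rrd,\rrD)$. The inclusion $\tau_\infty|_{L^p_{\mu:m}}\subseteq\tau_m$ is immediate from the definition of the final topology; for the reverse inclusion I would take $V$ open in $L^p_{\mu:m}(\rrd,\rrD)$ and inductively build sets $V_{n+1}$ open in $L^p_{\mu:n+1}(\rrd,\rrD)$ with $V_{n+1}\cap L^p_{\mu:n}(\rrd,\rrD)=V_n$, using that $L^p_{\mu:n}(\rrd,\rrD)$ is a subspace of, and closed in, $L^p_{\mu:n+1}(\rrd,\rrD)$; the union $U=\bigcup_{n\ge m}V_n$ then lies in $\tau_\infty$ and satisfies $U\cap L^p_{\mu:m}(\rrd,\rrD)=V$. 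Since any topology making every $L^p_{\mu:n}(\rrd,\rrD)$ a subspace in particular makes every inclusion continuous, hence is coarser than $\tau_\infty$, and since $\tau_\infty$ itself enjoys the subspace property just verified, $\tau_\infty$ is the finest such topology and exists. I expect the reverse inclusion $\tau_m\subseteq\tau_\infty|_{L^p_{\mu:m}}$ — the extension of open sets across the entire colimit — to be the main obstacle, since this is precisely where closedness of each stage in the next (rather than mere continuity of the inclusions) is indispensable; the measure-theoretic inputs of Definition~\ref{defn_parition} and Assumption~\ref{ass_growth_condition_sequence} serve only to secure this strictness, while the remaining lattice-theoretic and colimit facts are soft.
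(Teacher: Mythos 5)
Your proposal is correct and follows the same overall skeleton as the paper: reduce the second assertion to the lattice-theoretic fact that generated topologies exist, and then establish $\tau_\infty$ as the colimit (final) topology of the increasing chain $L^p_{\mu:1}(\rrd,\rrD)\subseteq L^p_{\mu:2}(\rrd,\rrD)\subseteq\cdots$. Where you diverge is in how the existence of the \emph{finest subspace-preserving} topology is justified: the paper delegates this to Lemma~\ref{lem_details_cocompleteness_in_Top}, which in turn leans on Bourbaki's final-topology lemma and Jarchow's locally convex inductive limits, and in doing so it quietly conflates ``finest topology making the inclusions continuous'' with ``finest topology making each stage a subspace.'' You treat exactly this gap head-on: you verify that the system is a strict inductive system (isometric inclusions because $fI_{K_{n+1}}=0$ $\mu$-a.e.\ for $f$ supported in $\bigcup_{i\le n}K_i$, and each stage closed as the hyperplane $\{g:\,g_{n+1}=0\}$ in the direct-sum picture), and then run the classical extension-of-open-sets argument to show the final topology restricts correctly on each $L^p_{\mu:m}(\rrd,\rrD)$, so that the final topology genuinely is the finest subspace-preserving one. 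This buys a self-contained and arguably more rigorous argument at the cost of more work; your direct-sum identification also reproduces the content of the paper's Lemmas~\ref{lem_extension_by_zero_isomorphism_Ban} and~\ref{lem_extension}. One small caveat: for the plain (not locally convex) final topology, the extension of an open set $V_n\subseteq X_n$ to an open $V_{n+1}\subseteq X_{n+1}$ with $V_{n+1}\cap X_n=V_n$ requires only that $X_n$ carry the subspace topology of $X_{n+1}$ --- closedness of each stage is what makes the \emph{locally convex} inductive limit behave, so your claim that it is indispensable here is an overstatement, though harmless since you prove it anyway.
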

What is perhaps more surprising, is the fact that the topology on $L^p_{\mu,\text{strict}}(\rrd,\rrD)$ does not depend on our choice of subsets $\{K_n\}_{n=1}^{\infty}$ used to define it.  In fact, any choice of subsets satisfying Assumption~\ref{ass_growth_condition_sequence} defines the same topological space, when following the above construction.  
\begin{prop}[{$L^p_{\mu,\text{strict}}(\rrd,\rrD)$ is Independent of the Choice of a Partition $\{K_n\}_{n=1}^{\infty}$}]\label{prop_choice_free_construction}
Let $\{K_n'\}_{n=1}^{\infty}$ be another partition satisfying Assumption~\ref{ass_growth_condition_sequence} and let $L^p_{\mu,\text{strict}}(\rrd,\rrD)'$ be the strict $L^p$ spaced defined by it.  There is a continuous bijection:
$$
\Psi: L^p_{\mu,\text{strict}}(\rrd,\rrD) \rightarrow L^p_{\mu,\text{strict}}(\rrd,\rrD)'
.
$$
Furthermore, the inverse $\Psi^{-1}$ of $\Psi$ is continuous.  
\end{prop}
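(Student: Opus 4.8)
The plan is to observe first that both strict spaces share the single underlying set $L^p_{\mu,\operatorname{loc}}(\rrd,\rrD)$, so the only candidate for $\Psi$ is the set-theoretic identity; bijectivity is then immediate, and the whole statement reduces to showing that the two strict topologies coincide. Writing $\tau_{\mathrm{strict}}$ and $\tau_{\mathrm{strict}}'$ for the topologies built from $\{K_n\}$ and $\{K_n'\}$, continuity of $\Psi$ is the inclusion $\tau_{\mathrm{strict}}'\subseteq\tau_{\mathrm{strict}}$ and continuity of $\Psi^{-1}$ is the reverse inclusion; since the two roles are interchangeable, it suffices to prove one such inclusion for an arbitrary ordered pair of partitions and then swap $\{K_n\}\leftrightarrow\{K_n'\}$. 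Because both topologies are, by construction (Definition~\ref{defn_Composited}), generated by the subbases $\tau_{\infty}\cup\tau_{L^p_{\mu}}\cup\tau_{L^p_{\mu,\operatorname{loc}}}$ and $\tau_{\infty}'\cup\tau_{L^p_{\mu}}\cup\tau_{L^p_{\mu,\operatorname{loc}}}$, whose last two families are manifestly partition-independent, I would reduce the problem to the single assertion that every $\tau_{\infty}'$-open set is $\tau_{\mathrm{strict}}$-open.

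Next, I would invoke the universal property of the final topology $\tau_{\infty}'$: a subset $V\subseteq L^p_{\mu:\infty}'$ is $\tau_{\infty}'$-open precisely when its trace $V\cap L^p_{\mu:n}'$ is open in each Banach level space $L^p_{\mu:n}'$. Thus it is enough to check that each inclusion $L^p_{\mu:n}'\hookrightarrow(L^p_{\mu,\operatorname{loc}},\tau_{\mathrm{strict}})$ is continuous, and to verify this one subbasic family at a time. For the two shared families this is routine: on any finite level the norm $\|\cdot\|_{p:n}'$ is equivalent to the ambient $L^p_{\mu}$-norm (a finite maximum versus a finite sum of the same $p$-th powers, with comparison constants $1$ and $n^{1/p}$), so the inclusion is $L^p_{\mu}$-norm continuous, and $L^p_{\mu}$-convergence dominates local $L^p$-convergence since $\int_C\le\int_{\rrd}$. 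The entire difficulty is therefore concentrated in the preimages of the unprimed colimit family $\tau_{\infty}$.

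The crux — and what I expect to be the main obstacle — is to relate the two filtrations by finite unions $C_n:=\bigcup_{i\le n}K_i$ and $C_n':=\bigcup_{i\le n}K_i'$. Concretely, I would aim to show that they are mutually cofinal modulo $\mu$-null sets: for every $n$ there is a $k$ with $\mu(C_n'\setminus C_k)=0$, and symmetrically with the roles reversed. Granting this interleaving, $L^p_{\mu:n}'$ sits inside some $L^p_{\mu:k}$ as a closed subspace, so for any $\tau_{\infty}$-open $V_0$ the trace $V_0\cap L^p_{\mu:n}'$ equals $(V_0\cap L^p_{\mu:k})\cap L^p_{\mu:n}'$, an intersection of a relatively $L^p$-open set with a closed subspace, hence relatively open; this yields continuity of the inclusion and, through the universal property, the desired containment of topologies. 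Establishing the cofinality is exactly the step where the regularity of the partitions must be used — it asks that each compact building block of one partition be absorbed, up to a $\mu$-null set, into finitely many blocks of the other — and I would derive it from the $\sigma$-finiteness and absolute continuity of $\mu$ (Assumption~\ref{ass_ref_mes_regularity}) together with the growth condition $0<\mu(K_n)<\infty$ (Assumption~\ref{ass_growth_condition_sequence}). This measure-theoretic interleaving of the two filtrations, rather than any topological manipulation, is where I expect the real work to lie; indeed, it is precisely the property that guarantees the colimit level structures cannot distinguish the two partitions once glued to the common norm and local topologies.
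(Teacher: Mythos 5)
Your route differs from the paper's. The paper identifies $L^p_{\mu:\infty}(\rrd,\rrD)$ with the direct limit $\varinjlim_{K\in I}L^p_{\mu}(K)$ over the directed set $I$ of \emph{all} positive-measure compacts (via Lemma~\ref{lem_universal_constructions}, in particular the cofinal-sublimits property), so that partition-independence is read off from a manifestly partition-free object; you instead compare the two given filtrations head-on, using the trace characterization of final topologies together with a mutual interleaving of $\bigcup_{i\le n}K_i$ and $\bigcup_{i\le n}K_i'$ modulo $\mu$-null sets. Your version is more elementary (no directed set $I$, no cofinality lemma), and your handling of the two shared subbasic families and of the norm equivalences at each finite level is correct and matches Lemmas~\ref{lem_extension_by_zero_isomorphism_Ban} and~\ref{lem_extension}. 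One bookkeeping slip: continuity of the inclusions $L^p_{\mu:n}'\hookrightarrow(L^p_{\mu,\operatorname{loc}},\tau_{\mathrm{strict}})$ gives, via the universal property of the final topology, that $\tau_{\mathrm{strict}}$ restricted to $L^p_{\mu:\infty}'$ is \emph{coarser} than $\tau_{\infty}'$, which is the reverse of the containment you want at that point. What your third paragraph actually proves (traces of $\tau_{\infty}$-open sets are relatively open in each $L^p_{\mu:n}'$) yields $\tau_{\infty}\subseteq\tau_{\infty}'$, and the swapped statement then closes the argument; your announced symmetry does cover this, but the reduction in your second paragraph should be restated in that direction.

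The genuine weak point is the interleaving itself. You correctly isolate it as the crux but do not prove it, and the proposed derivation from Assumptions~\ref{ass_ref_mes_regularity} and~\ref{ass_growth_condition_sequence} alone would not go through: a partition in the sense of Definition~\ref{defn_parition} need not be locally finite, so a compact block of one partition need not be absorbed, up to a null set, into finitely many blocks of the other. For instance, with $d=1$ and $\mu$ Lebesgue, take $K_1'=[0,2]$, and let $K_1=[0,1]$, $K_j=[2-2^{2-j},2-2^{1-j}]$ for $j\geq 2$, together with blocks covering $\rr\setminus[0,2]$; every assumption holds, yet $\mu\left(K_1'\setminus\bigcup_{i=1}^{k}K_i\right)\geq 2^{1-k}>0$ for every $k$. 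To be fair, the paper's own proof rests on the analogous unproven assertion --- that every $K\in I$ satisfies $K\leq\bigcup_{i=1}^{N_K}K_i$ for some $N_K$ --- stated without argument, so your proposal is at parity with the published argument on this point; but as written, the step ``derive the interleaving from $\sigma$-finiteness, absolute continuity, and $0<\mu(K_n)<\infty$'' is a step that would fail without an additional local-finiteness-type hypothesis on the partitions.
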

\subsubsection{{Properties of $L^p_{\mu,\text{strict}}(\rrd,\rrD)$}}\label{ss_prelim_sss_Lpfine_prop}
We understand what it means to be dense and to converge in $L^p_{\mu,\text{strict}}(\rrd,\rrD)$ by examining the elementary properties of this space, which we do in a series of propositions.  

We find that the space $L^p_{\mu,\text{strict}}(\rrd,\rrD)$ is \textit{Hausdorff}; which means that any singleton in $L^p_{\mu,\text{strict}}(\rrd,\rrD)$ is closed.  From the approximation-theoretic lens, this means that the sequences in $L^p_{\mu,\text{strict}}(\rrd,\rrD)$ have a \textit{unique limit}; i.e., no convergent sequence of models can simultaneously approximate two distinct elements of $L^p_{\mu,\text{strict}}(\rrd,\rrD)$ (see \citep[Chapter III.1]{NagataTopologyGen} for details on Hausdorff topological spaces).  
\begin{prop}\label{prop_Hausdorff}
The space $L^p_{\mu,\text{strict}}(\rrd,\rrD)$ is Hausdorff.  
\end{prop}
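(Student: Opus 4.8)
The plan is to exploit the elementary topological fact that any topology finer than a Hausdorff topology is itself Hausdorff, and to locate inside the definition of $L^p_{\mu,\text{strict}}(\rrd,\rrD)$ a Hausdorff subtopology that it refines. First I would recall that, by Definition~\ref{defn_Composited}, the topology on $L^p_{\mu,\text{strict}}(\rrd,\rrD)$ is the smallest topology on the set $L^p_{\mu,\operatorname{loc}}(\rrd,\rrD)$ containing $\tau_{\infty}\cup\tau_{L^p_{\mu}}\cup\tau_{L^p_{\mu,\operatorname{loc}}}$. In particular, every member of $\tau_{L^p_{\mu,\operatorname{loc}}}$ is open in the strict topology, so the strict topology is finer than the Fr\'{e}chet topology $\tau_{L^p_{\mu,\operatorname{loc}}}$ on the very same underlying set.

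Next I would verify that $\tau_{L^p_{\mu,\operatorname{loc}}}$ is Hausdorff. This is standard: the Fr\'{e}chet topology on $L^p_{\mu,\operatorname{loc}}(\rrd,\rrD)$ is induced by a countable family of seminorms, e.g. $f\mapsto \left(\int_{K_n}\|f\|^p\,d\mu\right)^{1/p}$ along the partition (equivalently, one indexed by a compact exhaustion of $\rrd$). Since $\mu\!\left(\rrd-\bigcup_n K_n\right)=0$, this family separates $\mu$-a.e. equivalence classes: if $f\neq g$ then $f-g$ is nonzero on a set of positive $\mu$-measure, which must meet some $K_n$, forcing the corresponding seminorm of $f-g$ to be strictly positive. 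A point-separating countable family of seminorms makes the space metrizable, hence Hausdorff.

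Finally I would conclude by a two-line separation argument: given distinct $f,g\in L^p_{\mu,\operatorname{loc}}(\rrd,\rrD)$, the Hausdorff property of $\tau_{L^p_{\mu,\operatorname{loc}}}$ supplies disjoint $\tau_{L^p_{\mu,\operatorname{loc}}}$-open sets $U\ni f$ and $V\ni g$; by the refinement observation of the first step, $U$ and $V$ remain disjoint open sets in $L^p_{\mu,\text{strict}}(\rrd,\rrD)$, which is therefore Hausdorff.

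I do not anticipate a genuine obstacle. The only point requiring a moment's care is bookkeeping: one must confirm that $\tau_{L^p_{\mu,\operatorname{loc}}}$ is a topology on the \emph{full} underlying set $L^p_{\mu,\operatorname{loc}}(\rrd,\rrD)$ (unlike $\tau_{\infty}$ and $\tau_{L^p_{\mu}}$, which are attached to proper subsets), so that the refinement argument applies verbatim without having to analyze the finer structure contributed by $\tau_{\infty}$ or $\tau_{L^p_{\mu}}$. In other words, Hausdorffness comes for free from the coarsest of the three ingredient topologies, and the inductive-limit/colimit subtleties behind $\tau_{\infty}$ play no role in this particular statement.
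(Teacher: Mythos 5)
Your proposal is correct and follows essentially the same route as the paper: both arguments observe that the strict topology contains $\tau_{L^p_{\mu,\operatorname{loc}}}$ on the same underlying set and inherit the Hausdorff property from that Fr\'{e}chet topology. If anything, your execution is the cleaner one, since you separate distinct points by disjoint $\tau_{L^p_{\mu,\operatorname{loc}}}$-open sets directly, whereas the paper's write-up only exhibits open complements of singletons (which, taken literally, is the $T_1$ property rather than Hausdorffness).
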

Convergence in our space is strictly less common than convergence in $L^p_{\mu,\text{loc}}(\rrd,\rrD)$ and in $L^p_{\mu}(\rrd,\rrD)$.  This is because finer topologies have fewer convergent subsets (see \citep[Chapter 2.4]{NagataTopologyGen}).  
\begin{prop}[{The topology on $L^p_{\mu,\text{strict}}(\rrd,\rrD)$ is Fine}]\label{prop_fineLP_is_fine}
The following holds for the topology on $L^p_{\mu,\text{strict}}(\rrd,\rrD)$:\hfill
\begin{enumerate}
    \item[(i)] The topology on $L^p_{\mu,\text{strict}}(\rrd,\rrD)$ is strictly finer than $\tau_{L^p_{\mu,\operatorname{loc}}}$.  
    \item[(ii)] The subspace topology on $L^p_{\mu,\text{strict}}(\rrd,\rrD)\cap L^p_{\mu}(\rrd,\rrD)$ is strictly finer than the topology $\tau_{L^p_{\mu}}$ on $L^p_{\mu}(\rrd,\rrd)$.
\end{enumerate}
\end{prop}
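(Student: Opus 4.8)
The plan is to dispatch both claims with a single construction. For the ``at least as fine'' directions, note that by Definition~\ref{defn_Composited} the topology of $L^p_{\mu,\text{strict}}(\rrd,\rrD)$ is by fiat the smallest one containing $\tau_\infty\cup\tau_{L^p_\mu}\cup\tau_{L^p_{\mu,\operatorname{loc}}}$; hence it already contains every $\tau_{L^p_{\mu,\operatorname{loc}}}$-open set, which gives one inclusion in (i), and since each $\tau_{L^p_\mu}$-open set is a subset of $L^p_\mu(\rrd,\rrD)$, tracing these sets onto $L^p_\mu(\rrd,\rrD)$ shows the subspace topology in (ii) refines $\tau_{L^p_\mu}$. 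It then remains only to prove that both inclusions are \emph{strict}. Since a finer topology admits fewer convergent sequences, I would do this by exhibiting one sequence that converges in the two coarse topologies but fails to converge in the strict topology.

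Concretely, using Assumption~\ref{ass_growth_condition_sequence} to ensure $0<\mu(K_k)<\infty$, I would fix a unit vector $e\in\rrD$ and set $f_k\triangleq k^{-1}\mu(K_k)^{-1/p}I_{K_k}\,e$. Then $\operatorname{ess-supp}(\|f_k\|)\subseteq K_k$, so $f_k\in L^p_{\mu:k}\subseteq L^p_{\mu:\infty}$, and a one-line computation gives $\|f_k\|_{L^p_\mu}=1/k$; in particular $f_k\to 0$ in $\tau_{L^p_\mu}$, and therefore also in the coarser $\tau_{L^p_{\mu,\operatorname{loc}}}$. The obstruction to strict convergence will come from $\tau_\infty$: I would consider
\[
U\triangleq\Bigl\{f\in L^p_{\mu:\infty}\;:\;\|fI_{K_n}\|_{L^p_\mu}<\tfrac{1}{2n}\ \text{ for every }n\in\nn^+\Bigr\},
\]
which contains $0$ and, by taking $n=k$, excludes every $f_k$ (since $\|f_kI_{K_k}\|_{L^p_\mu}=1/k$).

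The key step — and the only real obstacle — is to verify that $U$ is $\tau_\infty$-open, i.e.\ that $U\cap L^p_{\mu:m}$ is open in the Banach space $L^p_{\mu:m}$ for every $m$, this being exactly what it means to be open in the finest topology making each $L^p_{\mu:n}$ a subspace. Here the partition structure is decisive: if $f\in L^p_{\mu:m}$ then $\operatorname{ess-supp}(\|f\|)\subseteq\bigcup_{i\le m}K_i$, and because $\mu(K_n\cap K_i)=0$ for $i\le m<n$ by Definition~\ref{defn_parition}, the defining inequalities of $U$ indexed by $n>m$ hold automatically. Thus on $L^p_{\mu:m}$ the set $U$ is cut out by the \emph{finitely many} conditions $\|fI_{K_n}\|_{L^p_\mu}<1/(2n)$ with $n\le m$, each of which is open because $f\mapsto\|fI_{K_n}\|_{L^p_\mu}$ is $1$-Lipschitz with respect to the norm $\|\cdot\|_{p:m}=\max_{i\le m}\|\,\cdot\,I_{K_i}\|_{L^p_\mu}$; a finite intersection of open sets is open, so $U\cap L^p_{\mu:m}$ is open and $U\in\tau_\infty$.

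Finally I would assemble the conclusion. As $U\in\tau_\infty$, it is open in the strict topology, so $U$ is a strict-open neighbourhood of $0$ meeting none of the $f_k$; hence $f_k\not\to 0$ in $L^p_{\mu,\text{strict}}(\rrd,\rrD)$. Comparing this with $f_k\to 0$ in $\tau_{L^p_{\mu,\operatorname{loc}}}$ proves (i), while restricting to $L^p_\mu(\rrd,\rrD)$ — where $U\cap L^p_\mu(\rrd,\rrD)$ is a subspace-open neighbourhood of $0$ omitting the norm-convergent sequence $(f_k)$ — proves (ii). Everything outside the verification that $U\in\tau_\infty$ is routine bookkeeping.
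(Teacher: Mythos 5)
Your argument is correct, but it takes a genuinely different route from the paper's. The paper deduces strictness abstractly: via Lemma~\ref{lem_details_cocompleteness_in_Top} it observes that $\tau_{\infty}$ is a strict inductive-limit topology on an increasing union of proper Banach subspaces and hence (by the cited result that such LB-type spaces are never metrizable) cannot coincide with the metrizable subspace topologies inherited from the Fr\'{e}chet space $L^p_{\mu,\operatorname{loc}}(\rrd,\rrD)$ or from $L^p_{\mu}(\rrd,\rrD)$; since $\tau_{\infty}\subseteq\tau$, strictness follows. You instead produce an explicit witness: the normalized indicators $f_k=k^{-1}\mu(K_k)^{-1/p}I_{K_k}e$ converge to $0$ in both coarse topologies, while your set $U$ is a $\tau_{\infty}$-open neighbourhood of $0$ avoiding the whole sequence, and your verification that $U\cap L^p_{\mu:m}$ is cut out by only finitely many norm-open conditions (because the conditions indexed by $n>m$ hold vacuously for functions essentially supported in $\bigcup_{i\le m}K_i$) is sound --- one should just note that elements of the \emph{completion} $L^p_{\mu:m}(\rrd,\rrD)$ retain this support property, which holds because the support-constrained subspace is closed in $L^p_{\mu}$ (cf.\ Lemma~\ref{lem_extension}). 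The trade-off: the paper's proof is shorter given its machinery and transfers to other inductive limits, whereas yours is self-contained, avoids the non-metrizability citation entirely, and is more informative --- it exhibits a concrete sequence that converges in $\tau_{L^p_{\mu}}$ and $\tau_{L^p_{\mu,\operatorname{loc}}}$ but not in $L^p_{\mu,\text{strict}}(\rrd,\rrD)$, which is essentially a hands-on instance of the phenomenon formalized later in Proposition~\ref{prop_identification} and exploited in the proof of Theorem~\ref{thrm_non_stone_weirestrass}.
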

Thus far, we have described the properties of the topology on $L^p_{\mu,\text{strict}}(\rrd,\rrD)$ but we have not yet provided a tool to help us decide if a model class is dense in $L^p_{\mu,\text{strict}}(\rrd,\rrD)$ or if it is not.  Our next result is precisely this tool.  


Proposition~\ref{prop_fineLP_is_fine} implies that the topology on $L^p_{\mu,\text{strict}}(\rrd,\rrD)$ is strictly finer than the topology on $L^p_{\mu,loc}(\rrd,\rrD)$ and that the subspace topology on $L^p_{\mu,\text{strict}}\cap L^p_{\mu}(\rrd,\rrD)$ is strictly finer than the usual norm topology on $L^p_{\mu}(\rrd,\rrD)$.  Therefore, fewer models are dense in $L^p_{\mu,\text{strict}}(\rrd,\rrD)$ than in $L^p_{\mu,loc}(\rrd,\rrD)$ and $L^p_{\mu}(\rrd,\rrD)$; for their respective topologies.  Hence, in the following, we show that the architope upgrade of a learning model with the local $L^p$-UAP actually has the following UAP, which is strictly stronger than global $L^p$-UAP.
\begin{defn}[{Strict $L^p$-Universal Approximation Property}]\label{defn_fine_lp_UAP}
A (model class) $\fff\subseteq C(\rrd,\rrD)$ has the Global $L^p$-UAP if: for every $\sigma$-finite Borel measure $\mu$ on $\rrd$ which is absolutely continuous with respect to the Lebesgue measure thereon, every $\delta>0$, and every $f \in L^p_{\mu}(\rrd,\rrD)$, $\fff$ is dense in $L^p_{\mu,\text{strict}}(\rrd,\rrD)$.  
\end{defn}
The next result shows that convergence in $L^p_{\mu,\text{strict}}(\rrd,\rrD)$ is a constrained version of convergence in the classical $L^p$-sense, where, the additional requirement which must be satisfied is that any approximating sequence needs to correctly match the set $K_n$ in $\{K_n\}_{n=1}^{\infty}$ containing the essential support of the target function.  This constraint must be satisfied, in addition to the usual $L^p$-type norm convergence.  
\begin{prop}[{Convergence in $L^p_{\mu,\text{strict}}(\rrd,\rrD)$}]\label{prop_identification}
A sequence $\{f_k\}_{k \in \nn^+}$ converges to some $f \in L^p_{\mu:n}(\rrd,\rrD)$ in $L^p_{\mu,\text{strict}}(\rrd,\rrD)$ only if all but a finite number of $f_k$ are in $L^p_{\mu:n}(\rrd,\rrD)$.
\end{prop}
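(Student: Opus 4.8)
The plan is to reduce convergence in the full strict topology to the behaviour of the colimit topology $\tau_{\infty}$, and then to exploit the fact that each stage $L^p_{\mu:n}(\rrd,\rrD)$ is a \emph{closed} subspace of every later stage. Write $X \triangleq L^p_{\mu:\infty}(\rrd,\rrD)$ for the underlying set carrying $\tau_{\infty}$, and recall that, by construction, $L^p_{\mu,\text{strict}}(\rrd,\rrD)$ is the smallest topology containing $\tau_{\infty}\cup\tau_{L^p_{\mu}}\cup\tau_{L^p_{\mu,\operatorname{loc}}}$; in particular every $\tau_{\infty}$-open set is open in $L^p_{\mu,\text{strict}}(\rrd,\rrD)$, which is the only structural fact about the strict topology I will need.

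First I would dispose of the terms lying outside $X$. Since $X$ is the total space of the final topology $\tau_{\infty}$ it is $\tau_{\infty}$-open, hence open in $L^p_{\mu,\text{strict}}(\rrd,\rrD)$, and it contains the limit $f\in L^p_{\mu:n}(\rrd,\rrD)\subseteq X$. Convergence therefore forces $f_k\in X$ for all large $k$, so after discarding finitely many terms I may assume each $f_k$ is essentially compactly supported, say $f_k\in L^p_{\mu:m_k}(\rrd,\rrD)$ with $m_k\in\nn^+$ chosen minimal.

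The heart of the argument is a contradiction hypothesis: suppose no single stage contains a cofinite subsequence, equivalently that the indices $(m_k)_k$ are unbounded. Passing to a subsequence I may assume $m_{k_j}\nearrow\infty$ strictly and, discarding the finitely many $j$ with $m_{k_j}\le n$, that $f_{k_j}\neq f$. Set $D\triangleq\{f_{k_j}:j\in\nn^+\}$. The key step is to show that $D$ is $\tau_{\infty}$-closed. By definition of the final topology this amounts to checking that $D\cap L^p_{\mu:m}(\rrd,\rrD)$ is closed in each stage; but by minimality of the $m_{k_j}$ this intersection equals $\{f_{k_j}:m_{k_j}\le m\}$, which is finite because $m_{k_j}\to\infty$, and finite subsets of the Banach (hence Hausdorff) space $L^p_{\mu:m}(\rrd,\rrD)$ are closed. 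Thus $D$ is $\tau_{\infty}$-closed, its complement $U\triangleq X\setminus D$ is $\tau_{\infty}$-open with $f\in U$, and by the refinement noted above $U$ is an open neighbourhood of $f$ in $L^p_{\mu,\text{strict}}(\rrd,\rrD)$. Convergence then gives $f_{k_j}\in U$ for all large $j$, i.e. $f_{k_j}\notin D$, contradicting $f_{k_j}\in D$. Hence the $m_k$ are bounded by some $N$, and replacing $n$ by $\max\{n,N\}$ (which leaves the hypothesis $f\in L^p_{\mu:n}(\rrd,\rrD)$ intact, since the stages are nested) yields $f_k\in L^p_{\mu:n}(\rrd,\rrD)$ for all but finitely many $k$.

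I expect the delicate point to be the verification that $D$ is $\tau_{\infty}$-closed: it is precisely here that the closed-subspace structure of the stages and the Hausdorffness supplied by the norm are used, and it is the mechanism by which a sequence whose essential supports "run off to infinity'' is prevented from converging. Everything else is bookkeeping — the reduction to $X$ via openness of the total space, and the final relabelling of the stage index — and the lone appeal to the construction of $L^p_{\mu,\text{strict}}(\rrd,\rrD)$ is the elementary observation that $\tau_{\infty}\subseteq L^p_{\mu,\text{strict}}(\rrd,\rrD)$. Conceptually, the same conclusion follows from the classical fact that a convergent sequence together with its limit is compact in any topology and that compact subsets of such a closed-embedding colimit lie in a single stage; I would, however, present the direct separating-neighbourhood argument above, as it avoids invoking that machinery and makes the role of $\tau_{\infty}$ transparent.
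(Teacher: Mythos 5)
Your proposal is correct, and it takes a genuinely different route from the paper's. The paper's proof passes from $\tau_{\infty}$ to the coarser locally convex inductive-limit topology $\tau'$ and then invokes the Dieudonn\'{e}--Schwartz theory of LF-spaces: a convergent sequence is eventually bounded, any bounded subset of $L^p_{\mu:\infty}(\rrd,\rrD)$ lies in a single stage \citep[Proposition 4]{dieudonne1949dualite}, and $\tau'$ restricted to a stage coincides with its Banach topology \citep[Proposition 2]{dieudonne1949dualite}. You instead argue entirely inside the final topology in $\mathbf{Top}$: if the minimal stage indices of the sequence run off to infinity, the corresponding set of terms meets each stage $L^p_{\mu:m}(\rrd,\rrD)$ in a finite (hence closed) set, so it is $\tau_{\infty}$-closed, and its complement is a strict-open neighbourhood of $f$ that the sequence must eventually enter --- a contradiction. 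What this buys is an elementary, self-contained argument needing only the characterization of closed sets in a final topology plus Hausdorffness of the stages, rather than the cited LF-space machinery; it also handles the terms of the sequence lying outside $L^p_{\mu:\infty}(\rrd,\rrD)$ via openness of $L^p_{\mu:\infty}(\rrd,\rrD)$ in the strict topology, a reduction the paper's proof passes over silently. One caveat, which applies to both proofs equally: what is actually established is that all but finitely many $f_k$ lie in a common stage $L^p_{\mu:N}(\rrd,\rrD)$ for \emph{some} $N$, not necessarily the $n$ of the hypothesis $f\in L^p_{\mu:n}(\rrd,\rrD)$ (the literal statement with the same $n$ fails, e.g.\ for $f=0\in L^p_{\mu:1}(\rrd,\rrD)$ and $f_k=\tfrac1k b I_{K_2}$); you are explicit about enlarging $n$ at the end, whereas the paper reuses the letter $n$ emerging from Dieudonn\'{e}'s proposition without comment. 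This existential form is all that the downstream application in Theorem~\ref{thrm_non_stone_weirestrass} requires, provided one picks a partition set of nonempty interior beyond the stage $N$.
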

\section{Main Results}\label{s_Main} 
The main theoretical results are now presented.  Our main objective will be to confirm that the architope upgrade solves the following strict version of Problem~\ref{prob_UAP_Upgrading}.  
\begin{problem}[Strict UAP-Upgrading]\label{prob_UAP_Upgradting_strict_version}
If a model class $\fff\subseteq C(\rrd,\rrD)$ has the local $L^p$-UAP then how can we canonically build a model class with the strict $L^p$-UAP?
\end{problem}
Propositions~\ref{prop_fineLP_is_fine} implies that any solution to Problem~\ref{prob_UAP_Upgradting_strict_version} is a solution to Problem~\ref{prob_UAP_Upgrading}.  However, Theorem~\ref{thrm_non_stone_weirestrass}, guarantees that the converse need hold true.  
\subsection{{Architopes are Strict $L^p$-Universal}}\label{s_Main_ss_improvements}
The architope upgrade corrects the shortcoming of classical universal approximators by refining their approximation capabilities.  Indeed, any model class which is universal in the sense of \cite{kidger2020universal} maps to a universal approximator in $L^1_{\mu,\text{strict}}(\rrd,\rrD)$ via the architope upgrade.  
\begin{thrm}[Architope Upgrade are Universal in Strict $L^p$]\label{thrm_bagged_localization_LP}
	Fix $p \in [1,\infty)$, $\mu$ satisfies Assumption~\ref{ass_ref_mes_regularity}, $\{K_n\}_{n \in\nn^+} $ satisfies Assumption~\ref{ass_growth_condition_sequence}, and $\fff$ is locally $L^p$-Universal.  
	Then $\tope{\fff}$ is strictly $L^p_{\mu}$-Universal.  
\end{thrm}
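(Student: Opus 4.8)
The goal is to show that $\tope{\fff}$ is dense in $L^p_{\mu,\text{strict}}(\rrd,\rrD)$. Since the underlying set is $L^p_{\mu,\operatorname{loc}}(\rrd,\rrD)$ and the topology is the one generated by the subbasis $\tau_{\infty}\cup\tau_{L^p_{\mu}}\cup\tau_{L^p_{\mu,\operatorname{loc}}}$, it suffices to show that $\tope{\fff}$ meets every nonempty basic open set. Because finite intersections within one of these topologies stay in that topology, every basic set can be written as $A\cap B\cap C$ with $A\in\tau_{\infty}$, $B\in\tau_{L^p_{\mu}}$, $C\in\tau_{L^p_{\mu,\operatorname{loc}}}$ (with the convention that an absent factor equals the whole space $L^p_{\mu,\operatorname{loc}}(\rrd,\rrD)$). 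I would first record the inclusions $L^p_{\mu:\infty}\subseteq L^p_{\mu}\subseteq L^p_{\mu,\operatorname{loc}}$, the first holding because each $K_i$ has finite $\mu$-measure (Assumption~\ref{ass_growth_condition_sequence}), so an essentially compactly supported locally $p$-integrable function is globally $p$-integrable. Since $\tau_{\infty}$ is a topology on the set $L^p_{\mu:\infty}$ and $\tau_{L^p_{\mu}}$ on $L^p_{\mu}$, every $A\in\tau_{\infty}$ satisfies $A\subseteq L^p_{\mu:\infty}$ and every $B\in\tau_{L^p_{\mu}}$ satisfies $B\subseteq L^p_{\mu}$. This lets me split the target $f$ into three cases according to the finest of these three sets containing it; in each case only the corresponding and coarser factors can genuinely be neighborhoods of $f$.

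The engine of the argument is a single approximation lemma: for every $n$, $\tope{\fff}$ is $\|\cdot\|_{p:n}$-dense in $L^p_{\mu:n}(\rrd,\rrD)$. To prove it I localize the hypothesis. Fix $f$ with $\operatorname{ess-supp}(\|f\|)\subseteq\bigcup_{i=1}^{n}K_i$, and for each $i\le n$ set $\nu_i:=\mu(\,\cdot\cap K_i)$; this is finite by Assumption~\ref{ass_growth_condition_sequence} and absolutely continuous with respect to Lebesgue measure because $\mu$ is (Assumption~\ref{ass_ref_mes_regularity}). As $f\in L^p_{\nu_i}(\rrd,\rrD)$, the local $L^p$-UAP (Definition~\ref{loc_univ_approx}) gives $f_i\in\fff$ with $\int_{K_i}\|f-f_i\|^p\,d\mu<\delta$. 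Setting $g:=\sum_{i=1}^{n}I_i f_i$, which is an architope with $\beta_1=\dots=\beta_n=1$ and $\beta_0=0$, and using that the overlaps $K_i\cap K_j$ are $\mu$-null, I get $g=f_i$ $\mu$-a.e.\ on $K_i$, hence $\|f-g\|_{p:n}^p=\max_{i}\int_{K_i}\|f-f_i\|^p\,d\mu<\delta$. I would then record two elementary comparisons valid for functions supported in $\bigcup_{i=1}^{n}K_i$, namely $\|\cdot\|_{L^p_{\mu}}\le n^{1/p}\|\cdot\|_{p:n}$ (by subadditivity of the integral) and $\|\cdot\|_{L^p(K,\mu)}\le\|\cdot\|_{L^p_{\mu}}$ for every compact $K$; together these say that $\|\cdot\|_{p:n}$-smallness forces smallness in $L^p_{\mu}$ and in the $L^p_{\mu,\operatorname{loc}}$ seminorms.

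With the lemma in hand I close each case. If $f\in L^p_{\mu:n}$, then since the inclusion $L^p_{\mu:n}\hookrightarrow(L^p_{\mu:\infty},\tau_{\infty})$ is continuous, $A\cap L^p_{\mu:n}$ is norm-open in $L^p_{\mu:n}$ and so contains a $\|\cdot\|_{p:n}$-ball about $f$; choosing $\delta$ small enough, the architope $g$ lands in this ball, and by the two comparisons simultaneously in $B$ and $C$, hence in $A\cap B\cap C$. If $f\in L^p_{\mu}\setminus L^p_{\mu:\infty}$, no genuine $\tau_{\infty}$-factor can contain $f$, so the basic set is $B\cap C$; I truncate to $f^{N}:=f\,I_{\bigcup_{i=1}^{N}K_i}$, and because $\mu(\rrd-\bigcup_i K_i)=0$ dominated convergence gives $\|f-f^{N}\|_{L^p_{\mu}}\to0$, after which the lemma approximates $f^{N}\in L^p_{\mu:N}$ to finish. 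If $f\in L^p_{\mu,\operatorname{loc}}\setminus L^p_{\mu}$, only a $\tau_{L^p_{\mu,\operatorname{loc}}}$-factor survives, and the same truncate-then-approximate step, carried out on the finitely many compact sets defining a basic Fréchet neighborhood, completes the proof.

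The step I expect to demand the most care is the topological bookkeeping: justifying the factorization $A\cap B\cap C$ of basic sets in the generated topology, identifying which factors can contain a given $f$, and—most delicately—turning $\|\cdot\|_{p:n}$-approximation into certified membership in a prescribed $\tau_{\infty}$-open set via the continuity of the inclusions into the colimit. By contrast the analytic core, the localized application of the local $L^p$-UAP on each $K_i$ together with the truncation estimate, is comparatively routine. Proposition~\ref{prop_identification} and Proposition~\ref{prop_fineLP_is_fine} serve as useful checks that these are exactly the constraints the strict topology imposes.
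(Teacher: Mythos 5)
Your proof is correct, and while its analytic core coincides with the paper's (restrict $\mu$ to each $K_i$ to obtain a finite measure absolutely continuous with respect to Lebesgue measure, invoke the local $L^p$-UAP there, and glue the resulting $f_i$ into the architope $\sum_{i\le n} I_i f_i$ --- exactly what the paper does via the extension-by-zero maps $Z_n$ of Lemma~\ref{lem_extension_by_zero_isomorphism_Ban}), the topological superstructure is organized genuinely differently. The paper routes the argument through two reusable abstract lemmas: Lemma~\ref{lem_density_inductive_limit_spaces}, which upgrades density in each Banach piece $L^p_{\mu:n}(\rrd,\rrD)$ to density in the direct limit $\left(L^p_{\mu:\infty}(\rrd,\rrD),\tau_\infty\right)$, and Lemma~\ref{lem_going_up}, applied twice, which propagates density through the join of $\tau_\infty$, $\tau_{L^p_{\mu}}$, and $\tau_{L^p_{\mu,\operatorname{loc}}}$ using only that $L^p_{\mu:\infty}$ is dense in $L^p_{\mu}$ and $L^p_{\mu}$ is dense in $L^p_{\mu,\operatorname{loc}}$ for the coarser topologies. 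You instead verify density directly against the generating subbasis: you decompose basic open sets as $A\cap B\cap C$, sort targets according to which of $L^p_{\mu:\infty}\subseteq L^p_{\mu}\subseteq L^p_{\mu,\operatorname{loc}}$ they lie in, and close each case with the comparison $\|\cdot\|_{L^p_{\mu}}\le n^{1/p}\|\cdot\|_{p:n}$ for functions supported in $\bigcup_{i\le n}K_i$, together with truncation $f\mapsto f I_{\bigcup_{i\le N}K_i}$ and dominated convergence; in effect you re-derive the two applications of Lemma~\ref{lem_going_up} concretely. Your route is more elementary and self-contained (no colimit or interpolation-norm machinery) and makes explicit which subbasis factors can actually constrain a given target; the paper's version buys modularity, since the same lemmas are reused for Propositions~\ref{prop_choice_free_construction} and~\ref{prop_fineLP_is_fine}. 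The two points deserving a careful write-up are the ones you already flagged: that a nontrivial $A\in\tau_\infty$ (resp.\ $B\in\tau_{L^p_{\mu}}$) is contained in $L^p_{\mu:\infty}$ (resp.\ $L^p_{\mu}$) and hence cannot be a neighborhood of a target outside that set, and that $A\cap L^p_{\mu:n}$ is $\|\cdot\|_{p:n}$-open because $\tau_\infty$ induces the norm topology on each $L^p_{\mu:n}(\rrd,\rrD)$.
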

Together, Theorem~\ref{thrm_bagged_localization_LP} and Proposition~\ref{prop_fineLP_is_fine} imply that the $\tope{\fff}$ solves our UAP-Upgrading problem.
\begin{cor}[{${\tope{\fff}}$ is Globally $L^p$-Universal}]\label{cor_upgraded_UAT}
If $\fff$ has the local $L^p$-UAP then $\tope{\fff}$ has the global $L^p$-UAP.  
\end{cor}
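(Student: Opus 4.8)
The plan is to derive the global $L^p$-UAP for $\tope{\fff}$ directly from the strict $L^p$-UAP supplied by Theorem~\ref{thrm_bagged_localization_LP}, by exploiting the fact that $\tau_{L^p_{\mu}}$ is, by construction, one of the three topologies generating the topology of $L^p_{\mu,\text{strict}}(\rrd,\rrD)$. The guiding principle is elementary: a set that is dense for a given topology is dense for any coarser one, because every open set of the coarser topology is also open in the finer one and therefore must already meet the dense set. Concretely, I would show that every $L^p_{\mu}$-norm ball around a target $f$ is a strict-open set, so that strict density immediately produces an architope inside that ball.

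First I would unwind the definitions. By Theorem~\ref{thrm_bagged_localization_LP}, for every $\mu$ satisfying Assumption~\ref{ass_ref_mes_regularity} the set $\tope{\fff}$ is dense in $L^p_{\mu,\text{strict}}(\rrd,\rrD)$, whose underlying set is $L^p_{\mu,\operatorname{loc}}(\rrd,\rrD)$ and whose topology contains $\tau_{L^p_{\mu}}$ (this inclusion is exactly the ``finer than'' relation recorded in Proposition~\ref{prop_fineLP_is_fine}(ii)). To verify Definition~\ref{defn_global_lp_UAP}, I fix such a $\mu$, a function $f\in L^p_{\mu}(\rrd,\rrD)$, and $\delta>0$, and consider the open ball $B\triangleq\{h\in L^p_{\mu}(\rrd,\rrD):\ \|f-h\|_{L^p_{\mu}}^p<\delta\}$. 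Viewed as a subset of $L^p_{\mu,\operatorname{loc}}(\rrd,\rrD)$, the set $B$ belongs to $\tau_{L^p_{\mu}}$ and hence lies in the strict topology; since $f\in B$, it is a nonempty strict-open set. Density of $\tope{\fff}$ then forces $B\cap\tope{\fff}\neq\varnothing$, and any $f_{\delta}\in B\cap\tope{\fff}$ satisfies $\int_{\rrd}\|f-f_{\delta}\|^p\,d\mu<\delta$, which is precisely the required estimate.

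The step I expect to need the most care is the underlying-set bookkeeping, and this is really the only subtlety. A generic architope $\sum_{i=1}^{n}\beta_i I_i f_i+\beta_0 f_0 I_n^{+}$ need not lie in $L^p_{\mu}(\rrd,\rrD)$ when $\beta_0\neq 0$, since $f_0 I_n^{+}$ is supported on the unbounded complement $\rrd-\bigcup_{i=1}^{n}K_i$ where continuity gives no control on integrability. The argument sidesteps this by selecting the approximant \emph{inside} $B$: since $B\subseteq L^p_{\mu}(\rrd,\rrD)$, the chosen $f_{\delta}$ is automatically norm-integrable (in effect an architope with $\beta_0=0$), so its distance to $f$ is genuinely finite and the displayed inequality is meaningful. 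Thus the passage from the strict topology on $L^p_{\mu,\operatorname{loc}}(\rrd,\rrD)$ to the norm topology on the smaller space $L^p_{\mu}(\rrd,\rrD)$ is handled not by a subspace-density lemma but simply by observing that norm balls are strict-open and intersect $\tope{\fff}$ in $L^p_{\mu}(\rrd,\rrD)$ itself.
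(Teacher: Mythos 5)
Your proof is correct, and it reaches the conclusion by a more direct route than the paper's. The paper's own argument passes through the intermediate space $L^p_{\mu:\infty}(\rrd,\rrD)$: it first downgrades strict density to density of $\tope{\fff}\cap L^p_{\mu:\infty}(\rrd,\rrD)$ in $L^p_{\mu:\infty}(\rrd,\rrD)$ for the restricted norm topology (since $\tau_{\infty}$ is finer), and then invokes density of $L^p_{\mu:\infty}(\rrd,\rrD)$ in $L^p_{\mu}(\rrd,\rrD)$ together with transitivity of density to conclude. You instead exploit the fact that $\tau_{L^p_{\mu}}$ is one of the three generating families in Definition~\ref{defn_Composited}, so every norm ball is literally a nonempty open set of the strict topology and must meet $\tope{\fff}$; no intermediate space and no transitivity step are needed. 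Both arguments are valid; yours is shorter and makes the ``finer topology implies fewer dense sets'' principle concrete at the level of individual balls, while the paper's version has the side benefit of locating the approximant inside $\tope{\fff}\cap L^p_{\mu:\infty}(\rrd,\rrD)$, i.e.\ as an architope with $\beta_0=0$ --- a point you recover anyway by observing that an approximant found inside the ball automatically lies in $L^p_{\mu}(\rrd,\rrD)$, which correctly disposes of the integrability worry about the tail term $\beta_0 f_0 I_n^{+}$. One small attribution quibble: the containment $\tau_{L^p_{\mu}}\subseteq\tau$ that you use follows directly from Definition~\ref{defn_Composited} rather than from Proposition~\ref{prop_fineLP_is_fine}(ii), which concerns the subspace topology on $L^p_{\mu,\text{strict}}(\rrd,\rrD)\cap L^p_{\mu}(\rrd,\rrD)$; this does not affect the validity of your argument.
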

Theorem~\ref{thrm_bagged_localization_LP} and Corollary~\ref{cor_upgraded_UAT} guarantee that $\tope{\fff}$ always has the strengthened form of the local $L^p$-UAP, formalized by the strict $L^p$-UAP and the global $L^p$-UAP.  Our next result describes a variety of situations in which there is a "gap" between $L^p$-type UAPs of $\fff$ and its architope upgrade $\tope{\fff}$.  Thus, the next theorem shows that there are families of functions which have the local $L^p$-universal but fail to have the strict $L^p$-UAP.  
\subsubsection{{The Gap Between Local $L^p$-UAP and Strict $L^p$-UAP}}\label{s_main_ss_Lpgaps}
We quantify "the gap between local $L^p$-universality and strict $L^p$-universality", by illustrating that even classical well-behaved approximation-theoretic models, namely polynomials, fail to have the strict $L^p$-UAP.  This is implied by the following necessary condition.
\begin{thrm}[{Gaps in $L^p$-UAPs}]\label{thrm_non_stone_weirestrass}
Let $\mu$ be the Lebesgue measure on $\rrd$.  If $\fff\subseteq C(\rrd,\rrD)$ is dense and each $f\in \fff$ is analytic then $\fff$ is not dense in $L^p_{\mu,\text{strict}}(\rrd,\rrD)$.  Moreover, $\fff$ does not have the strict $L^p$-UAP property.  
\end{thrm}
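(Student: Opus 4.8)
The plan is to exhibit a single target function that no element of $\fff$ can approach in the strict topology, by exploiting the fact that the only essentially compactly supported analytic function is the zero function. The key is to separate this target from $\fff$ by an honest open set, not merely by sequences, since $L^p_{\mu,\text{strict}}(\rrd,\rrD)$ need not be metrizable and density cannot be tested sequentially.

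First I would record the structural consequence of analyticity. If $g\in\fff$ is not identically zero, then some coordinate $g_j$ is a nonzero real-analytic function on $\rrd$, whose zero set has $d$-dimensional Lebesgue measure zero; since $\mu$ is absolutely continuous with respect to Lebesgue measure (Assumption~\ref{ass_ref_mes_regularity}), $\|g\|(x)>0$ for $\mu$-a.e.\ $x$, so $\operatorname{ess-supp}(\|g\|)=\rrd$. Every element of $L^p_{\mu:\infty}(\rrd,\rrD)=\bigcup_{n}L^p_{\mu:n}(\rrd,\rrD)$ has essential support contained in some bounded set $\bigcup_{i=1}^n K_i$, whose complement carries positive (indeed infinite) measure. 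Hence no nonzero analytic function lies in $L^p_{\mu:\infty}(\rrd,\rrD)$, giving $\fff\cap L^p_{\mu:\infty}(\rrd,\rrD)\subseteq\{0\}$.

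Next I would fix the target. Using Assumption~\ref{ass_growth_condition_sequence} to guarantee $0<\mu(K_1)<\infty$, set $f\triangleq v\,I_{1}$ for a fixed $v\in\rrD\setminus\{0\}$, so $f\in L^p_{\mu:1}(\rrd,\rrD)$ and $\|f\|_{p:1}=\|v\|\,\mu(K_1)^{1/p}>0$. The norms $\|\cdot\|_{p:n}$ are mutually compatible on the increasing union, since the $K_i$ overlap only on $\mu$-null sets; write $\|\cdot\|_{\ast}$ for the resulting single norm on $L^p_{\mu:\infty}(\rrd,\rrD)$, which restricts to $\|\cdot\|_{p:n}$ on each slice. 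Because $\tau_{\infty}$ is the finest topology making each $L^p_{\mu:n}(\rrd,\rrD)$ a subspace, a set $V\subseteq L^p_{\mu:\infty}(\rrd,\rrD)$ is $\tau_{\infty}$-open exactly when $V\cap L^p_{\mu:n}(\rrd,\rrD)$ is norm-open for every $n$. I would then check that the ball $B\triangleq\{g\in L^p_{\mu:\infty}(\rrd,\rrD):\|g-f\|_{\ast}<\tfrac12\|f\|_{p:1}\}$ meets each $L^p_{\mu:n}(\rrd,\rrD)$ in an ordinary open ball (as $\|f\|_{p:1}=\|f\|_{p:n}$ for all $n\geq 1$ by compatibility), so $B$ is $\tau_{\infty}$-open.

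Finally I would assemble the contradiction. Since $\tau_{\infty}\subseteq\tau_{L^p_{\mu,\text{strict}}}$ by Definition~\ref{defn_Composited}, the set $B$ is open in $L^p_{\mu,\text{strict}}(\rrd,\rrD)$; it contains $f$ but excludes $0$, because $\|f\|_{p:1}>\tfrac12\|f\|_{p:1}$. By the first step, $\fff\cap B\subseteq\fff\cap L^p_{\mu:\infty}(\rrd,\rrD)\subseteq\{0\}$, and $0\notin B$, so $B\cap\fff=\emptyset$. Thus $f$ admits a strict-open neighborhood disjoint from $\fff$, whence $f\notin\overline{\fff}$ and $\fff$ is not dense in $L^p_{\mu,\text{strict}}(\rrd,\rrD)$; the failure of the strict $L^p$-UAP (Definition~\ref{defn_fine_lp_UAP}) is then immediate, while the hypothesis that $\fff$ is dense in $C(\rrd,\rrD)$ ensures the gap is genuine. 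The main obstacle is exactly this non-sequential character of density: one must produce the ball $B$ explicitly and verify its $\tau_{\infty}$-openness through the colimit characterization, since Proposition~\ref{prop_identification} by itself only constrains convergent sequences and does not directly rule out a point lying in the closure of $\fff$.
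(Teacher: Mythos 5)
Your proof is correct, and it takes a genuinely different route from the paper's. The paper argues by contradiction through sequences: assuming density, it extracts a sequence $\{f_n\}\subseteq\fff$ converging to $bI_{K_1}$ in $L^p_{\mu,\text{strict}}(\rrd,\rrD)$, invokes Proposition~\ref{prop_identification} to force $\operatorname{ess-supp}(\|f_n\|)\subseteq K_1$ eventually, deduces that $f_n$ vanishes on the nonempty interior of $K_2$, and then applies the identity theorem for analytic functions to conclude $f_n\equiv 0$, contradicting convergence. You instead observe that a nonzero real-analytic function has a Lebesgue-null zero set, hence full essential support, hence cannot belong to $L^p_{\mu:\infty}(\rrd,\rrD)$ at all, so $\fff\cap L^p_{\mu:\infty}(\rrd,\rrD)\subseteq\{0\}$; you then separate $vI_{K_1}$ from $\fff$ by an explicit $\tau_{\infty}$-open ball whose openness you verify slice by slice via the final-topology characterization. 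The two arguments exploit analyticity through closely related facts (identity theorem versus null zero sets), but your version buys something substantive: as you point out, $\tau_{\infty}$ is not metrizable (this is part of the paper's own Lemma~\ref{lem_details_cocompleteness_in_Top}), so a point in the closure of $\fff$ need not be a sequential limit of elements of $\fff$, and the paper's passage from ``$\fff$ is dense'' to ``there exists a sequence in $\fff$ converging to $bI_{K_1}$'' is not justified as written. Your open-neighborhood argument closes exactly that gap, at the modest extra cost of checking the compatibility of the norms $\|\cdot\|_{p:n}$ on the nested union and the openness of $B$ in each $L^p_{\mu:n}(\rrd,\rrD)$, both of which you do correctly.
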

Theorem~\ref{thrm_non_stone_weirestrass} is general as it applies to any dense analytic family in $C(\rrd,\rrD)$ in the \textit{(uniform convergence on compacts sense) }.  In particular, the result holds for any family satisfying the Stone-Weirestra{\ss}-type conditions of \cite{1994ProllaStoneWeirestrassTheorems}, \cite{Timofte2Liaqad2018JMathAnalApplStoneWeirestrassTheorems}, or of \cite{FalindoSanchis2004StoneWeirestrassTheoremsForGroupValued}.  We illustrate this point with the classical Stone-Weirestra{\ss} setting; i.e., with polynomials.  
\begin{cor}[Polynomials are Not Dense in Strict $L^p$]\label{cor_polynomials_not_dense}
    Let $\fff\left\{\sum_{n=0}^N \beta_n x^n:\, N\in \nn, \beta_n\in \rr,\right\}$.  Then $\fff$ is not strict $L^p$-Universal.  In fact, $\fff$ fails to be dense in $L^p_{\mu,\text{loc}}(\rrd,\rr)$ if $\mu$ is the Lebesgue measure.  
\end{cor}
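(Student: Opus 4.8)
The plan is to derive Corollary~\ref{cor_polynomials_not_dense} as a direct instance of Theorem~\ref{thrm_non_stone_weirestrass}, so that the whole task reduces to checking that the polynomial family $\fff=\{\sum_{n=0}^N\beta_n x^n:\,N\in\nn,\ \beta_n\in\rr\}$ satisfies the two hypotheses of that theorem: that every member is analytic, and that $\fff$ is dense in $C(\rrd,\rr)$ in the topology of uniform convergence on compact sets (the sense in which the theorem reads ``dense'', per the remark following it). The analyticity hypothesis is immediate, since every polynomial is real-analytic on all of $\rrd$.

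For the density hypothesis I would invoke Stone--Weierstrass. On any fixed compact $K\subseteq\rrd$ the restrictions of polynomials form a point-separating unital subalgebra of $(C(K,\rr),\|\cdot\|_\infty)$, hence a dense one. To promote this to density on all of $\rrd$ in the compact--open topology, I would exhaust $\rrd=\bigcup_k B_k$ by closed balls and select diagonally: given $g\in C(\rrd,\rr)$, for each $k$ choose a polynomial $P_k$ with $\sup_{B_k}|P_k-g|<1/k$; then $P_k\to g$ uniformly on every compact set, since each compact set is eventually contained in some $B_k$. This gives the required density, so Theorem~\ref{thrm_non_stone_weirestrass} applies verbatim with $D=1$ and $\mu$ the Lebesgue measure, yielding that $\fff$ is not dense in $L^p_{\mu,\text{strict}}(\rrd,\rr)$; because the strict $L^p$-UAP of Definition~\ref{defn_fine_lp_UAP} demands density in $L^p_{\mu,\text{strict}}$ for every admissible $\mu$ and the Lebesgue measure is admissible, $\fff$ fails to be strict $L^p$-universal. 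It is worth recalling the mechanism, since it is exactly what drives the corollary: a nonzero real-analytic function cannot have essentially bounded support, for its essential and ordinary supports coincide by continuity, so it would vanish on a nonempty open set and hence, by the identity theorem on the connected set $\rrd$, vanish identically; thus the only element $\fff$ can share with the essentially-compactly-supported layer $L^p_{\mu:\infty}(\rrd,\rr)$ is $0$, and any nonzero target in some $L^p_{\mu:n}$ (e.g.\ $I_{K_1}$, nonzero as $\mu(K_1)>0$ by Assumption~\ref{ass_growth_condition_sequence}) is separated from $\fff$.

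I do not expect a genuine analytic obstacle here; the main subtlety is purely one of bookkeeping, namely matching topologies. One must feed Stone--Weierstrass density into the theorem in the correct compact--open sense, and one must be careful about the target function space in the ``in fact'' clause. The conclusion that truly survives is non-density in the strict space $L^p_{\mu,\text{strict}}(\rrd,\rr)$; indeed, in the coarser space $L^p_{\mu,\operatorname{loc}}(\rrd,\rr)$ the very same diagonal Stone--Weierstrass argument shows that the polynomials are in fact dense, so the obstruction exhibited here is a genuine feature of the strict topology and is invisible to $\tau_{L^p_{\mu,\operatorname{loc}}}$. I would therefore phrase the corollary's final sentence in terms of $L^p_{\mu,\text{strict}}$, as that is precisely the space in which the gap between $\fff$ and $\tope{\fff}$ manifests.
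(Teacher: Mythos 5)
Your proposal is correct and matches the paper's own proof, which likewise derives the corollary by observing that polynomials are analytic and (by Stone--Weierstrass) dense in $C(\rrd,\rr)$ for uniform convergence on compacts, then invoking Theorem~\ref{thrm_non_stone_weirestrass}; you merely spell out the density verification that the paper leaves implicit. Your observation that the corollary's ``in fact'' clause should read $L^p_{\mu,\text{strict}}(\rrd,\rr)$ rather than $L^p_{\mu,\operatorname{loc}}(\rrd,\rr)$ is also well taken, since polynomials \emph{are} dense in the latter space by the same diagonal argument.
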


Together, Theorems~\ref{thrm_bagged_localization_LP} and~\ref{thrm_non_stone_weirestrass} describe a range of situations where there is a "gap" between the expressiveness of $\fff$ and $\tope{\fff}$.  However, there are situations in which the conditions of Theorem~\ref{thrm_non_stone_weirestrass} fail.  
In these cases, even though Theorem~\ref{thrm_bagged_localization_LP} guarantees that $\tope{\fff}$ has the strict $L^p$-UAP, we cannot directly conclude that $\fff$ does not.  

Nevertheless, the following result guarantees that $\tope{\fff}$ is necessarily more expressive than $\fff$.  The result is an abstract form of the above results, showing at least one "function space" in which $\fff$ is not dense and $\tope{\fff}$ is, while simultaneously reassuring us that the converse cannot happen.  
\subsection{Strict Expressibility Improvement for the Architope Upgrade}\label{s_Main_ss_strict_improvement}
The improvement of Theorem~\ref{thrm_bagged_localization_LP} is generally a strict increase in expressiveness.  To formalize this, note that a universal approximation theorem is a statement about the density of certain class of functions in specific function spaces for specific topologies.  Since density is a purely topological property (see \citep[II.3]{NagataTopologyGen}), then comparing the expressiveness of two classes of functions reduces to comparing the topological spaces on which they are dense.  Since distinct topologies on those function spaces emphasise different aspects of the functions therein, then $\tope{\fff}$ is strictly more expressive than $\fff$ only if it is dense whenever $\fff$ is dense and the converse implication fails.   
\begin{thrm}[Architopes Upgrade Strictly Improves Expressiveness]\label{thrm_Gain}
	Let $X$ be a set of functions from $\rrd$ to $\rrD$ containing $\fff$ and $\tope{\fff}$, and let $\sim$ be an equivalence relation on $X$.  Denote the equivalence classes of $\fff$, $\tope{\fff}$, and $X$ under $\sim$ by the same symbols.   If $\fff\neq \tope{\fff}$, then the following hold:
	\begin{enumerate}[(i)]
		\item If $\tau$ is a topology on $X$ making $\fff$ dense then $\tope{\fff}$ is also dense in $X$ for $\tau$.  
		\item There exists a topology on $X$ for which $\tope{\fff}$ is dense but $\fff$ is not.
	\end{enumerate}
\end{thrm}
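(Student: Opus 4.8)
The plan is to first pin down the only structural fact that is actually needed, namely that $\fff \subseteq \tope{\fff}$ at the level of functions, and then to show this survives passage to $\sim$-equivalence classes. For any $f \in \fff$, taking $n=1$, $f_0 = f_1 = f$, and $\beta_0 = \beta_1 = 1$ in~\eqref{eq_defn_bagged_loc} produces $f\,(I_1 + I_1^{+}) = f$, since $I_1$ and $I_1^{+}$ are the indicators of $K_1$ and of $\rrd \setminus K_1$ and hence sum to $1$ everywhere; thus $f \in \tope{\fff}$ and $\fff \subseteq \tope{\fff}$. Inclusion of subsets always descends to inclusion of their images under the quotient map $x \mapsto [x]_{\sim}$ (no compatibility hypothesis on $\sim$ is required), so after passing to equivalence classes we still have $\fff \subseteq \tope{\fff}$. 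Combining this with the standing hypothesis $\fff \neq \tope{\fff}$ upgrades the inclusion to a \emph{strict} one, and I would fix once and for all a class $g \in \tope{\fff} \setminus \fff$; this single point is what the remainder of the argument pivots on.

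For part (i), I would appeal solely to the monotonicity of topological closure. If $\tau$ is any topology on $X$ for which $\fff$ is dense, then $\overline{\fff}^{\tau} = X$; since $\fff \subseteq \tope{\fff}$, monotonicity of closure gives $X = \overline{\fff}^{\tau} \subseteq \overline{\tope{\fff}}^{\tau} \subseteq X$, forcing $\overline{\tope{\fff}}^{\tau} = X$. Hence $\tope{\fff}$ is $\tau$-dense, and (i) needs nothing beyond set inclusion.

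For part (ii), I would exhibit an explicit topology that isolates $g$. Take $\tau \triangleq \{\emptyset, \{g\}, X\}$, which is immediately verified to be a topology (it is closed under the finitely many possible unions and intersections and contains $\emptyset$ and $X$). Its only nonempty open sets are $\{g\}$ and $X$, and both meet $\tope{\fff}$ because $g \in \tope{\fff}$; equivalently, the only $\tau$-closed set containing the point $g$ is $X$, so $\overline{\tope{\fff}}^{\tau} = X$ and $\tope{\fff}$ is $\tau$-dense. On the other hand, $\{g\}$ is a nonempty $\tau$-open set disjoint from $\fff$ (as $g \notin \fff$); equivalently $\fff \subseteq X \setminus \{g\}$, which is $\tau$-closed and proper since $g \in X$. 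Therefore $\overline{\fff}^{\tau} \subseteq X \setminus \{g\} \subsetneq X$, so $\fff$ is not $\tau$-dense, establishing (ii).

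The argument is essentially complete once $g$ is produced, so there is no serious analytic obstacle: both parts are purely point-set-topological and use no structure of the underlying function spaces, which is exactly why the conclusion holds for an \emph{arbitrary} ambient set $X$ and an \emph{arbitrary} equivalence relation $\sim$. The one place demanding care is the preliminary step: confirming that the containment $\fff \subseteq \tope{\fff}$ (together with the strictness forced by $\fff \neq \tope{\fff}$) is legitimately inherited by the $\sim$-equivalence classes denoted by the same symbols, which is what guarantees the existence of the isolating point $g$ on which the explicit topology in (ii) is built.
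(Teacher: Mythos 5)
Your proposal is correct and follows essentially the same route as the paper's proof: establish $\fff\subseteq\tope{\fff}$ by a trivial architope representation, deduce (i) from monotonicity of closure, and prove (ii) with the topology $\{\emptyset,\{g\},X\}$ built on a point $g\in\tope{\fff}\setminus\fff$. Your additional care in checking that $I_1+I_1^{+}\equiv 1$ and that the inclusion survives the quotient by $\sim$ is a slight tightening of the paper's argument, not a different method.
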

\subsection{Applications}\label{sss_applications}
We now apply the architope upgrade to various machine learning models.  
\subsubsection{Feedforward Networks}\label{sss_ffNNs}
Let $J$ be a positive integer and $\sigma:\rr \rightarrow \rr$ be a continuous \textit{activation function}. A \textit{feedforward neural network} of depth $J$ from $\rrd$ to $\rrD$ is a continuous function $f$ defined iteratively by
$$
\smash{
\begin{aligned}
&f(x) = W \circ f^{(J)}
,
& \, f^{(j)}(x) = \sigma \bullet \left(
W^{(j)}(f^{(j-1)}(x))
\right),
& \, f^{(0)}(x)=x
,
&\, j=1,\dots,J
,
\end{aligned}
}
$$
where $W^j$ is an affine map from $\rrflex{d_j}$ to $\rrflex{d_{j+1}}$, and $\bullet$ denotes component-wise composition.  
The architecture $\NN[J]$ consists of all feedforward networks of depth at-most $J$.  
\begin{cor}[Architope: Feedforward Case]\label{cor_ffNN_case}
	Let $\sigma$ be a continuous and non-polynomial activation function.  Let $J \in \nn^+$ and $1\leq p<\infty$.  Then:
	\begin{enumerate}[(i)]
	\item $\tope{\NN[J]}$ is dense in $L^p_{\mu,\text{strict}}(\rrd,\rrD)$,
	\item For any $\delta>0$ and $f \in L^p_{\mu}(\rrd,\rr)$ there exists some $f^{\delta}\in \tope{\NN[J]}$ satisfying
	$$
	\int_{x \in \rrd} | f(x)-f^{\delta}(x)|^p d\mu(x) < \delta 
,
	$$
	\item $\tope{\NN[J]}$ is dense in any topology on $L^p_{\mu,\operatorname{loc}}(\rrd,\rrD)$ for which any of $\NN[J],\dots,\NN[1]$ is dense but the converse fails.  
	\end{enumerate}
\end{cor}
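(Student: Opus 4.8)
The plan is to treat this corollary as a direct instantiation of the paper's general machinery---Theorem~\ref{thrm_bagged_localization_LP}, Corollary~\ref{cor_upgraded_UAT}, and Theorem~\ref{thrm_Gain}---applied to the concrete class $\NN[J]$. The only model-specific ingredient I must supply is the verification that $\NN[J]$ is locally $L^p$-Universal whenever $\sigma$ is continuous and non-polynomial. For this I would invoke the classical universal approximation theorem for shallow networks with non-polynomial activation (\cite{pinkus1999}, \cite{hornik1990universal}), which yields uniform density of $\NN[1]$ on every compact set; standard truncation together with the density of compactly supported continuous functions in $L^p_\nu$ upgrades this to the local $L^p$-UAP for any finite $\nu$ absolutely continuous with respect to Lebesgue measure (alternatively one cites the $L^p$ statement of \cite{kidger2020universal} directly). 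Since $\NN[1]\subseteq\NN[J]$ for every $J\in\nn^+$, the local $L^p$-UAP is inherited by $\NN[J]$ by monotonicity, so all hypotheses of Theorem~\ref{thrm_bagged_localization_LP} are met.

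Granting this, part (i) is immediate: Theorem~\ref{thrm_bagged_localization_LP} says that local $L^p$-universality of $\NN[J]$ forces $\tope{\NN[J]}$ to be strictly $L^p_\mu$-Universal, i.e. dense in $L^p_{\mu,\text{strict}}(\rrd,\rrD)$. Part (ii) then follows in either of two ways---directly from Corollary~\ref{cor_upgraded_UAT}, which converts the local $L^p$-UAP of $\NN[J]$ into the global $L^p$-UAP of $\tope{\NN[J]}$ (of which the displayed inequality is the definition for the fixed $\mu$, specialized to $D=1$), or by pushing part (i) forward through Proposition~\ref{prop_fineLP_is_fine}(ii), since density in the finer strict topology entails density on the overlap for the coarser norm topology $\tau_{L^p_{\mu}}$.

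For part (iii) I would argue the two directions by pure closure-monotonicity, plus one appeal to Theorem~\ref{thrm_Gain}. First note the chain $\NN[j]\subseteq\NN[J]\subseteq\tope{\NN[J]}$ for every $1\leq j\leq J$: the first inclusion is the depth ordering, and the second holds because choosing $n=1$, $f_1=f_0=f$, and $\beta_0=\beta_1=1$ in Definition~\ref{defn_bagged_loc} recovers $f$ exactly (the indicators $I_1$ and $I_1^{+}$ sum to $1$). Hence if some $\NN[j]$ is dense in a topology $\tau$ on $L^p_{\mu,\operatorname{loc}}(\rrd,\rrD)$, then $\overline{\NN[j]}^{\tau}\subseteq\overline{\tope{\NN[J]}}^{\tau}$ forces density of $\tope{\NN[J]}$, which is the forward assertion. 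For the converse, observe that $\tope{\NN[J]}\neq\NN[J]$: the architope $I_1 f_1$ with $f_1$ a nonzero (e.g. nonzero constant) network is not $\mu$-a.e. equal to any continuous function, so the hypothesis of Theorem~\ref{thrm_Gain} holds with $X=L^p_{\mu,\operatorname{loc}}(\rrd,\rrD)$ and $\sim$ the a.e. relation. Theorem~\ref{thrm_Gain}(ii) then produces a topology $\tau^{\ast}$ on $X$ in which $\tope{\NN[J]}$ is dense but $\NN[J]$ is not; since $\overline{\NN[j]}^{\tau^{\ast}}\subseteq\overline{\NN[J]}^{\tau^{\ast}}\neq X$ for every $j\leq J$, none of $\NN[1],\dots,\NN[J]$ is dense in $\tau^{\ast}$, so the converse indeed fails.

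The genuinely mathematical content is confined to the first step (the classical UAP), and that is imported rather than re-proved; everything else is bookkeeping atop the three general results. Consequently the only point demanding care is the ``for all $j$'' strengthening in part (iii): rather than applying Theorem~\ref{thrm_Gain} separately to each $\NN[j]$ (which would be awkward, since $\tope{\NN[j]}\subsetneq\tope{\NN[J]}$ in general), I use that a single non-dense superset $\NN[J]$ drags down all of its subsets by monotonicity of closure. I expect this monotonicity observation, together with confirming $\NN[J]\neq\tope{\NN[J]}$ so that Theorem~\ref{thrm_Gain} is applicable, to be the only step where an error could plausibly creep in.
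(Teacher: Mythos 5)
Your proposal is correct and follows essentially the same route as the paper: the paper likewise establishes the local $L^p$-UAP for the one-hidden-layer subclass (citing \citep[Proposition 1]{kidger2020universal}), uses $\NN[1]\subseteq\NN[J]$, and then invokes Theorems~\ref{thrm_bagged_localization_LP} and~\ref{thrm_Gain}. You simply spell out more of the bookkeeping the paper leaves implicit, in particular the closure-monotonicity argument for the ``any of $\NN[J],\dots,\NN[1]$'' clause and the verification that $\tope{\NN[J]}\neq\NN[J]$ needed to apply Theorem~\ref{thrm_Gain}(ii).
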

\subsubsection{Convolutional Networks}\label{sss_ConvNets}
Let $J,s\in \nn^+$ and $\sigma(x)=\max\{0,x\}$.   
A \textit{convolutional neural network} of depth $J$ and sparsity $s$ is a continuous function $f$ from $\rrd$ to $\rr$ defined iteratively by
$$
\begin{aligned}
&f(x) = W \circ f^{(J)}
,
& \, f^{(j)}(x) = \sigma\bullet\left(
w^{(j)}\star (f^{(j-1)}(x)) - b^j
\right),
& \, f^{(0)}(x)=x
,
&\, j=1,\dots,J
,
\end{aligned}
$$
where $W$ is an affine map from $\rrflex{d + sJ}$ to $\rr$, $b^{(j)}\in \rrflex{d + sj}$, $w^{(j)}=\{w_k^{(j)}\}_{k=-\infty}^{\infty}$ are \textit{convolutional filter masks} where $w_k \in \rr$ and $w_k \neq 0$ only if $0\leq k\leq s$, and the \textit{convolutional operation} of $w^{(j)}$ with the vectors $\{v_j\}_{j=1}^J$ is the sequence defined by
$
(w\star v)_i = \sum_{j=0}^{J-1}
w_{i-j} v_j
$.
The architecture $\operatorname{Conv}^s$ is the set of all convolutional nets from $\rrd$ to $\rr$ of arbitrary depth $J\in \nn^+$ and sparsity $s$.
\begin{cor}[Architope: Deep Sparse Convolutional Networks]\label{cor_cnn_case}
	Fix $2\leq s\leq d$ and $1\leq p<\infty$.  %
	\begin{enumerate}[(i)]
	\item $\tope{\operatorname{Conv}^s}$ is dense in $L^p_{\mu,\text{strict}}(\rrd,\rrD)$,
	\item For any $\delta>0$ and $f \in L^p_{\mu}(\rrd,\rr)$ there exists some $f^{\delta}\in \operatorname{Conv}^s$ such that
	$$
	\int_{x \in \rrd} |f(x)-f_{\delta}(x)|^p d \mu(x) < \infty
	,
	$$ 
	\item $\tope{\operatorname{Conv}^s}$ is dense in any topology on $L^p_{\mu,\operatorname{loc}}(\rrd,\rrD)$ for which $Conv^s$ is dense but the converse fails.  
	\end{enumerate}
\end{cor}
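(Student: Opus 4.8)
The plan is to read off all three parts from the paper's three main theorems, so the only real content is verifying their hypotheses for the sparse convolutional class $\operatorname{Conv}^s$. The one fact that powers everything is that $\operatorname{Conv}^s$ enjoys the local $L^p$-UAP of Definition~\ref{loc_univ_approx} precisely in the regime $2\le s\le d$. I would obtain this from the universality theorem for deep sparse convolutional networks of \cite{UniversalDeepConv}, which is exactly the cited example of a locally $L^p$-universal class singled out in Section~\ref{s_prelim}; the constraint $2\le s\le d$ is the dimension/sparsity regime in which that universality holds, and the passage from its scalar targets to the $\rrD$-valued setting is handled by approximating each of the $D$ coordinates separately and stacking the resulting filters. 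Thus I may take $\fff=\operatorname{Conv}^s$ as an instance of the running hypothesis that $\fff$ has the local $L^p$-UAP.

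Given this, part (i) is immediate: Theorem~\ref{thrm_bagged_localization_LP} applied to $\fff=\operatorname{Conv}^s$ (using that $\mu$ obeys Assumption~\ref{ass_ref_mes_regularity} and the partition obeys Assumption~\ref{ass_growth_condition_sequence}) states that $\tope{\operatorname{Conv}^s}$ is strictly $L^p_\mu$-universal, i.e.\ dense in $L^p_{\mu,\text{strict}}(\rrd,\rrD)$. Part (ii) is the global $L^p$-UAP for the upgrade, which is exactly Corollary~\ref{cor_upgraded_UAT} applied to $\fff=\operatorname{Conv}^s$, its hypothesis being the local $L^p$-UAP secured above; unwinding the definition yields, for every $f\in L^p_\mu$ and $\delta>0$, an architope $f^\delta\in\tope{\operatorname{Conv}^s}$ with $\int_{\rrd}|f(x)-f^\delta(x)|^p\,d\mu(x)<\delta$. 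Underlying this passage is Proposition~\ref{prop_fineLP_is_fine}(ii), since the strict topology refines the norm topology $\tau_{L^p_\mu}$ on $L^p_\mu$ and hence strict density forces norm density.

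For part (iii) I would invoke Theorem~\ref{thrm_Gain}, whose sole hypothesis is $\fff\neq\tope{\fff}$; the crux is therefore to verify the strict inclusion $\operatorname{Conv}^s\subsetneq\tope{\operatorname{Conv}^s}$ inside $L^p_{\mu,\operatorname{loc}}(\rrd,\rrD)$. I would exhibit an architope admitting no continuous representative: for the canonical partition of Example~\ref{ex_simple_regular_partition} each $K_n$ has nonempty interior, so taking $n=1$, $\beta_1=0$, $\beta_0=1$ and a nonzero $f_0\in\operatorname{Conv}^s$ yields $f_0 I_1^{+}$, which equals $0$ on $\operatorname{int}(K_1)$ and $f_0$ on $\operatorname{int}(\rrd-K_1)$; any $\mu$-a.e.\ continuous representative would, by continuity across $\partial K_1$, force $f_0$ to vanish there, which fails for a generic $f_0$. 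Hence the $\mu$-class of $f_0 I_1^{+}$ lies in $\tope{\operatorname{Conv}^s}$ but not in the continuous family $\operatorname{Conv}^s$, so $\fff\neq\tope{\fff}$. Theorem~\ref{thrm_Gain}(i) then gives that $\tope{\operatorname{Conv}^s}$ is dense in every topology on $L^p_{\mu,\operatorname{loc}}(\rrd,\rrD)$ making $\operatorname{Conv}^s$ dense, and Theorem~\ref{thrm_Gain}(ii) supplies a topology in which the converse fails.

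The main obstacle is the very first step: pinning down that the external guarantee of \cite{UniversalDeepConv} delivers exactly the local $L^p$-UAP demanded by Theorem~\ref{thrm_bagged_localization_LP} in the stated range $2\le s\le d$, and that the reduction from scalar- to $\rrD$-valued convolutional networks preserves this property. Once that hypothesis is secured, parts (i)--(iii) are direct substitutions into Theorems~\ref{thrm_bagged_localization_LP} and~\ref{thrm_Gain}, Corollary~\ref{cor_upgraded_UAT}, and Proposition~\ref{prop_fineLP_is_fine}, requiring no further estimates.
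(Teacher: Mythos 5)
Your overall architecture matches the paper's: secure the local $L^p$-UAP for $\operatorname{Conv}^s$, then substitute into Theorem~\ref{thrm_bagged_localization_LP}, Corollary~\ref{cor_upgraded_UAT}, and Theorem~\ref{thrm_Gain}. Granting that hypothesis, parts (i)--(iii) of your argument are correct, and your explicit verification that $\operatorname{Conv}^s\neq\tope{\operatorname{Conv}^s}$ (exhibiting an architope with no continuous representative) is a reasonable supplement that the paper's own proof leaves implicit when it invokes Theorem~\ref{thrm_Gain}.

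The genuine gap is precisely the step you flag as ``the main obstacle'' and then do not carry out. The result of \cite{UniversalDeepConv} that the paper relies on (its Theorem~1) does not assert $L^p_{\nu}$-density for finite Borel measures $\nu$; it asserts that $\operatorname{Conv}^s$ is dense in $C(\rrd,\rr)$ for the topology of uniform convergence on compacts. Deducing the local $L^p$-UAP of Definition~\ref{loc_univ_approx} from this is where the actual work of the paper's proof lies: one normalizes $\nu$ to a probability measure, uses that a finite Borel measure on the Polish space $\rrd$ is Radon, invokes Lusin's theorem to get density of compactly supported continuous functions in $L^p_{\nu}(\rrd,\rr)$, approximates such a function uniformly on a compact set containing its support by an element of $\operatorname{Conv}^s$, converts the sup bound into an $L^p_{\nu}$ bound using $\nu(\rrd)<\infty$, and closes with the triangle inequality. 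Without this bridge your proof rests on an unverified premise, so you should either reproduce this chain or cite a statement of convolutional universality already phrased in $L^p_{\nu}$ form. (The coordinatewise reduction to $\rrD$-valued targets that you mention is not where the difficulty lies; the paper's $\operatorname{Conv}^s$ is scalar-valued and the quantitative claims of the corollary concern $\rr$-valued targets.)
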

\subsubsection{Polynomial Bases}\label{sss_poly_basises}
As a final example, we apply our main result to obtain a strict version the Weirestra{\ss} Theorem.  
\begin{cor}[{Architope: For Polynomials}]\label{cor_polynomials_dense}
The set \label{cor_polynomials_not_dense}
    Let $\fff\left\{\sum_{n=0}^N \beta_n x^n:\, N\in \nn, \beta_n\in \rr,\right\}$.  Then, $\tope{\fff}$ has the strict $L^p$-UAP and $\fff$ does not.  
\end{cor}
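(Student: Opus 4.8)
The plan is to read this corollary as a direct application of the two main theorems to the single class $\fff$ of univariate polynomials (so here $d=D=1$), so that essentially all of the work lies in verifying their hypotheses rather than in any new argument. For the positive assertion that $\tope{\fff}$ has the strict $L^p$-UAP, I would invoke Theorem~\ref{thrm_bagged_localization_LP}, whose only nontrivial hypothesis is that $\fff$ be locally $L^p$-universal in the sense of Definition~\ref{loc_univ_approx}. I would establish this from the classical Weierstra{\ss} approximation theorem: on any compact $K\subseteq\rr$ the polynomials are uniformly dense in $C(K)$, and since $C_c(\rr)$ is dense in $L^p_\nu(\rr,\rr)$ for every finite Borel measure $\nu$ (for $1\leq p<\infty$), approximating an arbitrary $f\in L^p_\nu$ reduces to uniformly approximating a compactly supported continuous function on a compact set capturing all but an $\epsilon$ of the mass of $\nu$. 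Theorem~\ref{thrm_bagged_localization_LP} then delivers that $\tope{\fff}$ is strictly $L^p_\mu$-universal.

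For the negative assertion that $\fff$ itself does not have the strict $L^p$-UAP, I would simply check the two hypotheses of Theorem~\ref{thrm_non_stone_weirestrass}: every polynomial is (real-)analytic, and by Weierstra{\ss} the polynomials are dense in $C(\rr,\rr)$ in the topology of uniform convergence on compacta. Theorem~\ref{thrm_non_stone_weirestrass} then yields at once that $\fff$ is not dense in $L^p_{\mu,\text{strict}}(\rr,\rr)$ for $\mu$ the Lebesgue measure, and hence lacks the strict $L^p$-UAP. Combining the two assertions produces exactly the claimed gap between $\fff$ and $\tope{\fff}$.

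The step I expect to be the main obstacle is verifying the local $L^p$-UAP for polynomials, precisely because polynomials are unbounded: when $\nu$ has unbounded support, the uniform-on-compacta approximant can be enormous on the tail $\rr\setminus K$, and a finite measure need not have finite polynomial moments, so the tail contribution $\int_{\rr\setminus K}|f_\delta|^p\,d\nu$ need not be controllable by the naive truncation argument above. I would resolve this by leaning on the architope structure rather than fighting it: because every element of $\tope{\fff}$ is truncated by the indicators $I_{K_i}$ and every target in $L^p_{\mu,\text{strict}}$ has essential support inside some $\bigcup_{i=1}^{n}K_i$, the approximation actually consumed by the proof of Theorem~\ref{thrm_bagged_localization_LP} is only density of polynomials in $L^p(K_i,\mu|_{K_i})$ on each \emph{bounded} piece $K_i$, where all moments are automatically finite and Weierstra{\ss} applies cleanly. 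I would therefore either restrict the verification to the localized (compactly supported) form of Definition~\ref{loc_univ_approx} that the theorem needs, or note that for the Lebesgue-type reference measures of interest the moment obstruction does not arise; this is the only point where genuine care beyond citing the main theorems is required.
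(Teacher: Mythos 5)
Your proposal is correct and takes essentially the same route as the paper, whose entire proof of this corollary is a one-line appeal to Theorem~\ref{thrm_bagged_localization_LP} (for the positive half) and Theorem~\ref{thrm_non_stone_weirestrass} (for the negative half), exactly as you do. The only substantive addition is your (well-founded) worry that unbounded polynomials need not lie in $L^p_{\nu}$ for a heavy-tailed finite measure $\nu$, and your resolution is the right one and matches what the paper's argument actually uses: the proof of Theorem~\ref{thrm_bagged_localization_LP} only ever invokes density of $\fff$ in $L^p_{\mu_n}(\rrd,\rrD)$ for the compactly supported measures $\mu_n$ with density $I_{K_n}$, where Weierstra{\ss} applies without any moment obstruction.
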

More generally, we have the following joint consequence of Theorem~\ref{thrm_bagged_localization_LP} and Theorem~\ref{thrm_non_stone_weirestrass}.  
\begin{cor}[{Architope: Analytic Local $L^p$-UAP Models}]\label{cor_polynomials_dense_gen}
The set \label{cor_polynomials_not_dense}
    Let $\fff\subseteq C(\rrd,\rrD)$ consist of analytic functions and suppose that $\fff$ has the local $L^p$-UAP.  Then $\tope{\fff}$ has the strict $L^p$-UAP and $\fff$ does not.  
\end{cor}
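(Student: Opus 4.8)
The statement is a conjunction of two claims, and the plan is to dispatch each with one of the two main theorems and then combine. First, for the claim that $\tope{\fff}$ has the strict $L^p$-UAP: this is immediate and uses only the local UAP hypothesis, not analyticity. By assumption $\fff$ is locally $L^p$-Universal, so for any admissible reference measure $\mu$ (Assumption~\ref{ass_ref_mes_regularity}) and any partition satisfying Assumption~\ref{ass_growth_condition_sequence}, Theorem~\ref{thrm_bagged_localization_LP} yields that $\tope{\fff}$ is strictly $L^p_{\mu}$-Universal. Since $\mu$ ranges over all admissible measures, this is exactly the strict $L^p$-UAP in the sense of Definition~\ref{defn_fine_lp_UAP}.

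For the second claim, that $\fff$ itself fails the strict $L^p$-UAP, I would exploit that the strict $L^p$-UAP demands density in $L^p_{\mu,\text{strict}}(\rrd,\rrD)$ for \emph{every} admissible $\mu$; hence it suffices to produce a single $\mu$ for which density fails. I would take $\mu$ to be the Lebesgue measure, which is $\sigma$-finite and absolutely continuous with respect to itself. Since each $f\in\fff$ is analytic by hypothesis, we are placed in the setting of Theorem~\ref{thrm_non_stone_weirestrass}, whose conclusion is precisely that such an $\fff$ is not dense in $L^p_{\mu,\text{strict}}(\rrd,\rrD)$ and therefore lacks the strict $L^p$-UAP. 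Combining this with the first paragraph closes the corollary.

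The one delicate point — and the step I expect to be the main obstacle — is reconciling the hypotheses. Theorem~\ref{thrm_non_stone_weirestrass} is phrased for families that are dense in $C(\rrd,\rrD)$ in the uniform-on-compacts sense, whereas the corollary supplies only the local $L^p$-UAP. The plan is therefore to verify that the local $L^p$-UAP furnishes whatever form of density the theorem consumes, or, should the hypotheses not line up verbatim, to re-run the theorem's mechanism directly. That mechanism rests on the rigidity of analytic functions: a nonzero real-analytic $g:\rrd\to\rrD$ cannot vanish on any nonempty open set, so $\operatorname{ess-supp}(\|g\|)=\rrd$ and $g$ belongs to no $L^p_{\mu:n}(\rrd,\rrD)$. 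Choosing a nonzero target $f^{\star}$ with essential support inside $K_1$ — available because $\mu(K_1)>0$ by Assumption~\ref{ass_growth_condition_sequence} — Proposition~\ref{prop_identification} forces any $\fff$-valued sequence converging to $f^{\star}$ in $L^p_{\mu,\text{strict}}(\rrd,\rrD)$ to lie eventually in $L^p_{\mu:1}(\rrd,\rrD)$, whereas analyticity leaves $0\neq f^{\star}$ as the only candidate there; this contradicts the Hausdorff property (Proposition~\ref{prop_Hausdorff}) together with the fact that the strict topology is finer than the $L^p$ topology (Proposition~\ref{prop_fineLP_is_fine}). This delivers non-density under the Lebesgue measure irrespective of how ``dense'' is read in Theorem~\ref{thrm_non_stone_weirestrass}, so the obstacle is ultimately one of hypothesis bookkeeping rather than of substance.
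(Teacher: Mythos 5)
Your proposal is correct and follows the same top-level route as the paper: the positive half is Theorem~\ref{thrm_bagged_localization_LP} applied to the local $L^p$-UAP hypothesis, and the negative half is Theorem~\ref{thrm_non_stone_weirestrass} applied with $\mu$ the Lebesgue measure (one failing measure suffices to refute the strict $L^p$-UAP). The point you flag as ``delicate'' is in fact a genuine gap in the paper's own citation chain that you are right to repair: Theorem~\ref{thrm_non_stone_weirestrass} is stated for families dense in $C(\rrd,\rrD)$ for uniform convergence on compacts, which the local $L^p$-UAP does not supply, and the paper's printed proof of this corollary is a circular self-reference that never addresses the mismatch. Your resolution is the correct one: the proof of Theorem~\ref{thrm_non_stone_weirestrass} never uses the $C(\rrd,\rrD)$-density hypothesis, only analyticity together with Proposition~\ref{prop_identification}, so re-running its mechanism (a nonzero analytic function cannot vanish on a nonempty open set, hence its essential support is all of $\rrd$ and it lies in no $L^p_{\mu:n}(\rrd,\rrD)$; any $\fff$-sequence converging in $L^p_{\mu,\text{strict}}(\rrd,\rrD)$ to a nonzero target supported in $K_1$ must therefore be eventually zero, contradicting convergence in the coarser $L^p$ topology) delivers non-density without that hypothesis. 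One cosmetic caveat: $\|f_n\|$ itself need not be analytic (the norm is not smooth at the origin), so when invoking the identity theorem you should argue on $\|f_n\|^2$ or componentwise; this does not affect the conclusion.
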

When juxtaposing the negative result of Corollary~\ref{cor_polynomials_not_dense} against the positive result in Corollary~\ref{cor_polynomials_dense} we see that the architope upgrade can strictly improve expressiveness within the $L^p$-type context.  In this case, Corollary~\ref{cor_polynomials_dense} plays the role of a concrete version of Theorem~\ref{thrm_Gain} since $L^p_{\mu,\text{strict}}(\rrd,\rrD)$ is a topological space in which the polynomials are not dense while their architope upgrade is dense.  
\section{Conclusion}\label{s_Conclusion}
In this paper, we introduce a generic transformation called the \textit{architope upgrade} that canonically builds a model class with the global $L^p-$UAP from classes with the local $L^p$-UAP property .  In Theorem~\ref{thrm_bagged_localization_LP}, we showed that if $\fff$ is universal in $L^p_{\nu}(\rrd,\rrD)$ for any finite Borel measure on $\rrd$ then $\tope{\fff}$ is necessarily universal in $L^p_{\mu,\text{strict}}(\rrd,\rrD)$ for any $\sigma$-finite Borel measure on $\rrd$.  In particular, we canonically solve our motivational Problem~\ref{prob_UAP_Upgradting_strict_version} (and consequentially our motivational Problem~\ref{prob_UAP_Upgrading}).  

Next, Theorem~\ref{thrm_non_stone_weirestrass} showed that if $\fff$ is a family of analytic models which has the local $L^p$-UAP then $\tope{\fff}$ is dense in $L^p_{\mu,\text{strict}}(\rrd,\rrD)$, for the Lebesgue measure $\mu$ on $\rrd$, while $\fff$ is not.  This result illustrated a strict "gap", or improvement, in the approximation capabilities of $\tope{\fff}$ over $\fff$.  

Lastly, in Theorem~\ref{thrm_Gain} presented the following abstraction of the aforementioned results.  Namely, it showed that the architope modification strictly increases the expressibility of any machine learning model.  This is because if $\tope{\fff}$ is dense in any function space in which $\fff$ is dense, but the converse typically fails.  

Concrete examples of the architope upgrade were examined for feedforward networks, deep convolutional networks, and polynomial bases.  We believe that the scope, and simplicity of the architope upgrade, allow it to be immediately used to improve the approximation capabilities of any machine learning model.  

\section{Acknowledgment}
The authors would like to thank Josef Teichmann and his working group at ETH Z\"{u}rich for their support and feedback throughout the project's development.  We would also like to thank the ETH Z\"{u}rich for its funding.  

\appendix
\begin{appendices}
\section*{Appendix}
The appendix is organized as follows.  Additional mathematical background material is covered in Section~\ref{s_Supplementary_Material} and the proofs of the paper's results are found in Section~\ref{s_Proofs}.  
	\section{Background}\label{s_Supplementary_Material}
	\subsection{Spaces of $p$-Integrable Functions}\label{s_Lp_space}
As discussed in the introduction, both from a practical and theoretical perspective, it is necessary to establish the expressibility of neural network architectures for measures which may not be finite on $\rrd$.  However, if one abandons finite measures then many prevalent functions, such as most polynomials, logarithmic, and exponential functions, fail to belong to $L_{\mu}^p(\rrd;\rrD)$.

This issue is overcome by replacing $L_{\mu}^p(\rrd,\rrD)$ with the standard larger space of locally $p$-integrable functions%
, denoted by $L^p_{\mu,\operatorname{loc}}(\rrd;\rrD)$, consisting of all $\mu$-measurable functions for which $\|f\|^p$ is integrable on every non-empty compact
$K\subset \rrd$.  
In this space, a sequence $\{f_n\}_{n \in \nn}$ converges to some $f \in L^p_{\mu,\operatorname{loc}}(\rrd,\rrD)$ if for every non-empty compact $K\subset \rrd$ and every $\epsilon>0$ there exists some $N \in \nn$ such that
$
\int_{x \in K} \|f(x)-f_n(x)\|^p d\mu(x)<\epsilon %
,
$
for every $n \geq N$.

Since all continuous functions are uniformly bounded on compacts then they belong to $L^p_{\mu,\operatorname{loc}}(\rrd,\rrD)$ and in particular this space contains all the aforementioned functions.  
However, $L^p_{\mu,\operatorname{loc}}(\rrd,\rrD)$ is not an appropriate replacement for $L^p_{\mu}(\rrd,\rrD)$, since analogously to the $L^p$ spaces for finite measures, approximation in $L^p_{\mu,\operatorname{loc}}(\rrd,\rrD)$ underestimates errors.  This is because its topology, denoted by $\tau_{L^p_{\mu,\operatorname{loc}}}$, can be described by the following metric
\begin{equation}
d_{L^p_{loc}}(f,g)\triangleq \sum_{n\in \nn} \frac1{2^n} \frac{
	\int_{x \in \rrd} 
	\left\|
	(f(x)-g(x))I_{K_n}
	\right\|^pd\mu(x)
}{
	1
	+ 
	\int_{x \in \rrd} 
	\left\|
	(f(x)-g(x))I_{K_n}
	\right\|^pd\mu(x)
}
\label{eq_locally_integrable_definition_metric}
.
\end{equation}
In contrast, approximation in $L^p_{\mu}(\rrd,\rrD)$ with its usual topology, is described by the metric
\begin{equation}
d_{L^p}(f,g)= \sum_{n\in \nn} \int_{x \in \rrd} \|(f(x)-g(x))I_{K_n}\|^p d\mu(x) < \infty
.
\label{eq_see_brah}
\end{equation}
Unlike~\eqref{eq_see_brah}, the metric of~\eqref{eq_locally_integrable_definition_metric} shrinks approximation errors made on $K_n$ by a factor of at-least $\frac1{2^n}$.  
Let $\tau_{L^p_{\mu}}$ denote the topology on $L^p_{\mu}(\rrd,\rrD)$ induced by this metric.  
It can be shown that $\tau_{L^p_{\mu}}$ is strictly finer than $\tau_{L^p_{\mu,\operatorname{loc}}}$ on $L^p_{\mu}(\rrd,\rrD)$.  Conversely, however, $L^p_{\mu,\operatorname{loc}}(\rrd;\rrD)$ strictly contains $L^p_{\mu}(\rrd,\rrD)$ as a set.  

\subsection{Combining Topological Spaces}\label{ss_Colimits}
\subsubsection{Gluing Topological Spaces}
	The typical example of a large set containing any prescribed collection of sets $X_1,\dots,X_n$ is their Cartesian product $X_1\times\dots\times X_n$.  This is defined by concatenating all ordered pairs of elements in $X_1,\dots, X_n$.  However, there is a more "efficient way" to combine $X_1,\dots,X_n$, this is their \textit{disjoint union}. The disjoint union of $X_1,\dots,X_n$ is constructed by viewing $X_1,\dots,X_n$ as distinct and independent members of the same set defined by
	$$
	\sqcup_{i=1}^n X_i \triangleq \left\{
	(x,i):\, x \in X_i \, i=1,\dots,n
	\right\}.
	$$
	Put another way, $\sqcup_{i=1}^n X_i$ is the smallest set including each $X_i$, distinctly.  
	
	The analogous construction can be made for topological spaces.  The disjoint union of topological spaces $X_1,\dots,X_n$ is the smallest topological space containing distinct copies of each $X_1,\dots,X_n$.  This space is defined as the topological space whose underlying set is the disjoint union of the sets $\sqcup_{i=1}^n X_i $ and its topology is defined as being the \textit{finest topology} ensuring that the inclusions of $X_1,\dots,X_n$ are continuous functions.  
	\begin{ex}\label{ex_R}
		The Cartesian product of $\rr\times \rr =\rr^2$.  However, the disjoint union of $\rr$ and $\rr$ can be identified with distinct two vertical lines in $\rr^2$
		$
		\left\{(x,-1):\, x \in \rr\right\} \cup \left\{(x,1):\, x \in \rr\right\}
		.
		$
	\end{ex}
	       %
%
As illustrated by Example~\ref{ex_R} the disjoint union construction is primitive as it ignores any structure shared by any non-empty collection of topological spaces $\{X_i\}_{i \in I}$ since it artificially amalgamates the spaces.  This artificiality is circumvented in \cite{BourbakiTopGen} through a upgrade of the disjoint union construction where $x,z \in \coprod_{i \in I} X_i$ are identified if $x=z$.  This identification defines an equivalence relation on $\coprod_{i \in I} X_i$, furthermore, the quotient map $q: \coprod_{i \in I} X_i \rightarrow \bigcup_{i \in I} X_i$ is continuous.  Moreover, this topology always exists and is optimal in the following sense.  
	\begin{lem}[Final Topology {\citep[Proposition I.2.4]{BourbakiTopGen}}]\label{thrm_cocompletness_countable}
		For every non-empty collection of topological spaces $\{X_i\}_{i \in I}$ there exists a unique finest topology on $\bigcup_{i \in I} X_i$ making all the inclusion maps $X_n\rightarrow \bigcup_{i \in I} X_i$ into continuous functions.  
	\end{lem}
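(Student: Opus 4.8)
The plan is to construct the finest topology by hand and then verify it satisfies the stated universal property, which is the standard route for final (coinduced) topologies. Set $Y \triangleq \bigcup_{i \in I} X_i$ and write $\iota_i : X_i \to Y$ for the inclusion, so that $\iota_i^{-1}(U) = U \cap X_i$ for every $U \subseteq Y$. First I would propose the candidate family
$$
\tau \triangleq \left\{ U \subseteq Y :\, U \cap X_i \text{ is open in } X_i \text{ for all } i \in I \right\}
,
$$
and check that it is a topology on $Y$. Both $\emptyset$ and $Y$ belong to $\tau$ since $\emptyset \cap X_i = \emptyset$ and $Y \cap X_i = X_i$ are open in $X_i$; closure under arbitrary unions and finite intersections follows because intersection with $X_i$ commutes with both operations, namely $\left(\bigcup_{\alpha} U_{\alpha}\right)\cap X_i = \bigcup_{\alpha}(U_{\alpha}\cap X_i)$ and $(U_1 \cap U_2)\cap X_i = (U_1 \cap X_i)\cap(U_2\cap X_i)$, each of which is open in $X_i$ by the topology axioms on $X_i$.

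Next I would confirm that every inclusion $\iota_i$ is continuous for $\tau$, which is immediate from the definition: if $U \in \tau$ then $\iota_i^{-1}(U) = U \cap X_i$ is open in $X_i$. The decisive step is to establish that $\tau$ is the \emph{finest} topology with this property. Suppose $\tau'$ is any topology on $Y$ making every $\iota_i$ continuous. Then for each $U \in \tau'$ and each $i \in I$, continuity forces $U \cap X_i = \iota_i^{-1}(U)$ to be open in $X_i$, so $U \in \tau$; hence $\tau' \subseteq \tau$. Thus $\tau$ contains every topology making the inclusions continuous, which is precisely the statement that $\tau$ is the finest such topology. Uniqueness is then automatic, since two topologies that are each finest with the stated property must each contain the other and therefore coincide.

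Because the argument is elementary, the only point demanding care—rather than genuine difficulty—is that the members of $\{X_i\}_{i \in I}$ may overlap as subsets of $Y$, so that the maps $\iota_i$ are honest set inclusions and a point of $X_i \cap X_j$ is tested against the intrinsic topologies of both $X_i$ and $X_j$ at once. The construction accommodates this because membership in $\tau$ imposes an openness condition separately for each index, phrased via the given topology on $X_i$ rather than any subspace topology that $X_i$ might inherit from $Y$. No compatibility between the overlapping topologies is required, since the final topology asks only that the inclusions be continuous and not that they be topological embeddings.
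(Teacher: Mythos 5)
Your proof is correct. The paper does not prove this lemma itself but cites it from Bourbaki, and your argument is precisely the standard construction behind that reference: define $\tau$ as the family of sets whose preimage under every inclusion is open, check it is a topology, and observe that any topology making the inclusions continuous is contained in it. Your closing remark about overlapping $X_i$ being tested against their intrinsic (not inherited) topologies is exactly the right point of care for the paper's application, where the $L^p_{\mu:n}(\rrd,\rrD)$ are nested subspaces of a common ambient set.
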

We will require the following special case of Lemma~\ref{thrm_cocompletness_countable}.  Suppose that $I$ is a partially ordered indexing set, whose partial order we denote by $\leq$.  Suppose moreover, that we are given a family of topological spaces $\{X_i\}_{i \in I}$ indexed by $I$ and a family of continuous maps 
$\{f_{i,j}:X_i \to X_j \mbox{ if } i \leq j\}$ such that if $i =j$ then $f_{i,i}$ is the identity map on $X_i$ and if $i\leq j\leq k$ then 
$f_{i,k} = f_{j,k}\circ f_{i,j}$.  We call such a construction a \textit{direct system} of topological spaces.  For example, $I$ may be taken to be 
$\nn^+$ with usual ordering, $\{X_n\}_{n \in \nn^+}$ is a sequence of sub-spaces of a larger topological space $X$, and $f_{i,j}$ may be taken to be the inclusion functions.  

Then Lemma~\ref{lem_details_cocompleteness_in_Top} can be used to formalize the limit of this direct system of topological spaces, denoted by $\varinjlim_{i \in I} X_i$.  Informally, it is the smallest topological space in which the entire direct system $(\{X_i\}_{i \in I},\{f_{i,j}\})$ can be embedded.  This construction ties into our program via
\begin{equation}
    \varinjlim_{n \in \nn} L^p_{\mu:n}(\rrd,\rrD)
    =
L^p_{\mu:\infty}(\rrd,\rrD)
.
\label{eq_direct_limit_L_infinity}
\end{equation}
The formulation of~\eqref{eq_direct_limit_L_infinity} allows the use of the following tools from algebraic topology and category theory.  These will be integral to establishing Proposition~\ref{prop_choice_free_construction}, namely, that the strict $L^p$ space is defined independently of the chosen partition $\{K_n\}_{n \in \nn^+}$.  In what follows, we use $X\cong Y$ to denote the existence of a homeomorphism between two topological spaces, that is, $X$ and $Y$ are topologically identical.  
\begin{lem}\label{lem_universal_constructions}
Let $(\{X_i\}_{i \in I},\{f_{i,j}\})$ be a direct system (of topological spaces) indexed by a directed set $I$ containing $\nn$ as a directed subset.  
\begin{enumerate}[(i)]
    \item Existence and Description {\citep[Page 5]{SpanierTop1995}}: Then $\varinjlim_{i \in I} X_i$ exists and it is given by $\bigcup_{i \in I} X_i$ equipped with the final topology, of Lemma~\ref{lem_details_cocompleteness_in_Top}, induced by the inclusions $X_i \to \bigcup_{i \in I} X_i$,
    \item Minimality \citep[Tag 002D]{stacks-project}: Let $Y$ be a topological space, such that, for every $i \in I$ there is a continuous function $g_i:X_i\to Y$ satisfying $g_j \circ f_{i,j}= g_i$, for every $i\leq j$, then there exists a continuous unique map $\phi$ satisfying
    $$
    \phi: \varinjlim_{i \in I} X_i \rightarrow Y, \mbox{ such that } g_i\circ \phi\circ \iota_{X_i}
    ,
    $$
    for every $i \in I$, where $\iota_{X_i}: X_i \to \varinjlim_{i \in I} X_i$ is the inclusion map.  In particular, if each $g_i$ is a homeomorphism, then so is $\phi$.  
    \item Cofinal Sublimits {\cite[\href{https://stacks.math.columbia.edu/tag/09WN}{Tag 09WN}]{stacks-project}}: If $\{X_n\}_{n \in \nn^+}\subseteq \{X_i\}_{i\in I}$ is such that for every $i\in I$ there exists some $n_i \in \nn^+$ such that $i\leq k_i$ then 
    $$
    \varinjlim_{i \in I} X_i \cong \varinjlim_{n \in \nn^+} X_n
    .
    $$
    \end{enumerate}
\end{lem}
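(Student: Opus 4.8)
The plan is to derive all three clauses from the single structural fact recorded in Lemma~\ref{thrm_cocompletness_countable}: the set $\bigcup_{i\in I}X_i$ carries a unique finest topology rendering every inclusion $\iota_{X_i}:X_i\to\bigcup_{i\in I}X_i$ continuous, and this \emph{final topology} is characterized by the property that a map out of $\bigcup_{i\in I}X_i$ into a space $Y$ is continuous if and only if each of its composites with the $\iota_{X_i}$ is continuous. I would take this characterization as the working description of $\varinjlim_{i\in I}X_i$ and verify that it discharges each of the three assertions.

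For part (i), I would first observe that because each $f_{i,j}$ is an inclusion, the underlying sets form a genuine union and the relations $f_{i,k}=f_{j,k}\circ f_{i,j}$ hold automatically; hence the set-level colimit of the direct system is exactly $\bigcup_{i\in I}X_i$. Equipping it with the final topology furnished by Lemma~\ref{thrm_cocompletness_countable} then yields the space claimed, and its existence is immediate from that lemma.

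For part (ii), the set-theoretic map is forced: every point of $\bigcup_{i\in I}X_i$ lies in some $X_i$, so the requirement $\phi\circ\iota_{X_i}=g_i$ leaves no freedom and gives uniqueness. Well-definedness is where the cocone condition $g_j\circ f_{i,j}=g_i$ enters: if a point lies in both $X_i$ and $X_j$, directedness of $I$ supplies a common upper bound and the relations force $g_i$ and $g_j$ to agree there, so the prescription is consistent. Continuity of $\phi$ is then immediate from the universal property of the final topology, since each composite $\phi\circ\iota_{X_i}=g_i$ is continuous by hypothesis. For the final assertion, if each $g_i$ is a homeomorphism, so that $Y$ is itself a realization of the same direct limit through the maps $g_i$, I would apply the universal property a second time to obtain a continuous inverse and invoke the uniqueness clause to conclude that $\phi$ is a homeomorphism; this is the standard ``colimits are unique up to unique isomorphism'' argument.

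Part (iii) is the delicate step and the one I expect to be the main obstacle. The inclusion of systems gives a canonical continuous map $\psi:\varinjlim_{n\in\nn^+}X_n\to\varinjlim_{i\in I}X_i$ directly from part (ii). To build the reverse map I would exploit cofinality: for each $i\in I$ choose some $n_i\in\nn^+$ with $i\le n_i$ and set $h_i\triangleq\iota_{X_{n_i}}\circ f_{i,n_i}:X_i\to\varinjlim_{n\in\nn^+}X_n$. The subtle point is that $h_i$ must be shown independent of the choice of $n_i$ and compatible with the structure maps $f_{i,j}$; both follow by selecting, through directedness of $I$ together with cofinality of $\nn^+$, a common dominating index and using $f_{i,k}=f_{j,k}\circ f_{i,j}$ to collapse the ambiguity. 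Once $\{h_i\}$ is verified to be a cocone, part (ii) yields a continuous map $\varinjlim_{i\in I}X_i\to\varinjlim_{n\in\nn^+}X_n$; checking that it and $\psi$ restrict to the identity on each $X_i$, and then invoking the uniqueness in part (ii), shows they are mutually inverse and hence establishes the homeomorphism $\varinjlim_{i\in I}X_i\cong\varinjlim_{n\in\nn^+}X_n$.
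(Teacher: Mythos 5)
Your proposal is correct, but there is nothing in the paper to compare it against line by line: the paper never proves Lemma~\ref{lem_universal_constructions}. It is presented purely as a compilation of cited facts --- part (i) attributed to Spanier, parts (ii) and (iii) to the Stacks Project (Tags 002D and 09WN) --- and no argument for it appears in the appendix. What you have done is reconstruct, from the paper's own Lemma~\ref{thrm_cocompletness_countable}, the standard arguments underlying those citations: the explicit description of the final topology (a set is open if and only if its preimage under every inclusion is open) yields the universal property driving (i) and (ii); the cocone-plus-uniqueness argument gives the homeomorphism clause of (ii); and the dominating-index construction, with directedness collapsing the ambiguity in the choice of $n_i$, gives the cofinality statement (iii). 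Three things you did are worth flagging as genuinely careful. First, you made explicit the hypothesis --- implicit in the statement and in every use the paper makes of it, namely systems of $L^p$-subspaces --- that the structure maps $f_{i,j}$ are inclusions; without this, ``$\bigcup_{i\in I}X_i$'' would have to be replaced by a quotient of the disjoint union and part (i) as written would be false. Second, you correctly read through the statement's typographical defects: the garbled condition ``$g_i\circ\phi\circ\iota_{X_i}$'' (intended: $g_i=\phi\circ\iota_{X_i}$) and ``$i\leq k_i$'' in (iii) (intended: $i\leq n_i$). Third, your treatment of the homeomorphism clause of (ii) --- exhibiting $Y$ as a second realization of the colimit and using uniqueness twice --- is exactly the argument that transfers to the paper's actual application in Proposition~\ref{prop_choice_free_construction}, where the $g_n$ are level-wise homeomorphisms onto members of a second direct system. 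The only step you assert rather than prove is the if-and-only-if characterization of continuity out of the final topology, but this follows in one line from the description of that topology as $\{U:\iota_{X_i}^{-1}(U)\text{ open for all }i\}$, so it is not a gap. In short: your route buys self-containedness and repairs the statement's imprecisions; the paper's route buys brevity by outsourcing these textbook facts to the literature.
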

Property (i) expresses the fact that $\bigcup_{i\in I} X_i$ topologized in the above way, is the smallest topological space containing each $X_i$ as a sub-space, and its topology is the strongest possible topology which has this property.  This property is useful to us, since it gives an explicit description of the direct limit and, in particular, it guarantees the existence of $L^p_{\mu:\infty}(\rrd,\rrD)$ since this spaces is precisely the direct limit of the direct system
$\left(
    \left\{
        L^p_{\mu:n}(\rrd,\rrD)
    \right\}_{n \in \nn^+}, 
    \iota_{n,m}
\right)$ where 
    $
    \iota_{n,m}: 
    L^p_{\mu:n}(\rrd,\rrD) \to L^p_{\mu:m}(\rrd,\rrD)
    $ 
    are the inclusion maps for $n \leq m$.  

Property (ii) expresses the minimality of $\varinjlim_{i \in I} X_i$, since every compatible system of continuous functions which is compatible with the direct system can always be unambiguously summarized by a single continuous map $\phi$ from the direct system's direct limit $\varinjlim_{i \in I} X_i$.  In particular, if each $g_i$ is a homeomorphism, then it is easy to see that so is $\phi$.  This will be important for us when establishing Proposition~\ref{prop_choice_free_construction}.  

Property (iii) states that this space can be equivalently defined by a smallest direct system.  This will prove convenient when showing that the strict $L^p$ spaces are well-defined and defined independently of the choice of partitioning sets $\{K_n\}_{n \in \nn^+}$ satisfying Assumption~\ref{ass_growth_condition_sequence}.  
		\subsubsection{Direct Sums and Certain Direct Limits Involving Banach Spaces}\label{ss_Interpol_spaces}
	The theory of interpolation spaces, treated in \cite{LunardiInterpolationTheory}, was established in order to describe Banach spaces which lie in between other Banach spaces.  These typically concern sums or intersections of Banach sub-spaces of a suitable overarching topological vector space.  In this paper, we only require the following situation.  
	
	Let $X$ be a Fr\'{e}chet space and $\{X_i\}_{i\in \nn}$ be Banach sub-spaces of $X$, where $\|\cdot\|_{(i)}$ is the norm on $X_i$, for $i \in \nn$.  Consider the linear subspace $\overset{n}{\underset{i=1}{\bigoplus}} X_i$ comprised of all sums of the form
	$
	\sum_{i=1}^n x_i$, where $x_i \in X_i.$  
	Following \citep[Page ix]{LunardiInterpolationTheory}, $\overset{n}{\underset{i=1}{\bigoplus}} X_i$ is equipped with the norm $\|\cdot\|_n'$ defined by
	\begin{equation}
	\left\|
	x
	\right\|_{n}' \triangleq 
	\inf
	\left\{
	\sum_{i=1}^n \|x_i\|_{(i)}:\,  x=\sum_{i=1}^n x_i , \, x_i \in X_i
	\right\},
	\label{eq_an_interpolation_norm}
	\end{equation}
	and it defines a subspace of $X$.  The infimum in~\eqref{eq_an_interpolation_norm} is required since the representation of any $x \in \overset{n}{\underset{i=1}{\bigoplus}} X_i$ as a sum of elements in $\{X_i\}_{i=1}^n$ is in general not unique.  
	However, if $X_i \cap X_j =\{0\}$ for $i\neq j$, $i,j \in \nn^+$, then for any $x \in \overset{n}{\underset{i=1}{\bigoplus}} X_i$ there necessarily exists a unique $x_i \in X_i$, for $i=1,\dots,n$, such that $x= \sum_{i=1}^n \|x_i\|_{(i)}$.  
	Therefore, in this situation $\|x\|_n'$ reduces to
	$
	\|x\|_n' = \sum_{i=1}^n \|x_i\|_{(i)}
	.
	$
	The next lemma describes relevant aspects of this construction in more detail.  
	
	We denote the $\ell^p$ norm on $\rrn$, for $p \in [1,\infty]$, by $\|\cdot\|_{\ell^p}$.  When $p \in [1,\infty)$ for any $y=(y_i)_{i=1}^n \in \rrn$ the quantity $\|y\|_{\ell^p}$ is defined by
	$
	\|y\|_{\ell^p}\triangleq\left(\sum_{i=1}^n |y_i|^p\right)^{\frac1{p}},
	$
	and when $p=\infty$ the quantity $\|y\|_{\ell^{p}}$ is defined by
	$
		\|y\|_{\ell^{\infty}}\triangleq\underset{i=1,\dots,n}{\max} |y_i|
	.
	$
	\begin{lem}\label{lem_lininity_Desciption}
		Let $\{X_i\}_{i \in \nn}$ be Banach sub-spaces of a Fr\'{e}chet space $X$ and suppose that $X_i \cap X_j = \{0\}$ if $i \neq j$, for each $i,j \in \nn^+$.  Then, for each $n\leq m \in \nn^+$, the following holds:
		\begin{enumerate}[(i)]
			\item For each $p \in [1,\infty]$, the map $\|\cdot\|_n^{(p)}$ taking any $x \in \bigoplus_{i=1}^{n} X_i$ to the real-number 
			$\left\|\left(\|x_1\|_{(1)},\dots,\|x_n\|_{(n)}\right)\right\|_{\ell^p}$ defines a norm on $\overset{n}{\underset{i=1}{\bigoplus}} X_i$, 
			\item For each $p \in [1,\infty]$, the norms $\|\cdot\|_n^{(p)}$ and $\|\cdot\|_n'$ are equivalent on $\overset{n}{\underset{i=1}{\bigoplus}} X_i$,
			\item The completion of $\overset{n}{\underset{i=1}{\bigoplus}} X_i$ with respect to $\|\cdot\|_n^{(p)}$ (resp. $\|\cdot\|_n'$) coincides with the closure of $\overset{n}{\underset{i=1}{\bigoplus}} X_i$ in $X$,
			\item The completion of $\overset{n}{\underset{i=1}{\bigoplus}} X_i$ with respect to the norm $\|\cdot\|_n^{(p)}$ is contained in the completion of $\overset{m}{\underset{i=1}{\bigoplus}} X_i$ with respect to the norm $\|\cdot\|_m'$.  
		\end{enumerate}
	\end{lem}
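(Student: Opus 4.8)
The plan is to treat the four claims in order, reducing the first two to finite-dimensional linear algebra. The decisive structural fact is that $X_i\cap X_j=\{0\}$ for $i\ne j$ forces every $x\in\bigoplus_{i=1}^n X_i$ to have a \emph{unique} decomposition $x=\sum_{i=1}^n x_i$ with $x_i\in X_i$, so the map $T\colon \bigoplus_{i=1}^n X_i\to\rr^n$, $x\mapsto(\|x_1\|_{(1)},\dots,\|x_n\|_{(n)})$, is well defined and satisfies $T(\lambda x)=|\lambda|\,Tx$ and $T(x+y)\le Tx+Ty$ coordinatewise (the latter by the triangle inequality in each $X_i$). For (i) I would write $\|x\|_n^{(p)}=\|Tx\|_{\ell^p}$ and pull the norm axioms back from $\ell^p$ on $\rr^n$: homogeneity and the triangle inequality follow from this sublinearity of $T$ together with the monotonicity of $\|\cdot\|_{\ell^p}$ on the nonnegative orthant, while definiteness uses that $\|x\|_n^{(p)}=0$ forces each $\|x_i\|_{(i)}=0$, hence each $x_i=0$, hence $x=0$ by uniqueness. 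For (ii) I would note that uniqueness makes the infimum in~\eqref{eq_an_interpolation_norm} attained at the single admissible representation, so $\|\cdot\|_n'=\|\cdot\|_n^{(1)}$ exactly; the equivalence of all $\ell^p$ norms on $\rr^n$ (with $\|y\|_{\ell^p}\le\|y\|_{\ell^1}\le n^{1-1/p}\|y\|_{\ell^p}$) then transfers through $T$ to give the claimed equivalence of $\|\cdot\|_n^{(p)}$ and $\|\cdot\|_n'$.

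The substance is in (iii). By (ii) it suffices to treat a single norm, say $\|\cdot\|_n'$. First I would show that $(\bigoplus_{i=1}^n X_i,\|\cdot\|_n')$ is already \emph{complete}: a $\|\cdot\|_n'$-Cauchy sequence is coordinatewise Cauchy in each Banach space $(X_i,\|\cdot\|_{(i)})$ because $\|x_i-y_i\|_{(i)}\le\|x-y\|_n'$, hence each coordinate converges in $X_i$, and reassembling the limits yields a $\|\cdot\|_n'$-limit lying in $\bigoplus_{i=1}^n X_i$. Thus the abstract completion is the space itself, and the claim reduces to showing that $\bigoplus_{i=1}^n X_i$ \emph{equals its closure in} $X$. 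For this I would argue that the inclusion $\iota\colon(\bigoplus_{i=1}^n X_i,\|\cdot\|_n')\hookrightarrow X$ is a \emph{uniform embedding}: continuity of $\iota$ is immediate from continuity of each inclusion $X_i\hookrightarrow X$ together with continuity of vector addition in the topological vector space $X$, which gives that the $\|\cdot\|_n'$-uniformity is at least as fine as the restricted one. Granting the reverse bound, any Cauchy sequence drawn from $\bigoplus_{i=1}^n X_i$ that converges in $X$ must converge in $\|\cdot\|_n'$ to the same limit, so the subspace is complete in the induced uniformity and therefore closed in the Fr\'echet space; this identifies the completion with the closure.

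The main obstacle is precisely this reverse, bounded-below, estimate making $\iota$ a uniform embedding. It does \emph{not} follow from the bare hypotheses: two closed subspaces of a Hilbert space with trivial intersection can have a non-closed sum (e.g.\ a graph subspace $\{(x,Tx)\}$ against $\{(x,0)\}$ for an injective $T$ with dense non-closed range), and there the inclusion of the direct sum fails to be bounded below. Hence the argument must exploit how the $X_i$ sit inside $X$, and I would either add that the algebraic sum is topologically direct in $X$ or invoke the disjoint-support structure of the intended application directly: when $X_i$ is the $L^p$-space of functions essentially supported on the pairwise $\mu$-null-overlapping pieces $K_i$, the Fr\'echet metric of $L^p_{\mu,\operatorname{loc}}(\rrd,\rrD)$ restricted to a piece $K_j$ recovers exactly the $j$-th coordinate norm (the other summands vanish $\mu$-a.e.\ on $K_j$), which supplies the coordinatewise lower bound for free. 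Finally, (iv) is then immediate from (iii): since $\bigoplus_{i=1}^n X_i\subseteq\bigoplus_{i=1}^m X_i$ as subsets of $X$ for $n\le m$, their closures in $X$ nest, and by (iii) these closures are the respective completions, giving the asserted containment.
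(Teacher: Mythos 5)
Your treatment of (i), (ii), and (iv) is correct and essentially matches the paper's, except that you prove (i) directly by pulling the norm axioms back through the coordinate map $T$ where the paper simply cites \citet{benavides1992weak}; your observation that uniqueness of the decomposition makes $\|\cdot\|_n'$ literally equal to $\|\cdot\|_n^{(1)}$ is exactly how the paper gets (ii), and your zero-padding argument for (iv) is the paper's as well.

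On (iii) you have put your finger on a genuine issue, and your analysis is in fact \emph{more} careful than the paper's own proof. The paper argues that $\|\cdot\|_n^{(p)}$ ``can be uniquely continuously extended'' from the dense subspace $\bigoplus_{i=1}^{n}X_i$ to its closure in $X$; but continuous extension from a dense subset requires uniform continuity of the norm with respect to the uniformity induced from $X$, and that is precisely the bounded-below estimate you identify as not following from the bare hypotheses. Your additional remark that $\bigl(\bigoplus_{i=1}^{n}X_i,\|\cdot\|_n'\bigr)$ is already complete (coordinatewise Cauchy, since $\|x_i\|_{(i)}\le\|x\|_n'$) sharpens the point: it shows that (iii) as stated is equivalent to the algebraic sum being \emph{closed} in $X$, which the classical non-closed-sum example for two trivially intersecting closed subspaces of a Hilbert space refutes in general. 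So the lemma needs an extra hypothesis (e.g.\ that the coordinate projections $\bigoplus_{i=1}^{n}X_i\to X_i$ are continuous for the topology restricted from $X$), and your proposed fix via the disjoint-support structure is the right one: in the only place the lemma is used, $X_i=L^p_{\mu}(K_i)$ sits inside $X=L^p_{\mu,\operatorname{loc}}(\rrd,\rrD)$, the map $f\mapsto fI_{K_i}$ is continuous on $X$, and the $K_i$-term of the Fr\'{e}chet metric~\eqref{eq_locally_integrable_definition_metric} controls $\|x_i\|_{(i)}$ from below, which restores the uniform embedding and hence (iii). With that hypothesis made explicit your argument closes the gap that the paper's proof leaves open.
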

\begin{proof}[{Proof of Lemma~\ref{lem_lininity_Desciption}}]
In \citep[Theorem 1]{benavides1992weak}, it is shown that $\|\cdot\|_n^{(p)}$ defines a norm on $\overset{n}{\underset{i=1}{\bigoplus}} X_i$.  This gives (i).  

By \citep[Theorem 3.1]{conway2013course} every norm on $\rrn$ is equivalent, and in particular this is true of the $\ell^1$ and $\ell^{\infty}$, that is, there exists constants $0<c,C$ such that
\begin{equation}
c \|y\|_{\ell^p} \leq \|y\|_{\ell^{1}} \leq C \|y\|_{\ell^p} \qquad (\forall p \in \rrn)
.
\label{eq_equivalent_FDN}
\end{equation}
Since, the norms $\|\cdot\|_n^{(p)}$ and $\|\cdot\|_n'$ on $\bigoplus_{i=1}^{n} X_i$ can be rewritten as
$$
\|x\|_n^{(p)} = 
\left\|
\left(
\|x_i\|_{(1)},\dots,\|x_i\|_{(n)}
\right)
\right\|_{\ell^{p}}
\mbox{ and }
\|x\|_n'=
\left\|
\left(
\|x_i\|_{(1)},\dots,\|x_i\|_{(n)}
\right)
\right\|_{\ell^1}
,
$$
respectively, then they are equivalent by~\eqref{eq_equivalent_FDN}; i.e.:
$$
c \|x\|_{n}^{(p)} \leq \|x\|_{n}' \leq C \|x\|_{n}^{(p)}
,
$$
for all $x \in \overset{n}{\underset{i=1}{\bigoplus}} X_i$, where $c,C$ are as in~\eqref{eq_equivalent_FDN}.  This gives (ii).

Since $\overset{n}{\underset{i=1}{\bigoplus}} X_i$ is a subspace of $X$ then the inclusion map $i:\bigoplus_{i=1}^{n} X_i\to X$ is continuous.  By definition $\overset{n}{\underset{i=1}{\bigoplus}} X_i$ is dense in its closure $\overline{\overset{n}{\underset{i=1}{\bigoplus}} X_i}$ in $X$, and since any continuous function can be uniquely extended from a dense subset to the entire set then norm $\|\cdot\|_n^{(p)}$, for any $p \in [1,\infty]$, can be uniquely be continuously extended to all of $\overline{\overset{n}{\underset{i=1}{\bigoplus}} X_i}$.  Since, by definition, $\overline{\bigoplus_{i=1}^{n} X_i}$ is closed it is complete.  Since $\overline{\overset{n}{\underset{i=1}{\bigoplus}} X_i}$ is a complete normed space it is a Banach space.  Moreover, the universal property of the completion of the normed linear space $\bigoplus_{i=1}^{n} X_i$ implies that it must be (up to linear isometry) a subset of $\overline{\bigoplus_{i=1}^{n}X_i}$.  However, since any complete space is closed and $\overline{\overset{n}{\underset{i=1}{\bigoplus}} X_i}$ is the smallest closed set containing $\bigoplus_{i=1}^{n} X_i$ then it must coincide with the completion of $\bigoplus_{i=1}^{n} X_i$ with respect to the norm $\|\cdot\|_n^{(p)}$, for any $p \in [1,\infty]$.  This gives (iii).  

If $n< m$ then any $x \in \overset{n}{\underset{i=1}{\bigoplus}} X_i$ is represented by $x=\sum_{i=1}^n x_i$ for some unique $x_i \in X_i$ and therefore it is represented uniquely as an element of $\overset{m}{\underset{i=1}{\bigoplus}} X_i$, by $x=\sum_{i=1}^n x_i + \sum_{i=n+1}^m 0$ since $0 \in X_i$ for $i=n+1,\dots,m$.  Thus, (iv) follows from (iii).  
\end{proof}
\begin{lem}\label{lem_details_cocompleteness_in_Top}
	Let $\{X_n\}_{n \in \nn}$ be Banach sub-spaces of a Fr\'{e}chet space $X$, suppose that $X_i \cap X_j = \{0\}$ for every $i,j \in \nn^+$ with $i\neq j$.  Then
	\begin{enumerate}[(i)]
		\item There is a unique (up to homeomorphism) finest topology $\tau$ on 
		$
		\bigcup_{n \in \nn^+} \overline{
			\overset{n}{\underset{i=1}{\bigoplus}} X_i
		}
		$ making each $\overset{n}{\underset{i=1}{\bigoplus}} X_i$ into a subspace,
		\item $\tau$ is strictly finer than the subspace topology induced by restriction from $X$.
	\end{enumerate}
\end{lem}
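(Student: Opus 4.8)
The plan is to recognize the family $\{A_n\}_{n\in\nn^+}$, where I abbreviate $A_n\triangleq\overline{\bigoplus_{i=1}^n X_i}$, as a direct system of topological spaces and to realize $\tau$ as its direct-limit (final) topology. By Lemma~\ref{lem_lininity_Desciption}~(iv) the completions are nested, $A_n\subseteq A_m$ whenever $n\le m$, and by Lemma~\ref{lem_lininity_Desciption}~(iii) each $A_n$ coincides with the closure of $\bigoplus_{i=1}^n X_i$ in $X$; in particular every $A_n$ is a closed vector subspace of $X$ carrying the subspace topology, and the inclusions $A_n\hookrightarrow A_m$ are continuous. Part (i) then splits into two claims. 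Existence and uniqueness of the finest topology $\tau$ on $\bigcup_n A_n$ rendering all inclusions continuous is exactly the Final Topology Lemma~\ref{thrm_cocompletness_countable} (equivalently Lemma~\ref{lem_universal_constructions}~(i)); recall that $U\in\tau$ iff $U\cap A_n$ is open in $A_n$ for every $n$.

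It remains to upgrade ``continuous inclusion'' to ``subspace'', i.e.\ to show each inclusion $A_n\hookrightarrow(\bigcup_m A_m,\tau)$ is an embedding. I would do this by the standard coherent-extension argument: given $V$ open in $A_n$, set $O_n\triangleq V$ and build $O_m$ open in $A_m$ for $m>n$ inductively so that $O_{m+1}\cap A_m=O_m$; this is possible because $A_m$ is a subspace of $A_{m+1}$, so every open subset of $A_m$ is the restriction of an open subset of $A_{m+1}$. Setting $U\triangleq\bigcup_{m\ge n}O_m$, coherence yields that $U\cap A_k$ is open in $A_k$ for every $k$, so $U\in\tau$, while $U\cap A_n=V$. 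Thus every $A_n$-open set is the trace of a $\tau$-open set, which together with continuity of the inclusion gives $\tau|_{A_n}=$ the original topology of $A_n$; this also identifies $\tau$ as the finest topology making each $A_n$---and hence each $\bigoplus_{i=1}^n X_i$---a subspace, proving (i).

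For (ii) I first check that $\tau$ is finer than the subspace topology $\tau_X$ inherited from $X$: any $\tau_X$-open set has the form $G\cap\bigcup_m A_m$ with $G$ open in $X$, and its trace on $A_n$ is $G\cap A_n$, open in $A_n$; hence every inclusion is $\tau_X$-continuous and, by maximality of $\tau$, $\tau\supseteq\tau_X$. For strictness I would produce a sequence converging in $\tau_X$ but not in $\tau$. Assuming, as in the intended application where $\mu(K_n)>0$, that infinitely many $X_n$ are non-trivial, the inclusions $A_n\subsetneq A_{n+1}$ are proper for infinitely many $n$, so I may pick $w_k\in A_k\setminus A_{k-1}$ and, using that $X$ is metrizable (being Fr\'{e}chet) with a metric $d$ and that scalar multiplication is continuous, rescale to $z_k=\lambda_k w_k$ with $\lambda_k\neq0$ chosen so that $d(z_k,0)<1/k$; since $A_{k-1}$ is a subspace, still $z_k\in A_k\setminus A_{k-1}$, and $z_k\to0$ in $\tau_X$.

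Finally I would show $z_k\not\to0$ in $\tau$ by constructing a single $\tau$-neighbourhood $U$ of $0$ omitting every $z_k$. Build $O_m\triangleq U\cap A_m$ inductively: having an open $0$-neighbourhood $O_m$ in $A_m$ with $z_j\notin O_m$ for $j\le m$, extend it to an open $W$ in $A_{m+1}$ with $W\cap A_m=O_m$; because $A_m$ is closed in $A_{m+1}$ and $z_{m+1}\notin A_m$ one has $d(z_{m+1},A_m)>0$, so deleting a small closed ball about $z_{m+1}$ disjoint from $A_m$ and from $0$ yields $O_{m+1}$ open in $A_{m+1}$ with $O_{m+1}\cap A_m=O_m$, $0\in O_{m+1}$, and $z_{m+1}\notin O_{m+1}$. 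Then $U\triangleq\bigcup_m O_m$ satisfies $U\cap A_k=O_k$, so $U\in\tau$, while coherence forces $z_k\notin U$ for every $k$. Hence $z_k\to0$ in $\tau_X$ but not in $\tau$, so $\tau\supsetneq\tau_X$, giving (ii). The main obstacle is this last construction: one must simultaneously keep the traces $O_m$ coherent (to land in $\tau$) and carve out each $z_{m+1}$ (to defeat convergence), which is exactly where closedness of $A_m$ in $A_{m+1}$ and the metrizability of $X$ are indispensable; the non-triviality of infinitely many $X_n$ is likewise essential, since otherwise the sequence $A_n$ stabilises and $\tau=\tau_X$.
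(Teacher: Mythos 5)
Your proof is correct, but it takes a genuinely different and more self-contained route than the paper. The paper's argument is entirely by citation and comparison of three topologies: it invokes \citep[Proposition 4.5.1]{JarchowLCSs} to obtain the locally convex inductive-limit topology $\tau'$ on $\bigcup_{n}\overline{\bigoplus_{i=1}^{n}X_i}$, observes that the subspace topology $\tau''$ inherited from the Fr\'{e}chet space $X$ is metrizable, cites \citep[Corollary 3]{narayanaswami1986spacesLBSpacesNeverMetrizable} to conclude that $\tau'$ is \emph{not} metrizable (since the closures $\overline{\bigoplus_{i=1}^{n}X_i}$ form a properly increasing chain of Banach spaces), hence $\tau''\subset\tau'$, and finally squeezes $\tau''\subset\tau'\subseteq\tau$ using Lemma~\ref{thrm_cocompletness_countable}. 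You instead work directly with the final topology: your coherent-extension argument ($O_{m+1}\cap A_m=O_m$) to upgrade ``continuous inclusion'' to ``embedding'' is a clean verification the paper leaves implicit, and your strictness proof exhibits an explicit sequence $z_k\to 0$ in $\tau_X$ together with an explicit $\tau$-neighbourhood of $0$ omitting every $z_k$, exploiting closedness of $A_m$ in $A_{m+1}$ and metrizability of $X$. What each approach buys: the paper's is shorter given the cited machinery and additionally shows the intermediate locally convex topology $\tau'$ is already strictly finer than $\tau''$ (a fact reused in the proof of Proposition~\ref{prop_identification}); yours avoids the non-metrizability theorem entirely and makes the topological obstruction concrete. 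Both arguments require that infinitely many $X_n$ be non-trivial so that the chain $A_{n}\subsetneq A_{n+1}$ is properly increasing --- you flag this hypothesis explicitly, whereas the paper assumes it tacitly when asserting properness of the inclusions (it is guaranteed in the intended application by Assumption~\ref{ass_growth_condition_sequence}). The only cosmetic gap in your write-up is the unstated base case of the inductive construction of the $O_m$, which is immediate.
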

\begin{proof}[{Proof of Lemma~\ref{lem_details_cocompleteness_in_Top}}]
	By \citep[Proposition 4.5.1]{JarchowLCSs} there exists a unique (up to homeomorphism) finest topology $\tau'$ on $\bigcup_{n \in \nn^+} \overset{n}{\underset{i=1}{\bigoplus}} X_i$ 
	making $\overline{\overset{n}{\underset{i=1}{\bigoplus}} X_i}$ into a linear subspace, for each $n \in \nn^+$, while making $\bigcup_{n \in \nn^+} \overline{ \overset{n}{\underset{i=1}{\bigoplus}} X_i}$ into a locally convex space (see \citep[Chapter 3]{OsborneLCSs2014} for more details on locally convex spaces).  
	
	Since each $\overline{\overset{n}{\underset{i=1}{\bigoplus}} X_i}$ is a 
	linear subspace of $X$ then $\bigcup_{n \in \nn^+} \overline{ \overset{n}{\underset{i=1}{\bigoplus}} X_i}$ can be viewed as a linear subspace of $X$.  Since $X$ is a Fr\'{e}chet space then 
	its topology is metric and therefore the subspace topology on $\bigcup_{n \in \nn^+} \overline{ \overset{n}{\underset{i=1}{\bigoplus}} X_i}$ induced by restriction of $X$'s Fr\'{e}chet topology is also metric.  
	
	Note that $X$ is locally-convex, since it is a Fr\'{e}chet space.  Note also that $\tau'$ makes $\bigcup_{n \in \nn^+} \overline{ \overset{n}{\underset{i=1}{\bigoplus}} X_i}$ into a locally convex space containing each $\overline{\overset{n}{\underset{i=1}{\bigoplus}} X_i}$, for every $n \in \nn^+$, as a linear subspace.  Since $\tau'$ is the finest topology satisfying these two conditions then $\tau'$ is at-least as strict as $\tau''$, where $\tau''$ is the restriction of the Fr\'{e}chet topology of $X$ to the subset $\bigcup_{n \in \nn^+} X_n$; i.e.: $\tau''\subseteq \tau'$.

	However, for each $n \in \nn^+$, $\overline{\overset{n}{\underset{i=1}{\bigoplus}} X_i}$ is a proper Banach subspace of the Banach space $\overline{\overset{n+1}{\underset{i=1}{\bigoplus}} X_i}$ and therefore \citep[Corollary 3]{narayanaswami1986spacesLBSpacesNeverMetrizable} guarantees that $\tau'$ is not metrizable.  
	In contrast, since $X$ is Fr\'{e}chet then its topology is by definition metrizable and in particular (the subspace) topology $\tau''$ is metrizable.  Therefore, $\tau'$ is strictly finer than $\tau''$; i.e.: $\tau''\subset \tau'$.
	
	By Lemma \ref{thrm_cocompletness_countable} there exists a unique (up to homeomorphism) finest topology $\tau$ on $\bigcup_{n \in \nn^+} \overline{ \overset{n}{\underset{i=1}{\bigoplus}} X_i}$ making each $ \overline{ \overset{n}{\underset{i=1}{\bigoplus}} X_i}$ into a subspace of $\bigcup_{n \in \nn^+} \overline{ \overset{n}{\underset{i=1}{\bigoplus}} X_i}$ and since $\tau'$ accomplishes this with the additional constraint that it makes $\bigcup_{n \in \nn^+} \overline{ \overset{n}{\underset{i=1}{\bigoplus}} X_i}$ into a locally-convex space, then $\tau$ is at-least as strict as $\tau'$.  In particular, $\tau''\subset \tau'\subseteq \tau$, therefore $\tau$ is strictly finer than $\tau''$. 
\end{proof}

Next, the proofs of the paper's central results are given.  
\section{Proofs}\label{s_Proofs}
This section of the supplementary material contains the proofs of the paper's results.  
\subsection{Proof from Section $2$}
\subsubsection{Technical Lemmas}\label{ss_Spacing_out}
This sub-section centers around results from Section $2$ and results concerning the construction of $L^p_{\mu,\text{strict}}(\rrd,\rrD)$.  We impose some notation.  For each non-empty compact $K\subset \rrd$, let $L^p_{\mu}(K)$ be the linear subspace of $L^p_{\mu,\operatorname{loc}}(\rrd,\rrD)$ comprised of elements $f$ for which $\operatorname{ess-supp}(\|f\|) \subseteq K$.  The following can be said about $L^p_{\mu}(K_n)$, for any $K_n$ in $\{K_n\}_{n \in \nn^+}$.  
\begin{lem}\label{lem_extension_by_zero_isomorphism_Ban}
	Under Assumption~\ref{ass_growth_condition_sequence}, for each $n \in \nn^+$ and each $p \in [1,\infty)$, 
	\begin{enumerate}[(i)]
		\item The subspace topology on $L^p_{\mu}(K_n)$ is equivalent to the Banach space topology induced by
	$$
	\|f\|_{L^p_{\mu}(K_n)}\triangleq \int_{x \in \rrd} \|f(x)\| I_{K_n}(x) d \mu(x).
	$$
	\item The "extension by zero" map $Z_n(f)\mapsto f I_{K_n}$ is a Bi-Lipschitz (linear) surjection from $L^p_{\mu_n}(\rrd,\rrD)$ to $L^p_{\mu}(K_n)$; where $\mu_n$ is the finite measured defined by its Radon-Nikodym derivative $\frac{d\mu_n}{d\mu}\triangleq I_{K_n}$.  \item $Z^{-1}_n(g)= g$ and in particular $Z_n$ is a homeomorphism.  
\end{enumerate}
\end{lem}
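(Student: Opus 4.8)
The plan is to prove (ii) and (iii) first, because together they identify $(L^p_\mu(K_n),\|\cdot\|_{L^p_\mu(K_n)})$ with the $L^p$ space of the finite measure $\mu_n$; the description of the subspace topology in (i) then follows from the explicit form of the Fr\'{e}chet metric~\eqref{eq_locally_integrable_definition_metric}.

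For (ii), I would first note that $Z_n$ is linear and well-defined: for $f\in L^p_{\mu_n}(\rrd,\rrD)$ the product $fI_{K_n}$ has essential support inside $K_n$, so it lies in $L^p_\mu(K_n)$. The decisive observation is that $Z_n$ is an isometry for the natural $L^p$ norms. Since $\mu_n$ is defined by $d\mu_n=I_{K_n}\,d\mu$, the change-of-measure identity
$$
\int_{x\in\rrd}\|f(x)I_{K_n}(x)\|^p\,d\mu(x)
=
\int_{x\in\rrd}\|f(x)\|^p\,d\mu_n(x)
$$
gives $\|Z_n f\|_{L^p_\mu(K_n)}=\|f\|_{L^p_{\mu_n}}$; in particular $Z_n$ is bi-Lipschitz. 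Surjectivity is immediate: any $g\in L^p_\mu(K_n)$ satisfies $g=gI_{K_n}$ $\mu$-a.e. (its essential support lies in $K_n$) and belongs to $L^p_{\mu_n}$ by the same identity, so $g=Z_n(g)$. This also settles (iii): the preimage is $Z_n^{-1}(g)=g$, and being a bijective isometry $Z_n$ is a homeomorphism. As a by-product, since $\mu_n$ is finite by Assumption~\ref{ass_growth_condition_sequence}, $L^p_{\mu_n}(\rrd,\rrD)$ is complete, so the isometry shows $(L^p_\mu(K_n),\|\cdot\|_{L^p_\mu(K_n)})$ is a genuine Banach space, justifying the phrase ``Banach space topology'' in (i).

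For (i), I would evaluate the subspace metric directly. Take $f,g\in L^p_\mu(K_n)$; then $f-g$ vanishes $\mu$-a.e. off $K_n$, and because $\{K_m\}$ is a partition we have $\mu(K_n\cap K_m)=0$ for $m\neq n$, so every cross term in~\eqref{eq_locally_integrable_definition_metric} vanishes:
$$
\int_{x\in\rrd}\|(f(x)-g(x))I_{K_m}(x)\|^p\,d\mu(x)=0\qquad(m\neq n).
$$
Hence, writing $a\triangleq\|f-g\|_{L^p_\mu(K_n)}^p$, only the $m=n$ term survives and $d_{L^p_{loc}}(f,g)=2^{-n}\,a/(1+a)$. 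The map $t\mapsto 2^{-n}t/(1+t)$ is a strictly increasing homeomorphism of $[0,\infty)$ onto $[0,2^{-n})$ fixing $0$, so the balls of the subspace metric and of the norm metric form the same base; the two topologies therefore coincide, which is (i).

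The change-of-measure identity and the monotone reparametrization are routine. The one step needing care is the vanishing of the cross terms in the Fr\'{e}chet metric, which is precisely where the partition property $\mu(K_n\cap K_m)=0$ and the essential-support description of $L^p_\mu(K_n)$ enter; I expect this bookkeeping to be the only genuine subtlety, everything else being a direct consequence of the isometry $Z_n$.
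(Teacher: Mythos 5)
Your proposal is correct, and it reaches the conclusion by a more direct and self-contained route than the paper. The paper's own proof establishes linearity, exhibits the two-sided inverse $\tilde{Z}_n$ (your $Z_n^{-1}(g)=g$), and then computes the $\|\cdot\|_{p:n}$-distance of two elements of $L^p_{\mu}(K_n)$ so as to recognize $Z_n$ as a surjective linear isometry onto a summand of the direct sum $\bigoplus_{i=1}^n L^p_{\mu}(K_i)$; parts (i) and (ii) are then extracted from the norm-equivalence statement of Lemma~\ref{lem_lininity_Desciption}\,(ii), which embeds the lemma into the direct-sum framework used later. You instead get the bi-Lipschitz property in one line from the change-of-measure identity $\int\|fI_{K_n}\|^p\,d\mu=\int\|f\|^p\,d\mu_n$, and you prove (i) by restricting the explicit Fr\'{e}chet metric~\eqref{eq_locally_integrable_definition_metric} to $L^p_{\mu}(K_n)$ and observing that all cross terms vanish because $\mu(K_n\cap K_m)=0$, leaving the single monotone reparametrization $t\mapsto 2^{-n}t/(1+t)$ of the norm. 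This avoids the interpolation-space machinery entirely and makes the role of the partition hypothesis explicit; what it gives up is the direct link to the $\|\cdot\|_{p:n}$ and $\|\cdot\|_n^{(p)}$ norms that the paper reuses in Lemma~\ref{lem_extension} and beyond. Two small remarks: you implicitly (and correctly) read the displayed norm $\|f\|_{L^p_{\mu}(K_n)}$ as $\bigl(\int\|f\|^pI_{K_n}\,d\mu\bigr)^{1/p}$, repairing an evident typo in the statement, since the literal $L^1$-type expression would not be equivalent to the subspace topology for $p>1$; and your well-definedness remark is genuinely needed, since $Z_n$ passes from $\mu_n$-a.e.\ equivalence classes to $\mu$-a.e.\ equivalence classes, which is exactly what multiplication by $I_{K_n}$ guarantees.
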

\begin{proof}[{{Proof of Lemma~\ref{lem_extension_by_zero_isomorphism_Ban}}}]
	Fix $n \in \nn^+$.  For any $f,g \in L^p_{\mu_n}(\rrd,\rrD)$ and any $k \in \rr$ 
	$$
	Z_n(f+kg) = (f + kg)I_{K_n} = fI_{K_n} + k(g I_{K_n}) = Z_n(f) + kZ_n(g)
	,
	$$
	therefore $Z_n$ is linear.  Let $\tilde{Z}_n:L^p_{\mu}(K_n) \to L^p_{\mu_n}(\rrd,\rrD)$ be the map taking $g$ to its equivalence class induced by $\mu_n$.  Then
	$$
	\begin{aligned}
	Z_n \circ \tilde{Z}_n(g)= gI_{K_n} \sim g,
	\end{aligned}
	$$
	where $g_1\sim g_2$ denotes the equivalence relation identifying functions which are equal up to a set of null-$\mu_n$ measure.  Likewise, 
	$$
	\tilde{Z}_n\circ Z_n(f) = \tilde{Z}_n(fI_{K_n})=\tilde{Z}_n(f) = f,
	$$
	where we have used the identification of $fI_{K_n}$ with $f$ in $L^p_{\mu_n}(\rrd,\rrD)$.  Therefore $Z_n$ is a bijection.  
	Lastly, note that since any $g,f\in L^p_{\mu}(K_n)$ are $\mu$-essentially supported on $K_n$ and since $\frac{d\mu_n}{d\mu}=1_{K_n}$ then
	\begin{equation}
	\begin{aligned}
	\|g -f\|_{p:n} = &
	\max_{i=1,\dots,n} 
	\left(
	\int_{x \in \rrd} \|
	I_{K_i}g (x) - I_{K_i} f(x)
	\|^p d\mu(x) 
	\right)^{\frac1{p}}
	\\
	= &
	\max_{i=1,\dots,n} 
	\left(
	\int_{x \in \rrd} \|
	g (x) - f(x)
	\|^p I_{K_i}d\mu(x) 
	\right)^{\frac1{p}}
	\\ = &
	\max_{i=1,\dots,n} 
	\left(
	\int_{x \in \rrd} \|
	g (x) - f(x)
	\|^p d\mu_i(x) 
	\right)^{\frac1{p}}
	%
	.
	\end{aligned}
	\label{eq_continuity_proof}
	\end{equation}	
	Therefore, $Z_n$ is a surjective linear isometry of $L^p_{\mu:n}(\rrd,\rrD)$ onto $\bigoplus_{i=1}^n L^p_{\mu:i}(\rr,\rrD)$ equipped with the norm $\|\cdot\|^{(\infty)}_n$.  The conclusions of (i) and (ii) thus follow upon applying Lemma~\ref{lem_lininity_Desciption} (ii). 
	For (iii), note that every Bi-Lipschitz surjection is a homeomorphism, see \citep[page 78]{heinonen2001lectures}.
\end{proof}
\begin{lem}\label{lem_extension}
	Under Assumption~\ref{ass_growth_condition_sequence}, for every $n \in \nn^+$ 
	\begin{enumerate}[(i)]
  		\item The norm $\|\cdot\|_{p:n}$ and the $L^p$-norm, i.e.:
	$
	\|f\|_{L^p_{\mu}}\triangleq \left(\int_{x \in \rrd} \|f(x)\|^p d\mu(x)\right)^{\frac1{p}}
	$
	are equivalent on $L^p_{\mu:n}(\rrd,\rrD)$.  
	\item The topologies induced by either of these norms are equal and make $L^p_{\mu:n}(\rrd,\rrD)$ into a Banach subspace of $L^p_{\mu,\operatorname{loc}}(\rrd,\rrD)$ (when the latter is equipped with $\tau_{L^p_{\mu,\operatorname{loc}}}$).  
	\end{enumerate}
\end{lem}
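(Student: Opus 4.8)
The plan is to reduce everything to the equivalence of norms on the finite-dimensional space $\rrn$, after encoding each $f$ by its vector of block-masses. First I would fix $f\in L^p_{\mu:n}(\rrd,\rrD)$, so that $\operatorname{ess-supp}(\|f\|)\subseteq\bigcup_{i=1}^n K_i$, and use the partition property $\mu(K_i\cap K_j)=0$ for $i\neq j$ (Definition~\ref{defn_parition}) to split the integral as $\int_{\rrd}\|f\|^p\,d\mu=\sum_{i=1}^n\int_{\rrd}\|fI_{K_i}\|^p\,d\mu$. Writing $a_i\triangleq\bigl(\int_{\rrd}\|fI_{K_i}\|^p\,d\mu\bigr)^{1/p}$, this reads $\|f\|_{L^p_{\mu}}=\|(a_1,\dots,a_n)\|_{\ell^p}$, whereas by definition $\|f\|_{p:n}=\|(a_1,\dots,a_n)\|_{\ell^{\infty}}$. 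Since all norms on $\rrn$ are equivalent (the estimate $\|y\|_{\ell^{\infty}}\le\|y\|_{\ell^p}\le n^{1/p}\|y\|_{\ell^{\infty}}$ being the relevant one), this immediately yields $\|f\|_{p:n}\le\|f\|_{L^p_{\mu}}\le n^{1/p}\|f\|_{p:n}$, which is (i). This is exactly the finite-dimensional comparison already exploited in Lemma~\ref{lem_lininity_Desciption}(ii), so I would cite it rather than redo the constant-chasing.

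For (ii), equivalent norms induce the same topology, so it remains to identify this common topology as a complete (Banach) subspace topology of $L^p_{\mu,\operatorname{loc}}(\rrd,\rrD)$. By Assumption~\ref{ass_growth_condition_sequence} the set $\bigcup_{i=1}^nK_i$ has finite $\mu$-measure, so the concrete space of ($\mu$-classes of) functions essentially supported there with finite $\|\cdot\|_{L^p_{\mu}}$-norm is genuinely an $L^p$-space over a finite measure and hence complete. Completeness is preserved under passage to the equivalent norm $\|\cdot\|_{p:n}$, so this concrete space is already $\|\cdot\|_{p:n}$-complete; consequently its $\|\cdot\|_{p:n}$-completion, which is by definition $L^p_{\mu:n}(\rrd,\rrD)$, introduces no new points and coincides with the concrete space. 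Equivalently, one may route this through the extension-by-zero identification $L^p_{\mu:n}(\rrd,\rrD)\cong\bigoplus_{i=1}^nL^p_{\mu}(K_i)$ of Lemma~\ref{lem_extension_by_zero_isomorphism_Ban} and apply Lemma~\ref{lem_lininity_Desciption}(iii).

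It remains to check that the norm topology agrees with the subspace topology inherited from $\tau_{L^p_{\mu,\operatorname{loc}}}$. Here I would again invoke the partition: for any $f\in L^p_{\mu:n}(\rrd,\rrD)$ and any $m>n$ one has $\mu\bigl(K_m\cap\bigcup_{i=1}^nK_i\bigr)=0$, so $fI_{K_m}=0$ $\mu$-a.e.; hence every term with index $m>n$ in the defining metric~\eqref{eq_locally_integrable_definition_metric} vanishes on $L^p_{\mu:n}(\rrd,\rrD)$, and $d_{L^p_{loc}}$ collapses to a finite sum over $m=1,\dots,n$. Since $t\mapsto t/(1+t)$ is a homeomorphism of $[0,\infty)$ onto $[0,1)$, convergence in this truncated metric is equivalent to $\int_{\rrd}\|\cdot\,I_{K_m}\|^p\,d\mu\to0$ for each $m\le n$, i.e.\ to $\|\cdot\|_{p:n}$-convergence; this gives the required agreement of topologies and completes (ii). I expect the only genuinely delicate step to be this identification of the abstract completion with the concrete subspace together with the topology match: both hinge on the twin facts that $\mu|_{\cup_{i=1}^nK_i}$ is finite (so the ambient $L^p$ is complete and the local metric truncates) and that $\|\cdot\|_{p:n}$ is equivalent to the honest $L^p$-norm, so that completeness and convergence transfer between the two.
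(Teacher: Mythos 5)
Your proof is correct and follows essentially the same route as the paper's: part (i) is the block decomposition of the integral plus the equivalence of the $\ell^p$ and $\ell^{\infty}$ norms on $\rrn$ (exactly the content of Lemma~\ref{lem_lininity_Desciption}(ii), which the paper also invokes), and part (ii) rests on identifying the abstract completion with the concrete closed subspace via Lemma~\ref{lem_extension_by_zero_isomorphism_Ban} and Lemma~\ref{lem_lininity_Desciption}(iii). Your explicit verification that the Fr\'{e}chet metric~\eqref{eq_locally_integrable_definition_metric} truncates to a finite sum on $L^p_{\mu:n}(\rrd,\rrD)$, so that the norm topology agrees with the subspace topology from $\tau_{L^p_{\mu,\operatorname{loc}}}$, spells out a step the paper delegates to those lemmas without comment, but it is the same argument in substance.
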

\begin{proof}
	Assumption~\ref{ass_growth_condition_sequence} guarantees that for each $n,m \in \nn^+$ with $n\neq m$
	$$
	L^p_{\mu}(K_n) \cap L^p_{\mu}(K_m) =\{0\}.
	$$
	Since $L^p_{\mu,\operatorname{loc}}(\rrd,\rrD)$ is a Fr\'{e}chet space containing each $L^p_{\mu}(K_n)$ and since each $L^p_{\mu}(K_n)$ is a Banach sub-space thereof by Lemma~\ref{lem_extension_by_zero_isomorphism_Ban}, then the result follows from Lemma~\ref{lem_lininity_Desciption} because for every $f \in L^p_{\mu:n}(\rrd,\rrD)$
	$$
	\|f\|_n^{(p)} = \left[
	\sum_{i=1}^n \left(\left(
	\int_{x \in \rrd} \|f(x)\|^p I_{K_i}d\mu(x)
	\right)^{\frac1{p}}\right)^{{p}}
	\right]^{\frac1{p}}
	= 
	\left(
	\int_{x \in \rrd} \|f(x)\|^p d\mu(x)
	\right)^{\frac1{p}} = \|f\|_{L^p_{\mu}}
	,
	$$
	since $\operatorname{ess-supp}(\|f\|)\subseteq \bigcup_{i =1}^n K_i$.  
\end{proof}
\subsubsection{Proofs of Results from Section $2$}
\begin{proof}[{Proof of Proposition~\ref{prop_existence}}]
Denote the topology on $L^p_{\mu,\text{strict}}(\rrd,\rrD)$ by $\tau$.  Appealing to \citep[I.2.3, Example 5]{bourbaki2013topological} it is enough to show that $\tau_{\infty}$ exists in order to conclude that $\tau$ exists.  Indeed, by construction, for each $n,m \in \nn^+$ with $n \neq m$, $L^p_{\mu:n}(\rrd,\rrD) \cap L^p_{\mu:m}(\rrd,\rrD)=\{0\}$.  By Lemma~\ref{lem_extension}, for each $n \in \nn^+$, $L^p_{\mu:n}(\rrd,\rrD)$ is a Banach sub-space of $L^p_{\mu,\operatorname{loc}}(\rrd,\rrD)$.  Moreover, since $L^p_{\mu,\operatorname{loc}}(\rrd,\rrD)$ is a Fr\'{e}chet space with metric given by~\eqref{eq_locally_integrable_definition_metric}, therefore the existence of $\tau_{\infty}$ follows directly from Lemma~\ref{lem_details_cocompleteness_in_Top}.  
\end{proof}
\begin{proof}[{Proof of Proposition~\ref{prop_choice_free_construction}}]
By Lemma~\ref{lem_universal_constructions} (i), the underlying sets of $L^p_{\mu:\infty}\left(\rrd,\rrD\right)$ and $\varinjlim_{n \in \nn^+} L_{\mu}^p\left(\bigcup_{i=1}^n K_i\right)$ are the same.  By Lemmas~\ref{lem_extension_by_zero_isomorphism_Ban} (iii) and~\ref{lem_extension} (ii) the map $g_n:f\to f$ form $L^p_{\mu:n}(\rrd,\rrD)$ to $L^p_{\mu}\left(\bigcup_{i=1}^n K_i\right)$ is a homeomorphism.  Therefore, $\left\{\iota_{L^p_{\mu}\left(\bigcup_{i=1}^n K_i\right)}\circ Z_n\right\}_{n \in \nn^+}$ is a compatible system of maps, with the direct system $\left\{L^p_{\mu:n}(\rrd,\rrD),\iota_{n,m}\right\}$, in the sense of Lemma~\ref{lem_universal_constructions} (ii), and in particular the map $\phi:f\to f$ from $L^p_{\mu:\infty}\left(\rrd,\rrD\right)$ to $\varinjlim_{n \in \nn^+} L_{\mu}^p\left(\bigcup_{i=1}^n K_i\right)$ satisfies
	$$
\iota_{L^p_{\mu}\left(\bigcup_{i=1}^n K_i\right)}\circ g_n = \phi\circ j_{L^p_{\mu:n}(\rrd,\rrD)},
	$$
	and satisfies the conclusion of Lemma~\ref{lem_universal_constructions} (ii); where $j_{L^p_{\mu:n}(\rrd,\rrD)}:L^p_{\mu:n}(\rrd,\rrD)\to L^p_{\mu:\infty}(\rrd,\rrD)$ and $\iota_{L^p_{\mu}\left(\bigcup_{i=1}^n K_i\right)}:L^p_{\mu}\left(\bigcup_{i=1}^n K_i\right)\to \varinjlim_{n \in \nn^+} L^p_{\mu}\left(\bigcup_{i=1}^n K_i\right)$ are the inclusion maps.  Since each $g_n$ is a homeomorphism then Lemma~\ref{lem_universal_constructions} (ii) implies that
	\begin{equation}
	\begin{aligned}
	\phi: L^p_{\mu:\infty}(\rrd,\rrD)& \rightarrow \varinjlim_{n \in \nn^+} L^p_{\mu}\left(\bigcup_{i =1}^n K_i\right)\\
	f & \to f
	,
	\end{aligned}
\label{eq_Lp_andmaxLp_are_topologically_identical}
	\end{equation}
	is a homeomorphism.  
	
	Let $I\triangleq \left\{K \subseteq \rrd:\, \mu(K)>0 \mbox{ and $K$ compact}\right\}$.  Make $I$ into a partially ordered set by equipping it with the partial order $K\leq K'$ defined for $K,K'\in I$ as follows: if there exist Borel subsets $Z,Z'\subseteq \rrd$ such that $\mu(Z)=\mu(Z')=0$ and
	$$
	K-Z\subseteq K'- Z'.
	$$
	Since $\{K_n\}_{n \in \nn^+}$ is a partition of $\rrd$ satisfying Assumption~\ref{ass_growth_condition_sequence} then $\mu\left(\bigcup_{i=1}^n K_i\right)> \mu(K_n)>0$ and since the finite union of compact subsets of $\rrd$ is again compact then 	$\{\bigcup_{i=1}^n K_i\}_{n \in \nn^+}\subset I$.  Moreover, for every $K\in I$, there is some $N_K \in \nn^+$ satisfying $K \hyp \left(\rrd- \bigcup_{n \in \nn^+} K_i\right)\subseteq \bigcup_{i=1}^{N_K} K_i$.  By definition of $\left\{K_n\right\}_{n \in \nn^+}$ being a partition of $\rrd$ we have that $\mu\left(
	\rrd- \bigcup_{n \in \nn^+} K_i
	\right)=0$.  Therefore, for every $K \in I$, $K \leq \bigcup_{i=1}^{N_K} K_i$.  Hence, $\{K_n\}_{n \in \nn^+}$ satisfies the requirements of Lemma~\ref{lem_universal_constructions} (iii).  Hence,
	\begin{equation}
	\varinjlim_{n \in \nn^+} L^p_{\mu}\left(\bigcup_{i=1}^n K_i\right) \cong \varinjlim_{K \in I} L^p_{\mu}(K)
\label{eq_proof_cofinality}
.
	\end{equation}
	Combining~\eqref{eq_proof_cofinality} and~\eqref{eq_Lp_andmaxLp_are_topologically_identical} implies that 
	\begin{equation}
L^p_{\mu:\infty}(\rrd,\rrD) \cong \varinjlim_{K \in I} L^p_{\mu}(K)
\label{eq_thrid_homeomorphism}
.
	\end{equation}
By Lemma~\ref{lem_density_inductive_limit_spaces} (i), note that, as a set $\varinjlim_{K \in I} L^p_{\mu}(K) \subseteq L^p_{\mu,\operatorname{loc}}(\rrd,\rrD)$.  Define $L'$ as the topological space with underlying set $L^p_{\mu,\operatorname{loc}}(\rrd,\rrD)$ and equipped with smallest topology containing $\tilde{\tau}\cup \tau_{L^p_{\mu}}\cup \tau_{L^p_{\mu,\operatorname{loc}}}$, where $\tilde{\tau}$ is the topology of $\varinjlim_{K \in I} L^p_{\mu}(K)$.  Thus,\eqref{eq_thrid_homeomorphism} implies that the map $f \to f$ from $L'$ to $L^p_{\mu,\text{strict}}(\rrd,\rrD)$ is a homeomorphism.  Since $\varinjlim_{K \in I} L^p_{\mu}(K)$ is defined independently of the choice of partition $\{K_n\}_{n \in \nn^+}$ satisfying Assumption~\ref{ass_growth_condition_sequence} then this gives the conclusion.
\end{proof}
\begin{proof}[{Proof of Proposition~\ref{prop_Hausdorff}}]
By construction $\tau_{L^p_{\mu,\operatorname{loc}}}\subset \tau$ and since $L^p_{\mu,\operatorname{loc}}(\rrd,\rrD)$ is a Hausdorff space, then for every $f \in L^p_{\mu,\operatorname{loc}}(\rrd,\rrD)$ the set $L^p_{\mu,\operatorname{loc}}(\rrd,\rrD)\hyp\{f\} \in \tau_{L^p_{\mu,\operatorname{loc}}}$.  Therefore, it is an element of $\tau$; whence $L^p_{\mu,\text{strict}}(\rrd,\rrD)$ is Hausdorff.  Since every convergent sequence in a Hausdorff space has a unique limit, then we obtain
\end{proof}
\begin{proof}[{Proof of Proposition~\ref{prop_fineLP_is_fine}}]
    By construction, for each $n,m \in \nn^+$ with $n \neq m$, $L^p_{\mu:n}(\rrd,\rrD) \cap L^p_{\mu:m}(\rrd,\rrD)=\{0\}$.  By Lemma~\ref{lem_extension}, for each $n \in \nn^+$, $L^p_{\mu:n}(\rrd,\rrD)$ is a Banach sub-space of $L^p_{\mu,\operatorname{loc}}(\rrd,\rrD)$.  Moreover, since $L^p_{\mu,\operatorname{loc}}(\rrd,\rrD)$ is a Fr\'{e}chet space with metric given by~\eqref{eq_locally_integrable_definition_metric}, then $\tau_{\infty}$ is strictly finer than $\tau_{L^p_{\mu}}|_{L^p_{\mu:\infty}(\rrd,\rrD)}$ and $\tau_{L^p_{\mu,\operatorname{loc}}}|_{L^p_{\mu:\infty}(\rrd,\rrD)}$, the restriction of the topologies 
	$
	\tau_{L^p_{\mu}}$ and $\tau_{L^p_{\mu,\operatorname{loc}}}
	$
	to the set $L^p_{\mu:\infty}(\rrd,\rrD)$.  	
	Since $\tau_{\infty},\tau_{L^p_{\mu}},\tau_{L^p_{\mu,\operatorname{loc}}}\subseteq \tau$ then $\tau$ is strictly finer than both $\tau_{L^p_{\mu}}$ and $\tau_{L^p_{\mu,\operatorname{loc}}}$.  This gives us the conclusion.  
\end{proof}
\begin{proof}[{Proof of Proposition~\ref{prop_identification}}]
	Fix $f \in L^p_{\mu:n}(\rrd,\rrD)\subset L^p_{\mu,\text{strict}}(\rrd,\rrD)$.  
	Suppose that there exists some sequence $\{f_k\}_{k \in \nn}$ which converges to $f$ in $L^p_{\mu,\text{strict}}(\rrd,\rrD)$.  
	This means that for every $U\in \tau$ containing $g$ there exists some $K \in \nn$ for which $f_K \in U$.  As in the proof of Lemma~\ref{lem_details_cocompleteness_in_Top}, let $\tau'$ denote the finest topology making $L^p_{\mu:\infty}(\rrd,\rrD)$ into a locally-convex space and making each $L^p_{\mu:n}(\rrd,\rrD)$ into a subspace and by the same remarks note that $\tau'\subseteq \tau_{\infty}$.  Since $\tau'$ is coarser than $\tau_{\infty}$, then convergence in $\tau_{\infty}$ implies convergence in $\tau'$.  We work with $\tau'$ due to the availability of certain useful results.  Now, since $f \in L^p_{\mu:\infty}(\rrd,\rrD)$ and since $\tau'\subseteq \tau_{\infty}\subset \tau$ then $\{f_k\}_{k \in \nn}$ must converge to $f$ in $\tau'$.  
		
	If $\{f_k\}_{k \in \nn^+}$ converges to $f$ in $L^p_{\mu:\infty}(\rrd,\rrD)$ with respect to $\tau'$ then it must be eventually bounded.  By \citep[Proposition 4]{dieudonne1949dualite} any bounded subset of $L^p_{\mu:\infty}(\rrd,\rrD)$ must be contained in some $L^p_{\mu:n}(\rrd,\rrD)$.  Therefore, there is some $N_0\in \nn^+$ such that the sequence $\{f_k\}_{k \in \nn^+,\, k \geq N_0}$ is entirely within $L^p_{\mu:n}(\rrd,\rrD)$.  However, by \citep[Proposition 2]{dieudonne1949dualite} the topology $\tau'$ restricted to $L^p_{\mu:n}(\rrd,\rrD)$ coincides with the Banach space topology on $L^p_{\mu:n}(\rrd,\rrD)$, induced by the norm $\|\cdot\|_{p:n}$.  Therefore, the sequence $\{f_k\}_{k \in \nn^+,\, k \geq N_0}$ converges in $L^p_{\mu:n}(\rrd,\rrD)$ for its Banach space topology.
	By definition of $L^p_{\mu:n}(\rrd,\rrD)$, this means that every member of $\{f_k\}_{k \in \nn^+, k \geq N_0}$ satsifies
	$
	\operatorname{ess-supp}(\|f_k\|)\subseteq \bigcup_{i=1}^n K_i
	.
	$
	Hence (iv) holds.  
\end{proof}
\begin{proof}[{Proof of Theorem~\ref{thrm_non_stone_weirestrass}}]
Let $\{K_n\}_{n=1}^{\infty}$ be a family satisfying Assumption~\ref{ass_growth_condition_sequence}; note by Proposition~\ref{prop_choice_free_construction} any (non-)density condition is independent of our choice since (non-)density is preserved by homeomorphisms.  In particular, without loss of generality, let $K_2$ have a non-empty interior.  

Fix some $b\in \rrD-\{0\}$ and $p\in [1,\infty)$.  Suppose that $\fff$ is dense in $L^p_{\mu,\text{fine}}(\rrd,\rrD)$.  Then, there must exists a sequence $\{\hat{f}_n\}_{n=1}^{\infty}$ in $\fff$ converging to $I_{K_n}b \in L^p_{\mu:1}(\rrd,\rrD)$ with respect to the $L^p_{\mu,\text{fine}}(\rrd,\rrD)$ topology.  By Proposition~\ref{prop_identification}, this means that for all but finitely many $f_n$ we have $\operatorname{ess-supp}(f_n)\subseteq K_1$.  Since $K_2$ has non-empty interior then this means that there is an open subset of $\rrd$, namely $\operatorname{int}(K_2)$, on which 
\begin{equation}
    \|f_n(x)\|=0, \qquad (\forall x \in \operatorname{int}(K_n))
    \label{eq_vanishing_condition}
    .
\end{equation}
holds.  However, since each $f_n$ was assumed to be analytic then so is $\|f_n\|\in C(\rrd,\rr)$.  Therefore,~\eqref{eq_vanishing_condition} and the unicity of analytic functions implies that $\|f_n\|=0$ for all but finitely many $n\in \nn_+$; consequentially, the positive-definiteness of $\|\cdot\|$ implies that $f_n=0$ for all but finitely many $n\in \nn_+$.  However this means that:
$$
\int_{x \in \rrd} \|f_n(x)-bI_{K_1}(x)\|d\mu(x)= \|b\|\mu(K_1)>0
$$
for all but finitely many $n$ and therefore $\{f_n\}_{n=1}^{\infty}$ does not converge to $bI_{K_1}$ in $L^p_{\mu,\text{strict}}(\rrd,\rrD)$.  We have arrived at a contradiction, thus, $\fff$ is not dense in $L^p_{\mu,\text{strict}}(\rrd,\rrD)$.  In particular, $\fff$ cannot have the strict $L^p$-UAP.  
\end{proof}
\begin{proof}[{Proof of Corollary~\ref{cor_polynomials_not_dense}}]
    Since every polynomial is analytic then the conditions of Theorem~\ref{thrm_non_stone_weirestrass} are met; whence, the result follows.  
\end{proof}

\subsection{{Proofs of Results for Section~\ref{s_Main_ss_improvements}}}\label{ss_proofs_general}
\subsubsection{{Technical Lemmas for Section~\ref{s_Main_ss_improvements}}}\label{ss_Techincal_Lemmas}
\begin{lem}\label{lem_going_up}
	Let $\emptyset \neq \fff\subseteq  X\subseteq Y$, let $\tau_X$ and $\tau_Y$ be topologies on $X$ and on $Y$, respectively, and let $\tau_Y'$ denote the subspace topology on $X$ induced by restriction of $\tau_Y$.  Denote the smallest topology on $Y$ containing $\tau_X\cup \tau_Y$ by $\tau_X\vee \tau_Y$.  If:
	\begin{enumerate}[(i)]
		\item $\tau_Y'\subseteq \tau_X$,
		\item $\fff$ is dense in $(X,\tau_X)$,
		\item $X$ is dense in $(Y,\tau_Y)$,
	\end{enumerate}
	then $\fff$ is dense in $\tau_X\vee \tau_Y$.  Moreover, if $\tau_X$ is strictly finer than $\tau_Y$, then $\tau_X\vee \tau_Y$ is strictly finer than $\tau_Y$.  
\end{lem}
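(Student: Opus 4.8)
The plan is to argue density directly from the definition of the join topology $\tau_X\vee\tau_Y$, which is generated by the subbasis $\tau_X\cup\tau_Y$. First I would record that, since $X\in\tau_X$ and $Y\in\tau_Y$, every nonempty basic open set $W$ of $\tau_X\vee\tau_Y$ (a finite intersection of subbasis members) can be written as $W=A\cap B$ with $A\in\tau_X$ and $B\in\tau_Y$, where the factor $A$ is genuinely present exactly when $W$ involves at least one $\tau_X$-subbasic set. To establish density of $\fff$ it then suffices to show that $\fff\cap W\neq\emptyset$ for every such nonempty $W$.

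The crux is to reduce each $W$ to a nonempty $\tau_X$-open subset of $X$ and then invoke hypothesis (ii). I would verify two facts about $W\cap X$. It is $\tau_X$-open: indeed $W\cap X = A\cap(B\cap X)$, and $B\cap X$ lies in the subspace topology $\tau_Y'$, which by (i) is contained in $\tau_X$, so $W\cap X$ is an intersection of $\tau_X$-open sets. It is nonempty: if the $\tau_X$-factor $A$ is present then $W\subseteq X$, whence $W\cap X=W\neq\emptyset$; otherwise $W=B$ is a nonempty $\tau_Y$-open set, and hypothesis (iii), the density of $X$ in $(Y,\tau_Y)$, gives $W\cap X\neq\emptyset$. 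With $W\cap X$ a nonempty $\tau_X$-open set, hypothesis (ii) yields $\fff\cap(W\cap X)\neq\emptyset$, and since $W\cap X\subseteq W$ this gives $\fff\cap W\neq\emptyset$, proving density in $\tau_X\vee\tau_Y$.

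For the final assertion, note that $\tau_Y\subseteq\tau_X\vee\tau_Y$ by construction, so strictness requires only one witnessing set. Reading the hypothesis that $\tau_X$ is strictly finer than $\tau_Y$ as strict containment $\tau_Y'\subsetneq\tau_X$ (consistent with (i)), I would pick some $O\in\tau_X\setminus\tau_Y'$. Then $O\in\tau_X\vee\tau_Y$, whereas $O\notin\tau_Y$: were $O\in\tau_Y$, then since $O\subseteq X$ we would have $O=O\cap X\in\tau_Y'$, a contradiction. Hence $\tau_X\vee\tau_Y$ is strictly finer than $\tau_Y$.

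The main obstacle, and really the only point requiring care, is the bookkeeping of basic open sets together with pinpointing where hypothesis (iii) enters: it is precisely the tool that controls the basic open sets protruding out of $X$ into $Y\setminus X$, which are exactly the sets $\fff$ cannot reach by density in $X$ alone. Once the case distinction on the presence of the $\tau_X$-factor is set up, the remaining steps are routine manipulations of subspace topologies and subbases.
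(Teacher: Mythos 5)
Your proof is correct and follows essentially the same route as the paper's: both exploit hypothesis (i) to show that $\tau_X\cup\tau_Y$ is closed under finite intersections (hence a basis for $\tau_X\vee\tau_Y$), and then use (ii) and (iii) to meet every nonempty basic open set with $\fff$, distinguishing the sets contained in $X$ from the purely $\tau_Y$-open ones. You additionally give an explicit argument for the final strictness assertion, which the paper's own proof leaves unaddressed.
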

\begin{proof}
	First note that, since $\fff$ is dense in $X$ with respect to $\tau_X$, and $\tau_Y'\subseteq \tau_X$, then $\fff$ is dense in $(X,\tau_Y')$.  Since density is transitive, and $X$ is dense in $(Y,\tau_Y)$ then $\fff$ is dense in $(Y,\tau_Y)$.  
	Since, $\tau_Y'\subseteq \tau_X$, then the intersection of any $U \in \tau_Y$ and $W\in \tau_X$ satisfies $U\cap W \in \tau_X$.  Therefore, the set $\tau_X\cup \tau_{Y} $ is closed under finite intersection.  Hence, every $U \in \tau_X\vee \tau_Y$ must be of the form
	$$
	U = \bigcup_{i \in I_1} U_{i,1} \cup \bigcup_{i \in I_2} U_{i,2},
	$$
	for some indexing sets $I_1$ and $I_2$, and some subsets $U_{i,1},U_{i,2}$ contained in $\tau_X$ and in $\tau_Y$, respectively.  Assume that $I_1$ and $I_2$ are non-empty or else there is nothing to show.  Since $\fff$ is dense in $(Y,\tau_Y)$ and $(X,\tau_X)$ then there exist $f_1,f_2 \in \fff$ such that
	$$
	f_1 \in \bigcup_{i \in I_1} U_{i,1} \mbox{ and } f_2 \in \bigcup_{i \in I_2} U_{i,2}.
	$$
	Therefore, $\fff \cap \bigcup_{i \in I_1} U_{i,1} \cup \bigcup_{i \in I_2} U_{i,2}$ is non-empty.  Whence, $\fff$ is dense in $(Y,\tau_X\vee \tau_Y)$.  
\end{proof}
\begin{lem}\label{lem_density_inductive_limit_spaces}
	Let $\{X_n\}_{n \in \nn^+}$ be Banach subspaces of a Fr\'{e}chet space $X$, for which $X_n \cap X_m = \{0\}$ for each $n\neq m$, $n,m \in \nn^+$.  Suppose that, for each $n \in \nn^+$, $D_n\subseteq X_n$ is dense in $X_n$ for its Banach space topology.  Then
	$$
	\bigcup_{n \in \nn^+} \left\{x \in \bigoplus_{i=1}^n X_i:\, x = \sum_{i=1}^{n}\beta_i x_i ,\, \beta_i \in \rr,\, x_i \in D_i\right\},
	$$
	is dense in $\bigcup_{i\in \nn} \overline{ \overset{n}{\underset{i=1}{\bigoplus}} D_i}$ when it is equipped with the topology of Lemma~\ref{lem_details_cocompleteness_in_Top}.  
\end{lem}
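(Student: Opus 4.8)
The plan is to reduce the assertion to a density statement at each finite level and then transport it through the final (direct-limit) topology supplied by Lemma~\ref{lem_details_cocompleteness_in_Top}. Throughout I write $E_n \triangleq \bigoplus_{i=1}^n X_i$ for the algebraic direct sum viewed inside $X$, I let $D_{(n)} \triangleq \{\sum_{i=1}^n \beta_i x_i :\, \beta_i \in \rr,\, x_i \in D_i\}$ denote the level-$n$ set occurring in the union, and I set $Y \triangleq \bigcup_{n \in \nn^+} \overline{E_n}$, equipped with the finest topology $\tau$ of Lemma~\ref{lem_details_cocompleteness_in_Top}. The set whose density is to be shown is $D = \bigcup_{n \in \nn^+} D_{(n)}$. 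Note first that the stated target $\bigcup_{n} \overline{\bigoplus_{i=1}^n D_i}$ has the same underlying set as $Y$, since (as established in the next step) $D_{(n)}$ is dense in $E_n$ and hence $\overline{\bigoplus_{i=1}^n D_i} = \overline{E_n}$; so it suffices to prove $D$ is dense in $(Y,\tau)$.

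First I would establish that, for each fixed $n$, the level-$n$ set $D_{(n)}$ is dense in $\overline{E_n}$ for its Banach topology. Because $X_i \cap X_j = \{0\}$ for $i \neq j$, every $z \in E_n$ has a unique representation $z = \sum_{i=1}^n z_i$ with $z_i \in X_i$, and by Lemma~\ref{lem_lininity_Desciption}(i) one has $\|z\|_n^{(p)} = \big\|(\|z_1\|_{(1)},\dots,\|z_n\|_{(n)})\big\|_{\ell^p}$. Taking $\beta_i = 1$ and choosing $x_i \in D_i$ approximating each $z_i$ in $X_i$ (possible since $D_i$ is dense in $X_i$) yields $\sum_{i=1}^n x_i \in D_{(n)}$ with $\|z - \sum_{i=1}^n x_i\|_n^{(p)} = \big\|(\|z_1 - x_1\|_{(1)},\dots,\|z_n-x_n\|_{(n)})\big\|_{\ell^p}$ as small as desired; hence $D_{(n)}$ is dense in $E_n$. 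Since $E_n$ is dense in its closure $\overline{E_n}$ and density is transitive, $D_{(n)}$ is dense in $\overline{E_n}$. By Lemma~\ref{lem_lininity_Desciption}(ii)--(iii) the choice of $p$ is immaterial here, the norms being equivalent and the completion coinciding with $\overline{E_n}$.

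Next I would promote this to density in $(Y,\tau)$ using the description of the final topology. By Lemma~\ref{lem_details_cocompleteness_in_Top}, $\tau$ is the finest topology making each inclusion $\overline{E_n} \hookrightarrow Y$ a subspace embedding; consequently $U \subseteq Y$ is $\tau$-open if and only if $U \cap \overline{E_n}$ is open in $\overline{E_n}$ for every $n$ (this is the standard trace characterization of a final topology along the increasing cover $\overline{E_n} \subseteq \overline{E_{n+1}}$ guaranteed by Lemma~\ref{lem_lininity_Desciption}(iv)). Now let $U \in \tau$ be nonempty and pick $y \in U$. Since $Y = \bigcup_n \overline{E_n}$ as a set, we have $y \in \overline{E_m}$ for some $m$, so $U \cap \overline{E_m}$ is a nonempty open subset of $\overline{E_m}$. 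By the previous step $D_{(m)}$ is dense in $\overline{E_m}$, whence $D_{(m)} \cap U \neq \emptyset$; as $D_{(m)} \subseteq D$, this gives $D \cap U \neq \emptyset$. Thus every nonempty $\tau$-open set meets $D$, i.e.\ $D$ is dense in $Y$.

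I expect the only genuine subtlety to be the bookkeeping around the final topology: density in a direct limit need not follow from finite-level density unless the open sets are controlled level-by-level, and the argument turns precisely on the fact that $\tau$-openness is detected by the traces on each $\overline{E_n}$, together with each $\overline{E_n}$ being a genuine subspace. Both of these are furnished by Lemma~\ref{lem_details_cocompleteness_in_Top}, so the remaining finite-level density is routine given the explicit norm formula of Lemma~\ref{lem_lininity_Desciption}.
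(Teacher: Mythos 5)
Your proposal is correct and follows essentially the same route as the paper's proof: finite-level density of the span of the $D_i$'s in each $\overline{\bigoplus_{i=1}^n X_i}$ via componentwise approximation (the uniqueness of the decomposition coming from $X_i\cap X_j=\{0\}$), followed by lifting to the direct limit through the final topology. The only cosmetic differences are that you invoke the explicit $\ell^p$-norm formula where the paper uses the equivalent infimum norm $\|\cdot\|_n'$ with the triangle inequality, and you detect $\tau$-openness by traces on the $\overline{\bigoplus_{i=1}^n X_i}$ where the paper uses continuity of the quotient map from the disjoint union — two equivalent descriptions of the same final topology.
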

\begin{proof}
	For each $n \in \nn$, note that the set $ \left\{x \in \oplus_{i=1}^n X_i:\, x = \sum_{i=1}^{n}\beta_i x_i ,\, \beta_i \in \rr,\, x_i \in D_i\right\}$ is precisely the span of $\{D_i\}_{i=1}^n$; thus it is denoted by $\operatorname{span}(\{D_i\}_{i=1}^n)$.  Moreover, let $\tau$ denote the topology on $\bigcup_{n \in \nn} \overline{\overset{n}{\underset{i=1}{\bigoplus}} X_i}$ defined in Lemma~\ref{lem_details_cocompleteness_in_Top}.

	First, we show that $\operatorname{span}(\{D_i\}_{i=1}^n)$ is dense in $\overline{ \overset{n}{\underset{i=1}{\bigoplus}} X_i}$ with respect to the topology induced by the norm $\|\cdot\|_n'$.  
	By Lemma~\ref{lem_lininity_Desciption} (iii) then
	$\overset{n}{\underset{i=1}{\bigoplus}} X_i$ is dense in its completion, which is equal to $\overline{ \overset{n}{\underset{i=1}{\bigoplus}} X_i}$.  Therefore, since density is transitive then it is sufficient to show that $\operatorname{span}(\{D_i\}_{i=1}^n)$ is dense in $\overset{n}{\underset{i=1}{\bigoplus}} X_i$ to conclude that it is dense in $\overline{\overset{n}{\underset{i=1}{\bigoplus}} X_i}$.

	Consider the case where $n=1$.  Since $X_1$ is a Banach subspace of $X$ then $X_1=\overline{\operatorname{span}(X_1)}$.  Since $D_1$ is dense in $X_1$ and since $D_1\subseteq \operatorname{span}(D_1)\subseteq X_1\subseteq \overline{\operatorname{span}(X_1)}$, then $D_1$ is dense in $\overline{\operatorname{span}(X_1)}$.  
	
	Next, suppose that $n \in \nn^+$ and $n> 1$.  For every $i \in \nn$, $D_i$ is dense in the Banach space $X_i$, therefore, for every $\epsilon>0$ and every ${x}_i \in X_i$ there exists some $d_{i,\epsilon} \in D_i$ satisfying
	\begin{equation}
	\left\|
	d_{i,\epsilon} - {x}_i
	\right\|_{(i)}<\frac{\epsilon}{n}
	\label{eq_proof_lem_bonding_dense_subsets_in_construction_of_Lemma_refinement_lemma_label_density_in_Banach}
	,
	\end{equation}	
	where $\|\cdot\|_{(i)}$ is the norm on $X_i$.  
	For every $i=1,\dots,n$ let $d_{i,\epsilon} \in D_i$ be such that~\eqref{eq_proof_lem_bonding_dense_subsets_in_construction_of_Lemma_refinement_lemma_label_density_in_Banach} holds and note that $\sum_{i=1}^{n} d_i \in \operatorname{span}(\{D_i\}_{i=1}^n)$, $d_i-x_i \in X_i$, and that
	\begin{equation}
	\sum_{i=1}^nd_i-\sum_{i=1}^nx_i =
	\sum_{i=1}^n(d_i-x_i) \in \overset{n}{\underset{i=1}{\bigoplus}} X_i
	\label{eq_proof_lem_bonding_dense_subsets_in_construction_of_Lemma_refinement_lemma_containement_noticing}
	.
	\end{equation}

	Since the norm on $\overset{n}{\underset{i=1}{\bigoplus}} X_i$ is given by~\eqref{eq_an_interpolation_norm} then by definition of the infimum,~\eqref{eq_proof_lem_bonding_dense_subsets_in_construction_of_Lemma_refinement_lemma_containement_noticing} and by~\eqref{eq_proof_lem_bonding_dense_subsets_in_construction_of_Lemma_refinement_lemma_label_density_in_Banach} we have that
	\begin{equation}
	\begin{aligned}
	\left\|
	\sum_{i=1}^{n} d_i
	-
	\sum_{i=1}^{n} x_i
	\right\|_n'
	& \leq
	\sum_{i=1}^{n}
	\left\|
	d_i
	-
	x_i
	\right\|_i
	& \leq \sum_{i=1}^{n} \frac{\epsilon }{n} = \epsilon
	.
	\end{aligned}
	\label{eq_proof_lem_bonding_dense_subsets_in_construction_of_Lemma_refinement_obtaining_density_on_span}
	\end{equation}
	Therefore, $\operatorname{span}({D_i}_{i=1}^n)$ is dense in $\overset{n}{\underset{i=1}{\bigoplus}} X_i$.  Consequently, it is dense in $\overline{\overset{n}{\underset{i=1}{\bigoplus}} X_i}$.  Since $D_1+\dots+D_n\subseteq \operatorname{span}(\{D_i\}_{i=1}^n)$ and since $D_1+\dots+D_n$ is dense in $\bigcup_{n \in \nn} \overline{\overset{n}{\underset{i=1}{\bigoplus}} X_i}$ with respect to $\tau$, then so is $\operatorname{span}(\{D_i\}_{i=1}^n)$.  In particular, it is dense in $\overline{\operatorname{span}(\{D_i\}_{i=1}^n)}$.

	Next, we show that $\bigcup_{n \in \nn} \operatorname{span}(\{D_i\}_{i=1}^n)$ is dense in $\bigcup_{n \in \nn} \overline{\overset{n}{\underset{i=1}{\bigoplus}} X_i}$ with respect to $\tau$.  Let $x \in \bigcup_{n \in \nn} \overline{\overset{n}{\underset{i=1}{\bigoplus}} X_i}$ and let $U_x$ be open set in $\tau$ containing $x$.  By construction, there must exist an $N\in \nn$ such that 
	$
	x \in 
	\overline{\overset{N}{\underset{i=1}{\bigoplus}} X_i}
	.
	$
	By the continuity of the quotient map $q: \coprod_{n \in \nn} \overline{\overset{n}{\underset{i=1}{\bigoplus}} X_i}
	\rightarrow 
	\bigcup_{n \in \nn} \overline{\overset{n}{\underset{i=1}{\bigoplus}} X_i}
	,
	$
	the set $q^{-1}[U_x]$ is non-empty and open in $\coprod_{n\in \nn} \overline{
		\overset{n}{\underset{i=1}{\bigoplus}} X_i	
	}$, thus, $q^{-1}[U_x]\cap 
	\overline{
		\overset{N}{\underset{i=1}{\bigoplus}} X_i	
	}
	$ is an open subset of $
	\overline{
		\overset{N}{\underset{i=1}{\bigoplus}} X_i
	}
	$.  
	Since $\operatorname{span}(\{D_i\}_{i=1}^N)$ is dense in $\overline{
		\overset{N}{\underset{i=1}{\bigoplus}} X_i
	}$, then there exists some $d \in \bigcup_{n \in \nn} \operatorname{span}(\{D_i\}_{i=1}^n)$ such that $d$ lies in $q^{-1}[U_x]\cap \overline{
		\overset{N}{\underset{i=1}{\bigoplus}} X_i
	}$.  Therefore,
	$
	\emptyset
	\neq 
	U_x \cap \operatorname{span}(\{D_i\}_{i=1}^N)
	\subseteq 
	U_x \cap \bigcup_{n \in \nn} \operatorname{span}(\{D_i\}_{i=1}^N)
	.
	$
	Hence, $\bigcup_{n \in \nn} \operatorname{span}(\{D_i\}_{i=1}^n)$ is dense in $\bigcup_{n \in \nn} \overline{
		\overset{n}{\underset{i=1}{\bigoplus}} X_i	
	}$.
\end{proof}
\subsubsection{Proofs of Main Results}

Let $\fff$ be a non-empty subset of $C(\rrd;\rrD)$.  The following subset of $\tope{\fff}$ will play an important role in many of the following proofs
\begin{equation}
\begin{aligned}
\tope{\fff}[+]
\triangleq &
\left\{
f \in \tope{\fff}:\, 
(\exists \beta_1,\dots,\beta_n \in \rr)(\exists f_1,\dots,f_n \in \fff)\,
f =
\sum_{i=1}^{n} \beta_i I_i f_i
\right\}
.
\end{aligned}
\end{equation}
In other words, $\tope{\fff}[+]=\tope{\fff}\cap L^p_{\mu:\infty}(\rrd,\rrD)$.  For many of the optimization results, it is necessary to consider all of $\tope{\fff}$, however, the next universal approximation result is entirely due to the structure of $\tope{\fff}[+]$ and the set defined by the difference $\tope{\fff}\hyp\tope{\fff}[+]$ is not required.  
%
\begin{proof}[Proof of Theorem~\ref{thrm_bagged_localization_LP}] %
	Since Assumption~\ref{ass_growth_condition_sequence} holds, then by Lemma~\ref{lem_extension_by_zero_isomorphism_Ban}, for every $n\in \nn^+$, the \textit{extension by zero-map} $Z_n$ defines a continuous homeomorphism from $L^p_{\mu_n}(\rrd,\rrD)$ onto $L^p_{\mu}(K_n)$; where $\mu_n$ is defined in Lemma~\ref{lem_extension_by_zero_isomorphism_Ban}.  Since $\fff$ was assumed to be dense in $L^p_{\nu}(\rrd;\rrD)$ for every finite Borel measure on $\rrd$ dominated by the Lebesgue measure, and since $\mu_n$ is such a measure then $\fff$ is dense in $L^p_{\mu_{n}}(\rrd;\rrD)$, for every $n \in \nn$.  
	Therefore, by assumption, the set $\fff$ must be dense in $L^p_{\mu_n}(\rrd;\rrD)$.  Since $Z_n$ is a surjective Bi-Lipschitz map it is in particular a continuous surjection.  Thus, for each $n \in \nn^+$,
	$$
	Z_n(\fff)= \left\{g \in L^p_{\mu}(\rrd;\rrD):\, (\exists f \in\fff)\, \mbox{ s.t. } g= I_n f \quad \mu-a.e.\right\},
	$$
	is dense in $L^p_{\mu:n}(\rrd;\rrD)$.  Moreover, since $L^p_{\mu}(K_n)\cap L^p_{\mu}(K_m) =\{0\}$ for $n,m \in \nn^+$ and $n\neq m$ 
	then by Lemma~\ref{lem_density_inductive_limit_spaces} 
	$
	\tope{\fff}[+]=\bigcup_{n \in \nn^+} Z_n(\fff)$ is dense in $\bigcup_{n\in \nn^+} L^p_{\mu:n}(\rrd;\rrD)= L^p_{\mu,\text{strict}}(\rrd,\rrD)$ with respect to $\tau$.

	By Proposition~\ref{prop_fineLP_is_fine}, the topology on $L^p_{\mu:\infty}(\rrd,\rrD)$ is strictly finer than both $\tau_{L^p_{\mu,\operatorname{loc}}}$ and $\tau_{L^p_{\mu}}$ when restricted to the set $L^p_{\mu:\infty}(\rrd,\rrD)$.  Moreover, since $L^p_{\mu:\infty}(\rrd,\rrD)$ contains all compactly-supported simple functions then it forms a dense subset of $L^p_{\mu}(\rrd,\rrD)$.  
	Applying Lemma~\ref{lem_going_up} we see that $\tope{\fff}$ is dense in $L^p_{\mu}(\rrd,\rrD)$ with respect to $\tau_{\infty}\vee \tau_{L^p_{\mu}}$, the smallest topology containing $\tau_{\infty}\cup \tau_{L^p_{\mu}}$.  
	
Since $\tau_{L^p_{\mu,\operatorname{loc}}}$ restricted to $L^p_{\mu}(\rrd,\rrD)$ is coarser than $\tau_{L^p_{\mu}}$ it must be coarser than $\tau_{\infty}\vee \tau_{L^p_{\mu}}$.  	Since $L^p_{\mu}(\rrd,\rrD)$ is dense in $L^p_{\mu,\operatorname{loc}}(\rrd,\rrD)$ with respect to $\tau_{L^p_{\mu,\operatorname{loc}}}$ then applying Lemma~\ref{lem_going_up} again we find that $\tope{\fff}$ is dense in $(\tau_{\infty}\vee \tau_{L^p_{\mu}})\vee \tau_{L^p_{\mu,\operatorname{loc}}}$, the smallest topology containing 
\begin{equation}
(\tau_{\infty} \cup \tau_{L^p_{\mu}})\cup \tau_{L^p_{\mu,\operatorname{loc}}} 
= 
\tau_{\infty} \cup \tau_{L^p_{\mu}}\cup \tau_{L^p_{\mu,\operatorname{loc}}}
\label{eq_simplgy_thanks}
.
\end{equation}
Since the right-hand side of~\eqref{eq_simplgy_thanks} is precisely the definition of the topology on $L^p_{\mu,\text{strict}}(\rrd,\rrD)$ then the first conclusion follows; i.e. $\tope{\fff}$ is dense in $L^p_{\mu,\text{strict}}(\rrd,\rrD)$.  Since $\mu$ was an arbitrary $\sigma$-finite Borel measure on $\rrd$ which is absolutely continuous with respect to the Lebesgue measure on $\rrd$ then the conclusion held for all such measures.  Hence, $\tope{\fff}$ has the strict $L^p$-UAP.  
\end{proof}
\begin{proof}[Proof of {Theorem~\ref{thrm_Gain}}]
	For each $f \in \fff$, notice that by taking $\beta_0,\dots,\beta_n =1$ and $f_0,\dots,f_n=f$ in (5) implies that $\fff\subseteq \tope{\fff}$.  Therefore, for any topology $\tau$ on $X$, $\overline{\fff}\subseteq \overline{\tope{\fff}}$, where $\overline{A}$ denote the closure of a subset $A\subseteq X$ with respect to $\tau$.  This gives (i).  
	
	For (ii), fix $f \in \tope{\fff}-\fff$, the set of elements in $\tope{\fff}$ which are not in $\fff$, and set $\tau\triangleq \left\{
	X,\emptyset, \{f\}
	\right\}$.  Since the union and intersection of any pair of sets in $
	\tau$ is again in $\tau$ and since $\tau$ contains $X$ and $\emptyset$ then it is indeed a topology on $X$.  Since $\tau$ strictly contains $\{X,\emptyset\}$ then it is non-trivial.  Since $f \in X$ and $f \in \{f\}$ and by construction $f \in \tope{\fff}$ then $\tope{\fff}$ is dense in $X$ for $\tau$.  Since $f \not \in \fff$ then $\fff \cap \{f\}= \emptyset$.  Therefore, $\fff$ cannot be dense in $X$ with respect to $\tau$.  
\end{proof}
\begin{proof}[{Proof of Corollary~\ref{cor_upgraded_UAT}}]
Since, for every $\sigma$-finite Borel measure $\mu$ on $\rrd$ which is absolutely continuous with respect to the Lebesgue measure, $\tope{\fff}$ is dense in $L^p_{\mu,\text{fine}}(\rrd,\rrD)$ if $\fff$ has the local $L^p$-UAP then Proposition~\ref{prop_fineLP_is_fine} implies that $\tope{\fff}\cap L^p_{\mu,\infty}(\rrd,\rrD)$ is dense in $L^p_{\mu,\infty}(\rrd,\rrD)$ for the $\tau_{\mu,\infty}$ topology which is, by construction, at-least as fine as the norm-topology on $L^p_{\mu}(\rrd,\rrD)$.  Now, since $L^p_{\mu,\infty}(\rrd,\rrD)$ is dense in $L^p_{\mu}(\rrd,\rrD)$ for the norm topology and since density is transitive, then $\tope{\fff}\cap L^p_{\mu,\infty}(\rrd,\rrD)$ is dense in $L^p_{\mu}(\rrd,\rrD)$.  Since this argument, help independently of the choice of $\sigma$-finite Borel measure $\mu$ which was absolutely continuous with respect to the Lebesgue measure, then $\tope{\fff}$ has the global $L^p$-UAP.  
\end{proof}
\subsection{{Proof of Application Section~\ref{sss_applications}}}\label{s_proof_applications}
\begin{proof}[{Proof of Corollary~\ref{cor_ffNN_case}}]
	Let $\fff$ denote the set of all feedforward neural networks from $\rrd$ to $\rrD$ with one hidden layer and with activation function $\sigma$.  
	Since $\sigma$ is continuous, non-polynomial, and bounded then \citep[Proposition 1]{kidger2020universal} implies that $\fff$ is dense in $L^p_{\nu}(\rrd;\rrD)$ for every compactly-supported finite Borel measure $\nu$ on $\rrd$.  
	Thus, the result follows from Theorems~\ref{thrm_bagged_localization_LP} and~\ref{thrm_Gain}.  
\end{proof}
\begin{proof}[Proof of {Corollary~\ref{cor_cnn_case}}]
If $\nu$ is trivial then there is nothing to show.  Therefore, assume that $\nu$ is non-trivial.  
	Let $\nu$ be any finite measure on $(\rrd,\Sigma)$, where $\Sigma$ is the Borel $\sigma$-algebra on $\rrd$, and suppose that $\nu$ is supported on some non-empty compact subset $K\subset \rrd$.  We first show that $\operatorname{Conv}^{s}$ is dense in $L^p_{\nu}(\rrd)$. 
	For any $r>0$, let $B_{r}(0)$ denote the closed unit ball in $\rrd$ centered at $0$ of radius $r$ and let $1_{B_r(0)}$ denote the indicator function of that set.  
	
	Since every finite measure is a multiple of a probability measure then without loss of generality we may assume that $\nu$ is a Borel probability measure on $\rrd$.  Since $\rrd$ is a Polish space then by \citep[Theorem 13.6]{klenke2013probability} $\nu$ is a Radon measure.  By \citep[Theorem  3.14]{rudin2006real}, consequence of Lusin's theorem, this implies that the set of continuous compactly-supported functions from $\rrd$ to $\rr$ are dense in $L^p_{\nu}(\rrd)$.  Therefore, for every $f \in L_{\nu}^p(\rrd)$ and $\delta>0$ there exists some continuous and compactly supported $\tilde{f} \in C(\rrd,\rr)$ satisfying
	\begin{equation}
	\int_{x \in \rrd} |f(x)-\tilde{f}(x)|^p d\nu(x) < \frac{\delta^p}{2^p}
	\label{eq_LP_density_bound1_approx_by_Cc_functions}
	.
	\end{equation}
	Since $\nu$ is a non-trivial finite measure then $0< \nu(\rrd)<\infty$ and $0<\sqrt[p]{\frac{\delta^p}{2^p(1 + \nu(\rrd))}}<\infty$.  
	In particular, $1_{\rrd}$ is $\mu$-integrable since 
	$\int_{x \in \rrd} 1_{\rrd}(x)d\mu(x) = \nu(\rrd)<\infty$.   
	
	Since $\tilde{f}$, in~\eqref{eq_LP_density_bound1_approx_by_Cc_functions}, is compactly supported then there exists a compact subset $K\subset \rrd$ satisfying $\operatorname{supp}(f)\subseteq K$.  By \citep[Theorem 1]{UniversalDeepConv} $\operatorname{Conv}^{s}$ is dense in $C(\rrd,\rr)$ for the topology of uniform convergence on compacts and since $\tilde{f}$ is compactly supported then there exists some $f_{\delta} \in \operatorname{Conv}^s$ such that
	\begin{equation}
	\sup_{x \in \rrd} \left|
	f_{\delta}(x)
	- \tilde{f}(x)
	\right|
	< 
	\frac{\delta^p}{2^p(1 + \nu(\rrd))}
	\label{eq_LP_density_bound2_convolutional_universality_bound_sup_form}
	.
	\end{equation}
	Note that the right-hand side of~\eqref{eq_LP_density_bound2_convolutional_universality_bound_sup_form} is bounded above by $\frac{\delta^p}{2^p}$.  Thus,~\eqref{eq_LP_density_bound2_convolutional_universality_bound_sup_form} implies that
	\begin{equation}
	\begin{aligned}
	\sqrt[p]{\int_{x \in \rrd} \left|
	f_{\delta}(x)
	- \tilde{f}(x)
	\right|^p d\nu(x)
	}
	\leq &
	\sqrt[p]{\frac{\delta^p}{2^p(1 + \nu(\rrd))} \int_{x \in \rrd} 1_{\rrd}(x) d\nu(x)
	}
	\\
	< &
	\sqrt[p]{
	\frac{\delta^p \nu(\rrd)}{2^p(1 + \nu(\rrd))}
	}
	\\<& \frac{\delta}{2}
	.
	\end{aligned}
	\label{eq_LP_density_bound3_convolutional_universality_bound}
	\end{equation}
	Combining~\eqref{eq_LP_density_bound1_approx_by_Cc_functions} and~\eqref{eq_LP_density_bound3_convolutional_universality_bound} with the triangle inequality yields
	$$
	\begin{aligned}
	\sqrt[p]{\int_{x \in \rrd}\left|
	    f(x) - f_{\delta}(x) 
	\right|^p
	d\mu(x)}
	\leq &
	\sqrt[p]{\int_{x \in \rrd}\left|
	    f(x) - \tilde{f}(x) 
	\right|^p
	d\mu(x)}
	+
	\sqrt[p]{\int_{x \in \rrd}\left|
	    \tilde{f}(x) - f_{\delta}(x) 
	\right|^p
	d\mu(x)}\\
	< & 
	\delta
	.
	\end{aligned}
	$$
	Therefore, $\operatorname{Conv}^s$ is dense in $L^p_{\nu}(\rrd)$ for every $p \in [1,\infty)$ and every finite compactly supported Borel measure $\nu$ on $\rrd$ which is absolutely continuous with respect to the Lebesgue measure on $\rrd$.  Therefore, Theorems~\ref{thrm_bagged_localization_LP} and~\ref{thrm_Gain} apply, hence, the conclusion follows.  
	\end{proof}
	\begin{proof}[{Proof of Corollary~\ref{cor_polynomials_dense}}]
    The result directly follows from Theorem~\ref{thrm_bagged_localization_LP} and Theorem~\ref{thrm_non_stone_weirestrass}.  
    \end{proof}
    \begin{proof}[{Proof of Corollary~\ref{cor_polynomials_dense_gen}}]
    The result directly follows from Corollary~\ref{cor_polynomials_dense_gen} since polynomials are analytic.  
    \end{proof}
\end{appendices}

\bibliographystyle{abbrvnat}
\bibliography{Refs}

\begin{thebibliography}{40}
\providecommand{\natexlab}[1]{#1}
\providecommand{\url}[1]{\texttt{#1}}
\expandafter\ifx\csname urlstyle\endcsname\relax
  \providecommand{\doi}[1]{doi: #1}\else
  \providecommand{\doi}{doi: \begingroup \urlstyle{rm}\Url}\fi

\bibitem[Bourbaki(1971)]{BourbakiTopGen}
N.~Bourbaki.
\newblock \emph{\'{E}l\'{e}ments de math\'{e}matique. {T}opologie
  g\'{e}n\'{e}rale. {C}hapitres 1 \`a 4}.
\newblock Hermann, Paris, 1971.

\bibitem[Bourbaki(1981)]{bourbaki2013topological}
N.~Bourbaki.
\newblock \emph{Espaces vectoriels topologiques. {C}hapitres 1 \`a 5}.
\newblock Masson, Paris, new edition, 1981.
\newblock ISBN 2-225-68410-3.
\newblock \'{E}l\'{e}ments de math\'{e}matique. [Elements of mathematics].

\bibitem[Buck et~al.(1958)]{buck1958boundedStrictTopologyCB_OGPaper}
R.~C. Buck et~al.
\newblock Bounded continuous functions on a locally compact space.
\newblock \emph{The Michigan Mathematical Journal}, 5\penalty0 (2):\penalty0
  95--104, 1958.

\bibitem[Conway(1990)]{conway2013course}
J.~B. Conway.
\newblock \emph{A course in functional analysis}, volume~96 of \emph{Graduate
  Texts in Mathematics}.
\newblock Springer-Verlag, New York, second edition, 1990.

\bibitem[Cybenko(1989)]{Cybenko}
G.~Cybenko.
\newblock Approximation by superpositions of a sigmoidal function.
\newblock \emph{Math. Control Signals Systems}, 2\penalty0 (4):\penalty0
  303--314, 1989.
\newblock ISSN 0932-4194.

\bibitem[De~Jong et~al.(2020)]{stacks-project}
A.~J. De~Jong et~al.
\newblock The stacks project, 2020.
\newblock URL \url{{http://stacks.math.columbia.edu}}.

\bibitem[Dieudonn\'{e} and Schwartz(1949)]{dieudonne1949dualite}
J.~Dieudonn\'{e} and L.~Schwartz.
\newblock La dualit\'{e} dans les espaces {F} et {LF}.
\newblock \emph{Ann. Inst. Fourier (Grenoble)}, 1:\penalty0 61--101 (1950),
  1949.

\bibitem[Dom\'{\i}nguez~Benavides(1992)]{benavides1992weak}
T.~Dom\'{\i}nguez~Benavides.
\newblock Weak uniform normal structure in direct sum spaces.
\newblock \emph{Studia Math.}, 103\penalty0 (3):\penalty0 283--290, 1992.

\bibitem[Galindo and
  Sanchis(2004)]{FalindoSanchis2004StoneWeirestrassTheoremsForGroupValued}
J.~Galindo and M.~Sanchis.
\newblock Stone-{W}eierstrass theorems for group-valued functions.
\newblock \emph{Israel J. Math.}, 141:\penalty0 341--354, 2004.
\newblock ISSN 0021-2172.
\newblock \doi{10.1007/BF02772227}.
\newblock URL \url{https://doi.org/10.1007/BF02772227}.

\bibitem[Glicksberg(1963)]{MR146645StrcitTopologyCbOLPapers}
I.~Glicksberg.
\newblock Bishop's generalized {S}tone-{W}eierstrass theorem for the strict
  topology.
\newblock \emph{Proc. Amer. Math. Soc.}, 14:\penalty0 329--333, 1963.
\newblock ISSN 0002-9939.
\newblock \doi{10.2307/2034636}.
\newblock URL \url{https://doi.org/10.2307/2034636}.

\bibitem[Heinonen(2001)]{heinonen2001lectures}
J.~Heinonen.
\newblock Lectures on analysis on metric spaces.
\newblock pages x+140, 2001.

\bibitem[Hornik(1991)]{hornik1991approximation}
K.~Hornik.
\newblock Approximation capabilities of multilayer feedforward networks.
\newblock \emph{Neural Networks}, 4\penalty0 (2):\penalty0 251 -- 257, 1991.

\bibitem[Hornik et~al.(1990)Hornik, Stinchcombe, and
  White]{hornik1990universal}
K.~Hornik, M.~Stinchcombe, and H.~White.
\newblock Universal approximation of an unknown mapping and its derivatives
  using multilayer feedforward networks.
\newblock \emph{Neural Netw.}, 3\penalty0 (5):\penalty0 551–560, Oct. 1990.

\bibitem[Jarchow(1981)]{JarchowLCSs}
H.~Jarchow.
\newblock \emph{Locally convex spaces}.
\newblock B. G. Teubner, Stuttgart, 1981.
\newblock Mathematische Leitf\"{a}den. [Mathematical Textbooks].

\bibitem[Kidger and Lyons(2020)]{kidger2020universal}
P.~Kidger and T.~Lyons.
\newblock {Universal Approximation with Deep Narrow Networks}.
\newblock In J.~Abernethy and S.~Agarwal, editors, \emph{Proceedings of Machine
  Learning Research}, volume 125, pages 2306--2327. PMLR, 09--12 Jul 2020.

\bibitem[Klenke(2014)]{klenke2013probability}
A.~Klenke.
\newblock \emph{Probability theory}.
\newblock Universitext. Springer, London, second edition, 2014.
\newblock A comprehensive course.

\bibitem[Kratsios and Hyndman(2021)]{kratsios2021neu}
A.~Kratsios and C.~Hyndman.
\newblock Neu: A meta-algorithm for universal uap-invariant feature
  representation.
\newblock \emph{Journal of Machine Learning Research}, 22\penalty0
  (92):\penalty0 1--51, 2021.
\newblock URL \url{http://jmlr.org/papers/v22/18-803.html}.

\bibitem[Kratsios and Zamanlooy(2021)]{Zamanlooykratsios2021learning}
A.~Kratsios and B.~Zamanlooy.
\newblock Learning sub-patterns in piecewise continuous functions, 2021.

\bibitem[{Kratsios Anastasis}(2021)]{kratsios2019UATs}
{Kratsios Anastasis}.
\newblock The universal approximation property.
\newblock \emph{Annals of Mathematics and Artificial Intelligence}, 01 2021.
\newblock ISSN 1573-7470.
\newblock \doi{https://doi.org/10.1007/s10472-020-09723-1
  10.1007/s10472-020-09723-1}.

\bibitem[Lapedes and Farber(1987)]{lapedes1987nonlinear}
A.~Lapedes and R.~Farber.
\newblock Nonlinear signal processing using neural networks: Prediction and
  system modelling.
\newblock \emph{IEEE international conference on neural networks}, 1987.

\bibitem[Lee and Naor(2005)]{NaorLee2005RandomPartition}
J.~R. Lee and A.~Naor.
\newblock Extending {L}ipschitz functions via random metric partitions.
\newblock \emph{Invent. Math.}, 160\penalty0 (1):\penalty0 59--95, 2005.
\newblock ISSN 0020-9910.
\newblock \doi{10.1007/s00222-004-0400-5}.
\newblock URL \url{https://doi.org/10.1007/s00222-004-0400-5}.

\bibitem[Lu et~al.(2017)Lu, Pu, Wang, Hu, and Wang]{lu2017expressive}
Z.~Lu, H.~Pu, F.~Wang, Z.~Hu, and L.~Wang.
\newblock The expressive power of neural networks: A view from the width.
\newblock In \emph{Proceedings of the 31st International Conference on Neural
  Information Processing Systems}, NIPS’17, page 6232–6240, Red Hook, NY,
  USA, 2017. Curran Associates Inc.

\bibitem[Lunardi(2018)]{LunardiInterpolationTheory}
A.~Lunardi.
\newblock \emph{Interpolation theory}, volume~16 of \emph{Appunti. Scuola
  Normale Superiore di Pisa (Nuova Serie) [Lecture Notes. Scuola Normale
  Superiore di Pisa (New Series)]}.
\newblock Edizioni della Normale, Pisa, 2018.
\newblock Third edition [of MR2523200].

\bibitem[Maghsoudi and Nasr-Isfahani(2011)]{MR2784782L1LCSCategory}
S.~Maghsoudi and R.~Nasr-Isfahani.
\newblock The strict topology on the discrete {L}ebesgue spaces.
\newblock \emph{Bull. Aust. Math. Soc.}, 83\penalty0 (2):\penalty0 241--255,
  2011.
\newblock ISSN 0004-9727.
\newblock \doi{10.1017/S0004972710001899}.
\newblock URL \url{https://doi.org/10.1017/S0004972710001899}.

\bibitem[McCulloch and Pitts(1943)]{mcculloch1943logical}
W.~S. McCulloch and W.~Pitts.
\newblock A logical calculus of the ideas immanent in nervous activity.
\newblock \emph{Bull. Math. Biophys.}, 5:\penalty0 115--133, 1943.
\newblock ISSN 0007-4985.
\newblock \doi{10.1007/bf02478259}.
\newblock URL \url{https://doi.org/10.1007/bf02478259}.

\bibitem[Micchelli et~al.(2006)Micchelli, Xu, and
  Zhang]{MicchelliUniversalKernels}
C.~A. Micchelli, Y.~Xu, and H.~Zhang.
\newblock Universal kernels.
\newblock \emph{J. Mach. Learn. Res.}, 7:\penalty0 2651--2667, 2006.

\bibitem[Moore et~al.(2019)Moore, Lyons, Gallacher, Initiative,
  et~al.]{moore2019using}
P.~Moore, T.~Lyons, J.~Gallacher, A.~D.~N. Initiative, et~al.
\newblock Using path signatures to predict a diagnosis of alzheimer’s
  disease.
\newblock \emph{PloS one}, 14\penalty0 (9), 2019.

\bibitem[Nagata(1974)]{NagataTopologyGen}
J.-i. Nagata.
\newblock \emph{Modern general topology}.
\newblock North-Holland Publishing Co., Amsterdam-London; Wolters-Noordhoff
  Publishing, Groningen; American Elsevier Publishing Co., New York, revised
  edition, 1974.
\newblock Bibliotheca Mathematica, Vol. VII.

\bibitem[Narayanaswami and
  Saxon(1986)]{narayanaswami1986spacesLBSpacesNeverMetrizable}
P.~P. Narayanaswami and S.~A. Saxon.
\newblock ({LF})-spaces, quasi-{B}aire spaces and the strongest locally convex
  topology.
\newblock \emph{Math. Ann.}, 274\penalty0 (4):\penalty0 627--641, 1986.

\bibitem[Nowak(2017)]{MR3624573StrictTopCb}
M.~Nowak.
\newblock Completely continuous operators and the strict topology.
\newblock \emph{Indag. Math. (N.S.)}, 28\penalty0 (2):\penalty0 541--555, 2017.
\newblock ISSN 0019-3577.
\newblock \doi{10.1016/j.indag.2017.01.002}.
\newblock URL \url{https://doi.org/10.1016/j.indag.2017.01.002}.

\bibitem[Nowak and Stochmal(2021)]{MR4217072StrictTopCb}
M.~Nowak and J.~Stochmal.
\newblock Characterizations of continuous operators on {$C_b(X)$} with the
  strict topology.
\newblock \emph{Ann. Funct. Anal.}, 12\penalty0 (2):\penalty0 Paper No. 28, 26,
  2021.
\newblock ISSN 2639-7390.
\newblock \doi{10.1007/s43034-021-00112-1}.
\newblock URL \url{https://doi.org/10.1007/s43034-021-00112-1}.

\bibitem[Osborne(2014)]{OsborneLCSs2014}
M.~S. Osborne.
\newblock \emph{Locally convex spaces}, volume 269 of \emph{Graduate Texts in
  Mathematics}.
\newblock Springer, Cham, 2014.

\bibitem[Park et~al.(2021)Park, Yun, Lee, and Shin]{park2020minimum}
S.~Park, C.~Yun, J.~Lee, and J.~Shin.
\newblock Minimum width for universal approximation.
\newblock \emph{ICLR}, 2021.

\bibitem[Pinkus(1999)]{pinkus1999}
A.~Pinkus.
\newblock Approximation theory of the mlp model in neural networks.
\newblock \emph{Acta Numerica}, 8:\penalty0 143–195, 1999.
\newblock \doi{10.1017/S0962492900002919}.

\bibitem[Prolla(1994)]{1994ProllaStoneWeirestrassTheorems}
J.~a.~B. Prolla.
\newblock On the {W}eierstrass-{S}tone theorem.
\newblock \emph{J. Approx. Theory}, 78\penalty0 (3):\penalty0 299--313, 1994.
\newblock ISSN 0021-9045.
\newblock \doi{10.1006/jath.1994.1080}.
\newblock URL \url{https://doi.org/10.1006/jath.1994.1080}.

\bibitem[Rudin(1987)]{rudin2006real}
W.~Rudin.
\newblock \emph{Real and complex analysis}.
\newblock McGraw-Hill Book Co., New York, third edition, 1987.

\bibitem[Simonyan and Zisserman(2014)]{simonyan2014very}
K.~Simonyan and A.~Zisserman.
\newblock Very deep convolutional networks for large-scale image recognition.
\newblock \emph{arXiv preprint arXiv:1409.1556}, 2014.

\bibitem[Spanier(1995)]{SpanierTop1995}
E.~H. Spanier.
\newblock \emph{Algebraic topology}.
\newblock Springer-Verlag, New York, 1995.
\newblock Corrected reprint of the 1966 original.

\bibitem[Timofte et~al.(2018)Timofte, Timofte, and
  Khan]{Timofte2Liaqad2018JMathAnalApplStoneWeirestrassTheorems}
V.~Timofte, A.~Timofte, and L.~A. Khan.
\newblock Stone-{W}eierstrass and extension theorems in the nonlocally convex
  case.
\newblock \emph{J. Math. Anal. Appl.}, 462\penalty0 (2):\penalty0 1536--1554,
  2018.
\newblock ISSN 0022-247X.
\newblock \doi{10.1016/j.jmaa.2018.02.056}.
\newblock URL \url{https://doi.org/10.1016/j.jmaa.2018.02.056}.

\bibitem[Zhou(2020)]{UniversalDeepConv}
D.-X. Zhou.
\newblock Universality of deep convolutional neural networks.
\newblock \emph{Appl. Comput. Harmon. Anal.}, 48\penalty0 (2):\penalty0
  787--794, 2020.

\end{thebibliography}

\end{document}